\def\eqref#1{equation~\ref{#1}}
\def\Eqref#1{Equation~\ref{#1}}
\def\1{\bm{1}}
\def\eps{{\epsilon}}
\def\ro{{\textnormal{o}}}
\def\vf{{\bm{f}}}
\def\vg{{\bm{g}}}
\def\vh{{\bm{h}}}
\def\vx{{\bm{x}}}
\def\vy{{\bm{y}}}
\def\mA{{\bm{A}}}
\def\mB{{\bm{B}}}
\def\mD{{\bm{D}}}
\def\mE{{\bm{E}}}
\def\mL{{\bm{L}}}
\def\mW{{\bm{W}}}
\DeclareMathAlphabet{\mathsfit}{\encodingdefault}{\sfdefault}{m}{sl}
\SetMathAlphabet{\mathsfit}{bold}{\encodingdefault}{\sfdefault}{bx}{n}
\def\gF{{\mathcal{F}}}
\def\gS{{\mathcal{S}}}
\newcommand{\R}{\mathbb{R}}
\DeclareMathOperator*{\argmin}{arg\,min}
\DeclareMathOperator{\Tr}{Tr}
\newtheorem{theorem}{Theorem}
\newtheorem{lemma}{Lemma}
\newtheorem{corollary}{Corollary}
\newtheorem{proposition}{Proposition}
\renewenvironment{proof}{\textit{Proof: }}{\hfill$\square$}
\newcommand{\pa}[1]{\left(#1\right)}
\newcommand{\br}[1]{\left[#1\right]}
\renewcommand{\ro}{\partial}
\newcommand{\ba}[1]{\overline{#1}}
\newcommand{\del}{\nabla}
\newcommand{\Z}{\mathbb{Z}}
\renewcommand{\Tr}[1]{\mathrm{Tr}\left[#1\right]}
\newcommand{\mat}[1]{\begin{pmatrix}#1 \end{pmatrix} }
\newcommand{\norm}[1]{\left\|#1\right\|}
\newcommand{\C}{\mathbb{C}}
\renewcommand{\L}{\mathcal{L}}
\newcommand{\out}[1]{}
\newcommand{\outNim}[1]{}
\newcommand{\ND}[1]{{\color{orange}[ND: #1]}}
\newcommand{\ry}[1]{{\color{magenta}[RY: #1]}}
\newcommand{\edits}[1]{{\color{blue}#1}}
\newcommand{\nd}{\ND}
\title{Automatic Symmetry Discovery with Lie Algebra Convolutional Network}
\author{%
  Nima Dehmamy\\
%   CSSI, Kellogg Sch. Mgmt.,
  Northwestern University\\%, Evanston, IL, USA\\
  \texttt{nimadt@bu.edu} \\
  % examples of more authors
   \And
  Robin Walters \\
  Northeastern University \\%, Boston, MA, USA \\
  % Address \\
  \texttt{rwalters@northeastern.edu}
   \AND
  Yanchen Liu \\
  Northeastern University \\%, Boston, MA, USA \\
  % Address \\
  \texttt{liu.yanc@northeastern.edu}
   \And
  Dashun Wang \\
  Northwestern University \\ %,
  \texttt{dashun.wang@kellogg.northwestern.edu}
   \And
  Rose Yu \\
  University of California San Diego \\
  \texttt{roseyu@ucsd.edu} \\
  % Address \\
  % \texttt{email} \\
  % \And
  % Coauthor \\
  % Affiliation \\
  % Address \\
  % \texttt{email} \\
}
\begin{document}

\maketitle

\begin{abstract}
    %\ry{identify the issue first}\nd{done}
    Existing equivariant neural networks  require  prior knowledge of the symmetry group and % for continuous groups
    discretization for continuous groups. %  or group representations
    % All these approaches require detailed knowledge of the group parametrization and cannot learn entirely new symmetries.
    % We show that by working
    We propose to work
    with  Lie algebras (infinitesimal generators) instead of  Lie groups.
    Our model, the Lie algebra convolutional network (L-conv) %, is based on infinitesimal generators of continuous groups and
    can automatically discover symmetries and
    does not require discretization of the group.
    We show that
    L-conv can serve as a building block to construct \textit{any} group equivariant feedforward architecture.
    % It can potentially learn new symmetries as well.
    % We show that L-conv can approximate any group convolutional layer by composition of layers.
    Both CNNs and Graph Convolutional Networks
    % and fully-connected networks
    can be expressed as L-conv with appropriate groups.
    We  discover direct connections between L-conv and physics: (1) group invariant loss generalizes field theory (2)  Euler-Lagrange equation measures the robustness, and (3) equivariance leads to conservation laws and Noether current.
    These connections open up new avenues for designing more general equivariant networks and applying them to important problems in physical sciences.
    \footnote{Code: \href{https://github.com/nimadehmamy/L-conv-code}{github.com/nimadehmamy/L-conv-code}
    }
    \out{
    L-conv also reveals striking connections between learning and physics:
    the loss for a single L-conv layer and find a deep relation with Lagrangians used in physics, with some of the physics aiding in defining generalization and symmetries in the loss landscape.
    % We find that generalization and symmetries n the loss landscape
    Conversely, L-conv could be used to propose more general equivariant ans\"atze for scientific machine learning.}
    %%%%
    % We propose to learn the symmetries during the training of the group equivariant architectures.
    % Additionally, by allowing the infinitesimal generators to be learnable, L-conv.
    % We also show how the symmetries are related to the statistics of the dataset in linear settings.
    % We find an analytical relationship between the symmetry group and a subgroup of an orthogonal group preserving the covariance of the input.
    % Our experiments show that L-conv with trainable generators performs well on problems with hidden symmetries.
    % Due to parameter sharing, L-conv also uses far fewer parameters than fully-connected layers.
\end{abstract}

\out{
\nd{
Todo:
\begin{enumerate}
    % \item known symmetries
    % \item clean up theory
    \item multilayer experiment
    \item closed under commutation (options: 1) doesn't matter experimentally; 2) take learned $L_i$ and express in terms of actual generators; 3) regularizers for small vectors)
    \item learn Hamiltonian dynamics for simple harmonic oscillator
\end{enumerate}

If the architecture could learn conserved quantities it would be very interesting.
}
}%%%

 \section{Introduction}
% Many learning tasks involve data which is equivariant under certain transformations, usually constituting a symmetry group. 
% Equivariant transformations keep the underlying distribution of data invariant \cite{bloem2019probabilistic}. 
% We will use the terms symmetry and equivariant transformations interchangeably.
% In cases such as natural images, most humans are capable of processing and identifying symmetries such as shifts, rotations, and scaling, e.g. 2D Euclidean symmetries $E(2)$. \ry{very verbose, what is the main point?}
% \ND{
% \begin{enumerate}
%     \item Parameters: FC with similar  
%     \item Multi layer
%     \item check equivariance 
%     \item L visualization 
%     \item Experiments 
%     \item prop 2 experiment
% \end{enumerate}
% }

% \short{
% Many machine learning (ML) tasks involve data from unfamiliar domains, which may or may not have hidden symmetries. 
% }%%%% \citep{cohen2016group,bogatskiy2020lorentz,maron2020learning,bronstein2021geometric}

% \nd{Todo: 1) compare with group discretization; 2) $W^0\to$MLP; 3) real-world use; 4) Image test fig }
Incorporating symmetries into a deep learning  architecture  can reduce sample complexity, improve generalization, while significantly decreasing the number of model parameters 
%via parameter sharing 
\citep{cohen2019general, cohen2016steerable,ravanbakhsh2017equivariance,ravanbakhsh2020universal,wang2020incorporating}.
% There is great interest in ML in designing new architectures that follow the principle of equivariance to symmetry transformations 
For instance, Convolutional Neural Networks (CNN) \citep{lecun1989backpropagation,lecun1998gradient} implement translation symmetry through weight sharing. 
General principles for constructing symmetry-aware group equivariant neural networks were introduced in \citet{cohen2016steerable}, \citet{kondor2018generalization}, and \citet{cohen2019general}.

% For continuous groups, each with a different parametrization, 

However, most work on equivariant  networks requires knowing the symmetry group \textit{a priori}. A different equivariant model needs to be re-designed for  each symmetry group.  In practice, we may  not have a good inductive bias and such knowledge of the symmetries may not be available. Constructing and selecting the equivariant network with the appropriate symmetry group becomes  quite tedious. 
Furthermore,
many existing works are limited to \textit{finite groups} such as permutations \cite{hartford2018deep,ravanbakhsh2017equivariance,zaheer2017deep}, $90$ degree rotations
\cite{cohen2018spherical} or dihedral groups $\mathrm{D}_N$ $E(2)$  \cite{weiler2019general}.

For a continuous group, 
existing approaches  either discretize the group \cite{weiler20183d,weiler2018learning,cohen2016group},
or use a truncated sum over irreducible representations (irreps) \cite{weiler2019general,weiler20183d} via spherical harmonics in \citet{worrall2017harmonic} or more general Clebsch-Gordon coefficients \citet{kondor2018clebsch,bogatskiy2020lorentz}.
These approaches are prone to approximation error. 
Recently, \citet{finzi2020generalizing} propose to approximates the integral over the Lie group by Monte Carlo sampling.
This approach  % It requires ``lifting'' inputs from a vector to an element in the group and 
requires implementing the matrix exponential  and  obtaining a local neighborhood for each point. Both parametrizing Lie groups for sampling and finding irreps are computationally expensive. 
% The logarithm needs to be implemented for each group. 
% While for low-dimensional groups such as $SO(2)$ or $SO(3)$ there are closed forms, for higher dimensions numerical methods need to be used, further complicating the architecture.
\citet{finzi2021practical} provide a general algorithm for constructing equivariant multi-layer perceptrons (MLP),
% Still, their method 
but require explicit knowledge of the group to encode its irreps, and solving a set of constraints.

% \textbf{Symmetry Discovery Literature}
% In addition to simplifying the construction of equivariant architectures, our method can also learn the symmetry generators from data. 

% \textbf{Contributions}
% Existing approaches require manual design of the  group representations or elements. 
% For continuous groups, 
% Each Lie group has a different parametrization and this task is quite tedious. 
% this often means the model needs to be re-designed for each group.
% Our method bypasses the integral over the group manifold.  that can automatically discover symmetries from data
We provide a novel framework for designing equivariant neural networks.
We leverage the fact that Lie groups can be constructed from a set of infinitesimal generators, called Lie algebras.
A Lie algebra has a finite basis, assuming the group is finite-dimensional.
% The Lie algebra usually has a finite basis 
% (a notable exception being Kac-Moody Lie algebras for 2D Conformal Field Theories \citep{belavin1984infinite} in physics).
Working with the Lie algebra basis allows us to encode an infinite group without discretizing or summing over irreps. 
Additionally, all Lie algebras have the same general structure and hence can be implemented the same way. 
% Additionally, for most symmetries we can expect the number of Lie algebra basis elements to be small and we could make them learnable parameters.  
% Hence, our architecture
% , which generalizes a group convolutional layer, 
% is potentially capable of learning symmetries in data without imposing inductive biases. 
%
We propose {Lie Algebra Convolutional Network} (\textbf{L-conv}), a novel architecture that can automatically discover symmetries from data.
% \nd{Rao 1999 showed simple versions of L-conv can learn the $L_i$. }
% Our work uses the Lie algebra (the linearization of the group near its identity) of continuous groups. 
Our main contributions can be summarized as follows:
\begin{itemize}
    \item We propose the Lie algebra convolutional network (\textbf{L-conv}), a building block for constructing group equivariant neural networks.
    \item We prove that multi-layer L-conv can approximate group convolutional layers, including CNNs,
    % on connected Lie groups.
    % can be approximated by multi-layer L-conv, 
    % and that
    % Fully-connected, 
    % \item 
    and find graph convolutional networks to be a special case of L-conv.
    % \item L-conv outperforms CNN on rotated and scrambled images in models with single hidden layer, suggesting it performs well in domains with unknown symmetries.
    \item We can learn the Lie algebra basis in L-conv, enabling automatic symmetry discovery. %, meaning it can potentially
    % to automatically discover symmetries.
    \item L-conv also reveals interesting connections between physics and learning: equivariant loss generalizes important Lagrangians in field theory;  robustness and equivariance  can be expressed as Euler-Lagrange equations and Noether currents. 
    % \nd{show experiments}
    % , and L-conv outperforms CNN on domains with hidden symmetries, such rotated and scrambled images. 
    % \item \nd{build Lie algebra neural net? } 
    % \item For linear regression, we derive analytical relations between symmetries in orthogonal groups preserving covariance of data. % and 
    % \item We 
    % devise a methodology for automatic extraction of these symmetries. % of generators of the symmetry group.
\end{itemize}

Learning symmetries from data has been studied in limited settings for commutative Lie groups as in \citet{cohen2014learning}, 2D rotations and translations in 
\citet{rao1999learning}, \citet{sohl2010unsupervised} or permutations \citep{anselmi2019symmetry}. 
% Perhaps the closest to our work, in spirit, is 
\citep{zhou2020meta} popose a general method for symmetry discovery. 
% For direct symmetry discovery, 
% \citep{zhou2020meta} uses meta-learning.  
Yet, their weight-sharing scheme and the symmetry generators are very different from ours.
% In our approach the learned generators are directly interpretable as the Lie algebra basis.
Our approach  use much fewer parameters and has a direct interpretation using Lie algebras (SI \ref{ap:symm-disc-lit}).
\citet{benton2020learning} propose Augerino to learn a distribution over data augmentations. 
It also involves Lie algebras, but is restricted to
% Augerino uses data augmentation to transform the input data, which means it is restricting the group to be  
a subgroup of 2D affine transformations and requires matrix logarithm and sampling (SI \ref{ap:symm-disc-lit}). In contrast, our approach is simpler and more general.

% 
% In fact, we show that L-conv can be implemented as a set of graph convolutional networks, with the Lie algebra generators replacing the graph adjacency matrix. 
% Learning symmetries inherently requires more parameters than a pre-defined symmetry. 
% Thus, in problems where the desired symmetry group is known, this knowledge should be used as  inductive bias to directly encode the symmetry (into L-conv or other architectures). 
% Thus symmetry discovery capabilities of L-conv are particularly useful in cases where the symmetries are \textit{unknown} and where we do not have good inductive bias. 

% \section{Related Work}
% \input{secs2/related}

\section{Background}

We review the core concepts %our method 
L-conv
builds upon: equivariance, group convolution and Lie algebras.

\textbf{Notations.}
% For brevity, we will use the Einstein summation convention, where a repeated upper and lower index is summed over, meaning $A^a B_a =\sum_a A^a B_a = A\cdot B $. 
% We will occasionally keep explicit summation $\sum_i$ for clarity. 
% Following Einstein notation, 
Unless explicitly stated, $a$ in $A^a$ is an index, not an exponent.
% For brevity, we will sometimes 
We use the Einstein summation $A^a B_{ab} =\sum_a A^a B_{ab} = [A B]_b $, where a repeated upper and lower index are summed. %: $A^a B_{ab} =\sum_a A^a B_{ab} = [A B]_b $. 
% Matrix product simplifies to $A\cdot B \equiv  A^a B_a$.
\out{
For a linear transformation $A:\R^{d_1} \to \R^{d_2}$ acting on the spatial index or the channel index, we will use one upper and one lower index as in $A^\mu_\nu h_\mu = [A\cdot h]_\nu $. 
We will use $(a,b,c)$ for channels, and $(\mu,\nu,\rho)$ for spatial, and $(i,j,k)$ for Lie algebra basis indices. }
% Indices for the bilinear operators will be similar, like $(i,j,k)$, or $(a,b,c)$ or $(\mu,\nu,\rho)$, as in $A_{i,\mu}^\nu= [A_i]^\nu_\mu$. 

\textbf{Equivariance.}
Let $\mathcal{S}$ be a topological space on which a Lie group $G$ (continuous group) acts from the left, meaning for all $\vx \in \mathcal{S}$ and $g\in G$, $g\vx \in \mathcal{S}$. 
We refer to $\mathcal{S}$ as the base space. 
Let $\mathcal{F}$, the ``feature space'', 
%We will assume the  $\mathcal{F}$ is a 
be the vector space $\mathcal{F} = \R^m$. % over a field $K$ (such as $\R$ or $\C$).
% , which is a vector space on which $G$ acts through a representation $\pi: G \to \mathcal{F}\times \mathcal{F}$, meaning for $y \in \mathcal{F}$ and $g\in G$, $\rho[g]y\in \mathcal{F}$. 
Each data point is a feature map $f:\mathcal{S}\to \mathcal{F}$.
% , or $f\in \mathcal{F}[\mathcal{S}]$, denoting the set of function from $\mathcal{S}\to \mathcal{F}$ by $\mathcal{F}[\mathcal{S}]$. 
% \ry{include the first half of the fig 1 in here with wrapfig and explain the two concepts (1) lift (2) group conv}
% For instance, in images $f(\vx)\in \mathcal{F}$ are the colors at pixel $\vx\in \mathcal{S}$.
The action of $G$ on the input of $f$ induces an action on feature maps.
For ``scalar'' features, for $u \in G$, the transformed features $u\cdot f$ are given by
\begin{align}
    u\cdot f(\vx) &= %\rho[u]
    f(u^{-1}\vx). % & u \in G
    % \label{eq:equivariance-steerable}
    \label{eq:f-G-action}
\end{align}
Denote the space of all functions from $\mathcal{S}$ to $\mathcal{F}$ by $\mathcal{F}^\mathcal{S}$, so that $f\in\mathcal{F}^\mathcal{S}$. 
Let $F$ be a mapping to a new feature space $\mathcal{F}' = \R^{m'}$, meaning  $F:\mathcal{F}^\mathcal{S} \to {\mathcal{F}'}^\mathcal{S}$. 
% mapping taking features $f:\mathcal{S} \to \mathcal{F}$ as input and returning transformed features $F(f):\mathcal{S} \to \mathcal{F}'$.  
We say $F$ is \textit{equivariant} under $G$ if $G$ acts on $\mathcal{F}'$ and for $u\in G$, we have
\begin{align}
    u\cdot (F(f)) &= F(u\cdot f).
    \label{eq:equivairance-general}
\end{align}
\textbf{Group Convolution.}
\citet{kondor2018generalization} showed that $F$ is a linear equivariant map if and only if it performs a group convolution (G-conv).
To define G-conv, we first lift $\vx$ to elements in $G$ \citep{kondor2018generalization}.
% This requires that the map $\mathrm{Lift}:\mathcal{S}\to G$ be injective. 
% This puts the condition on $G$ that $\forall \vx,\vy \in \mathcal{S}$, $\exists g\in G$ such that $\vy = g\vx$ (e.g. $G$ can be 2D translation on images). 
% For example, when $\mathcal{S} = \R$, the group of 1D translations (i.e. addition by a real number $G=(\R,+)$) satisfies this condition.
Specifically, we pick an origin $\vx_0 \in \mathcal{S}$ and replace each point $\vx = g\vx_0$ by $g$. 
% With a slight abuse of notation, 
We will often drop $\vx_0$ for brevity and write
% The feature maps then become 
$f(g)\equiv f(g\vx_0)$. 
% Lift $x\to g\in G$ and let $dv\equiv d\mu(v)$. 
% From \cite{cohen2018intertwiners} Eq (41) 
Let $\kappa: G\to %\mathrm{Hom}( \mathcal{F}', \mathcal{F}) \sim 
\R^{m'} \otimes \R^{m}$ %, meaning $\kappa(g)$ is 
be a linear transformation from $\mathcal{F}$ to $\mathcal{F}'$.
G-conv is defined as 
% As shown in \cite{kondor2018generalization}, the following G-conv is equivariant  
\begin{align}
    [\kappa\star f](g) = \int_G \kappa(g^{-1}v)f(v) dv = \int_G \kappa(v)f(gv) dv,
    \label{eq:G-conv}
\end{align}
We denote the Haar measure on $G$ as $dv\equiv d\mu(v)$ for brevity.

\textbf{Equivariance of G-conv.}
G-conv  in \eqref{eq:G-conv} is equivariant \citep{kondor2018generalization}. 
By definition, for $w\in G$ we have
\begin{align}
    [\kappa\star w\cdot f](g) 
    &= \int_G \kappa(v)w\cdot f(gv) dv 
    %\cr & 
    = \int_G \kappa(v)f(w^{-1}gv) dv\cr
    & = [\kappa\star f](w^{-1}g)
    = w\cdot [\kappa\star f](g) 
    % = \int \kappa(v)f(gv) dv 
    \label{eq:G-conv-equiv}
\end{align}
\out{
In particular, note that when $\kappa(v) = \delta(v,v_0)$, meaning it has support only around $v_0$, $[\kappa \star f](g) = f(gv_0)$ is an equivariant map because $w\cdot f(gv_0) = f(w^{-1}gv_0)$. 

The equivariance literature which discretize $G$ use $v_0$ in a discrete subgroup of $G$. 
% Our approach will be slightly different. 
Instead, we show that choosing $v_0$ close to the identity allows one to cover large parts of $G$ by composing the $v_0$.
}%
Existing works on equivariance networks implement $\int_G$ by discretizing the group or summing over irreps. 
We take a different approach and use the infinitesimal generators of the group.
% is to instead approximate $\kappa(v)$ utilizing the Lie algebra. %, as we elaborate next.
% A major challenge in using existing methods for G-conv is that to implement the integral $\int_G$ one has to encode an approximate version of $G$ into the architecture, as discussed above.  
% For continuous symmetries this step is quite tedious, involving either discretization or truncated sum over irreps \citep{kondor2018clebsch}.
% Another major drawback of these methods is that the symmetry group needs to be known \textit{a priori}. \nd{discuss recent symmetry discovery}
While a Lie group $G$ is infinite, usually it can be generated using a small number of infinitesimal generator, comprising its ``Lie algebra''. %, defined below. 
We use the Lie algebra to introduce a building block to approximate G-conv. 
% The core of our architecture is a G-conv whose kernel has a small support. 
% Next, we review core concepts about Lie algebras. 
Figure \ref{fig:Lie-group-S} visualizes a Lie group, Lie algebra and the concept we discuss below. 

\out{
In many domains, such as physical systems, 
exhibit the data often have a low dimensional representation. 
The symmetries hidden in the data should be related to the dimensionality of the underlying space and have a small number of infinitesimal generators. % much lower than the dimensionality of the data itself. 
In order to design an architecture for all \textit{continuous symmetries} and can automatically \textit{discover symmetries} from data, we propose to design the architecture in terms of the Lie algebra basis.
}%%%%
\begin{figure}
    \centering
% \begin{wrapfigure}{R}{.3\textwidth}
    % \includegraphics[width = 1\linewidth, trim=0 0 0 20pt]{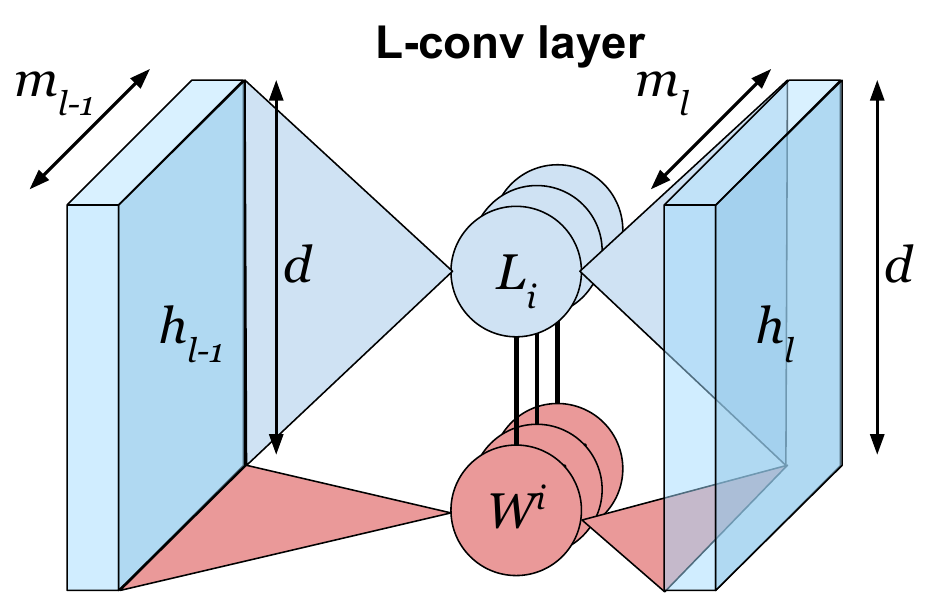}
    % \includegraphics[width = 1\linewidth]{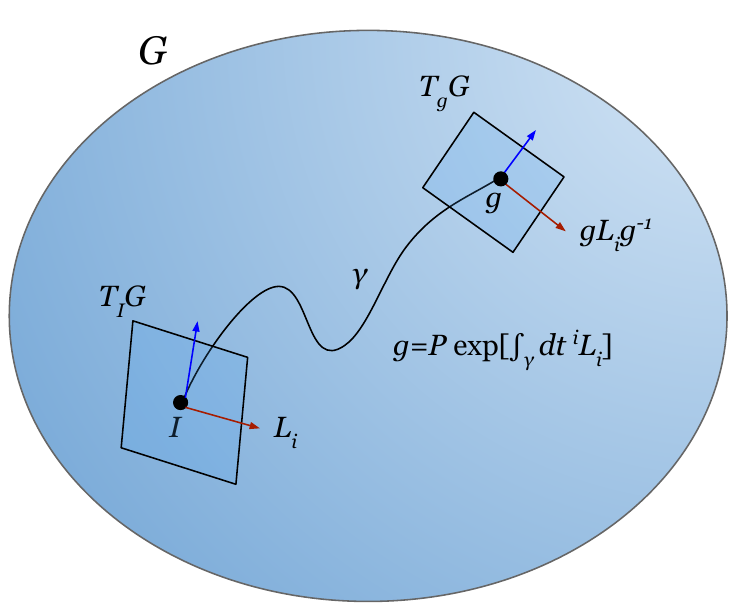}
    % \vspace{-20pt}
    % \includegraphics[width = .8\linewidth]{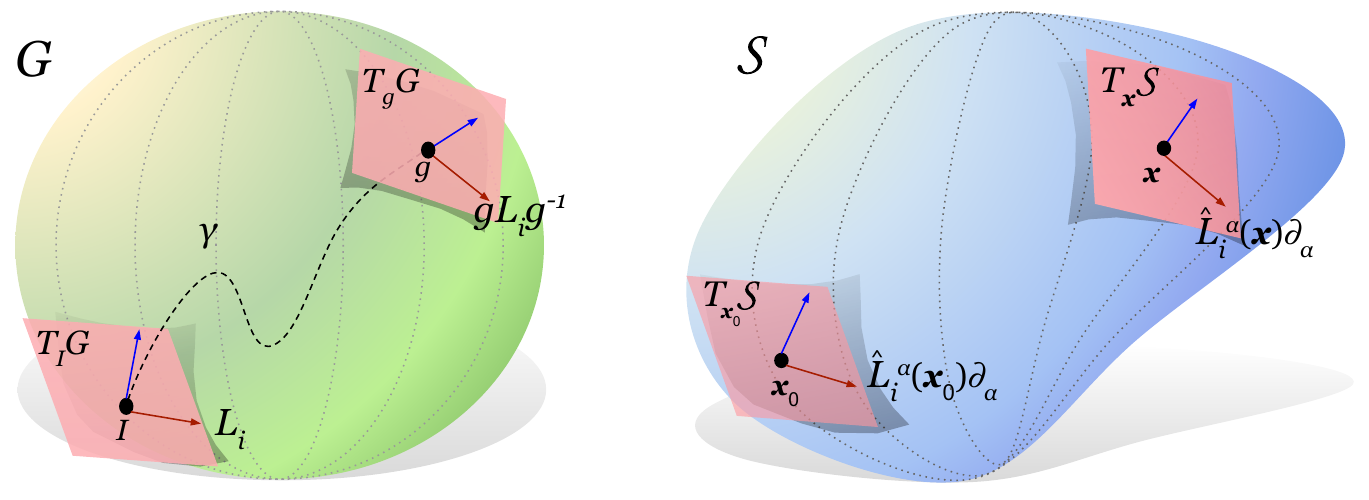}
    \includegraphics[width = .8\linewidth]{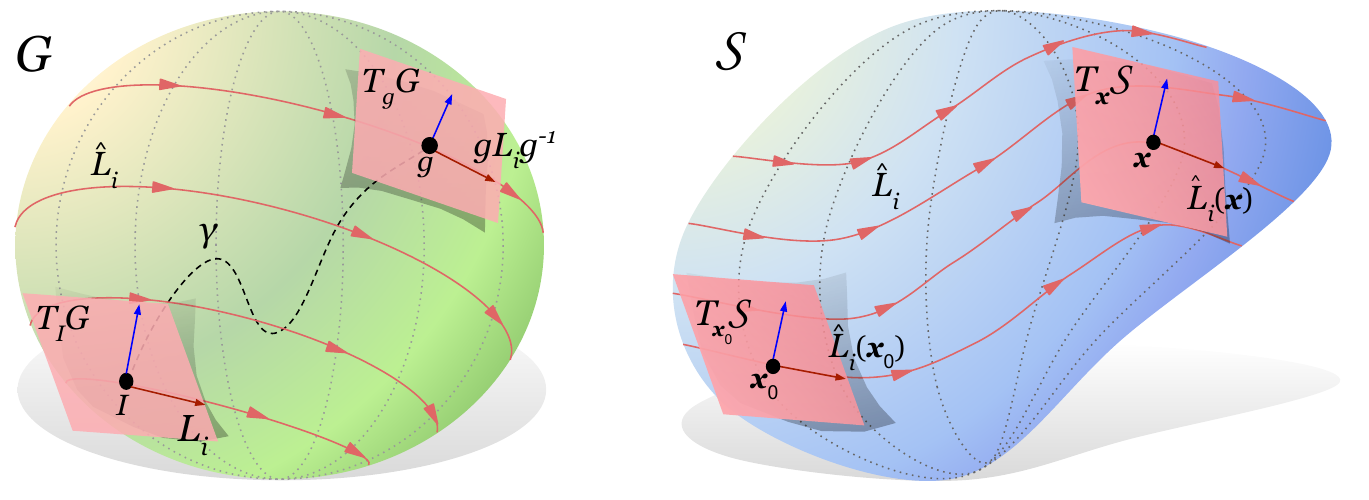}
    \caption{
    \textbf{Lie group and Lie algebra:}
    Illustration of the group manifold of a Lie group $G$ (left).  
    The Lie algebra $\mathfrak{g}=T_IG$ is the tangent space at the identity $I$. 
    $L_i$ are a basis for $T_IG$.
    If $G$ is connected, $\forall g\in G$ there exist %an infinite number of 
    paths like $\gamma$ from $I$ to $g$ and $g$ can be written as a path-ordered integral $g=P\exp[\int_\gamma dt^i L_i]$. 
    % The tangent space $T_gG$ has a basis $gL_ig^{-1}$, the pushforward by $g$.  
    \textbf{
    Base space}
    Right is a schematic of the base space 
    $\mathcal{S}$ as a manifold. 
    The lift $\vx=g\vx_0$ takes $\vx\in \mathcal{S}$ to $g\in G$, and maps the tangent spaces $T_\vx \mathcal{S}\to T_gG$. 
    Each Lie algebra basis $L_{i}\in \mathfrak{g}=T_IG$ generates a vector field $\hat{L}_i$ on the tangent bundle $TG$ via the pushforward $\hat{L}_i(g) = gL_{i}g^{-1}$.
    Via the lift, $L_i$ also generates a vector field $\hat{L}_i=\hat{L}_i^\alpha(\vx)\ro_\alpha = [gL_i\vx_0]^\alpha \ro_\alpha$. %, with $\vx=g\vx_0$. 
    }
    \label{fig:Lie-group-S}
    % \vspace{-120pt}
% \end{wrapfigure}
\end{figure}

\textbf{Lie algebra.}
Let $G$ be a Lie group, which includes common continuous groups.
Group elements $u\in G$ infinitesimally close to the identity element $I$ can be written as $u \approx I +  \eps^i L_i$ (note Einstein summation), where  
% We can find a basis 
$L_i\in\mathfrak{g}$ with the Lie algebra $\mathfrak{g}= T_IG$ is the tangent space of $G$ at the identity element.  
The Lie algebra has the property that it is closed under a
Lie bracket $[\cdot,\cdot]: \mathfrak{g}\times \mathfrak{g} \to \mathfrak{g}$
\begin{align}
    % u & = \exp(t\cdot L) \cr
    % u & \approx I + \eps\cdot L\cr
    [L_i,L_j] &= %\sum_k
    {c_{ij}}^k L_k,
    \label{eq:Lie-commutator}
\end{align}
which is skew-symmetric and satisfies the Jacobi identity.
Here the coefficients ${c_{ij}}^k\in \R$ or $\C$ are called the structure constants of the Lie algebra.
For matrix representations of $\mathfrak{g}$, $[L_i,L_j] = L_iL_j-L_jL_i$ is the commutator.
The $L_i$ are called the infinitesimal generators of the Lie group. 
% They define vector fields spanning the tangent space of the manifold of $G$ near its identity element $I$.  
% with a connected group manifold (e.g. for 2D rotations, $SO(2)$, the group manifold is the circle $S^1$). 

\textbf{Exponential map.}  
If the manifold of $G$ is connected
\footnote{
When $G$ has multiple connected components, these results hold for the component containing $I$, and generalize easily for mutli-component groups such as $\Z_k \otimes G $ \citep{finzi2021practical}.
% \nd{discuss in SI?}
},
% \ry{refer to Rao 99 for structure}
an exponential map $\exp : \mathfrak{g}\to G$ can be defined such that $g = \exp[t^i L_i] 
% =\lim_{N\to \infty}(I+t^i L_i/N)^N 
\in G$.
For matrix groups, if $G$ is connected and compact, the matrix exponential % defined through a Taylor expansion 
is such a map and it is surjective. 
For most other groups (except $\mathrm{GL}_d(\mathbb{C})$ and nilpotent groups) it is not surjective.
Nevertheless, for any connected group every $g\in G$ can be written as a product $g=\prod_a \exp[t_a^i L_i]$ %using the matrix exponential 
\citep{hall2015lie}.
Making $t^i_a$ infinitesimal steps $dt^i(s)$ tangent to a path $\gamma$ from $I$ to $g$ on $G$ % allows us to write any $g\in G$ as 
yields %what's called a 
the surjective path-ordered exponential in physics, denoted as $g=P\exp[\int_\gamma dt^i L_i]$ %, which is surjective 
(SI \ref{ap:theory-extended}, and see Time-ordering in \citet[p143]{weinberg1995quantum}). 
% In particular, for any $u\in G$ we can write $u = u_\eps v$  with $u_\eps = (I+\eps^iL_i)$. 
% This allows us to Taylor expand functions on $G$. 
% Figure \ref{fig:Lie-group-S} shows an illustration of a Lie group and Lie algebra. 
% We will use this fact to modify existing results about group equivariant architectures to introduce L-conv. 

\textbf{Pushforward.}
$L_i\in T_IG$ can be pushed forward to $\hat{L}_i(g)=gL_ig^{-1}\in T_gG $ to form a basis for $T_gG$, satisfying the same Lie algebra 
% $[L_i^{(g)},L_j^{(g)}]={c_{ij}}^k L_k^{(g)}$.
$[\hat{L}_i(g),\hat{L}_j(g)]={c_{ij}}^k \hat{L}_k(g)$.
The manifold of $G$ together with the set of all $T_gG$ attached to each $g$ forms the tangent bundle $TG$, a type of fiber bundle \citep{lee2009manifolds}. 
$\hat{L}_i$ is a vector field on $TG$.
The lift maps $\hat{L}_i$ to an equivalent vector field on $T\mathcal{S}$, which we will also denote by $\hat{L}_i$.
Figure \ref{fig:Lie-group-S} illustrates the flow of these vector fields on $TG$ and $T\mathcal{S}$.
% \nd{Add tangent bundle and discretized $\mathcal{S}$ to fig.}

% \input{secs/background}
% \input{secs/equivariance}

% \input{secs/G-conv}
% \section{Equivariant Architecture}

% \ry{Show a picture of generation in a set, made with mathematical \href{https://keenan.is/illustrating/2017/03/03/illustrating-lie-groups/}{Illustration}}
% \subsection{Continuous Symmetry, Lie Groups and Lie Algebras}
% \nd{Shorten}
% We will focus on Lie groups that are subsets $G\subseteq \mathrm{GL}_d(\R)$ of the general linear group in $d$ dimensions.
% Our results generalize trivially to other number fields such as $\C$. 
% For any Lie group $G\subset \mathrm{GL}_d(\R)$,

\section{Lie Algebra Convolutional Network}
We can use 
the Lie algebra basis $L_i\in \mathfrak{g}$ to construct the Lie group $G$ with the exponential map. 
Similarly, we show that  Lie algebras can also serve as building blocks to construct G-conv layers.
We propose the Lie algebra convolutional network (L-conv).  The key idea is to approximate the kernel $\kappa(u)$ using localized kernels which can be constructed using the Lie algebra (Fig. \ref{fig:G-conv2L-conv}).
This is possible because the exponential map is a generalization of a Taylor expansion. 
% This process is closely mimics the universal approximation theorem of neural networks. 
We   show that a G-conv whose kernel is concentrated near the identity can be expanded in the Lie algebra.  
% Concretely, we will use the Lie algebra to expand G-conv \eqref{eq:G-conv} near the identity $I$. 
% This yields basic building blocks from which an arbitrary G-conv can be constructed.
% is to instead approximate $\kappa(v)$ utilizing the Lie algebra.
% \subsection{Expanding in G-conv in the Lie algebra}
% We consider a G-conv \eqref{eq:G-conv} with a kernel localized near identity $I$.

Let $\delta_\eta (u)\in \R$ denote a normalized \textit{localized kernel}, meaning $\int_G \delta_\eta(g)dg=1 $, and with support on a small neighborhood of size $\eta$ near the identity $I$ (i.e., $\delta_\eta(I+\eps^iL_i) \to 0 $ if $\|\eps\|^2>\eta^2$).
Let $\kappa_0 (u) = W^0 \delta_\eta (u)$, where $W^0\in \R^{m'} \otimes \R^{m}$ are constants. 
% and $\delta_\eta(u) \in \R$ is a normalized kernel, $\int_G \delta_\eta(g)dg=1 $,
% with support on a small neighborhood of size $\eta$ near $I$, meaning $\delta_\eta(I+\eps^iL_i) \to 0 $ if $\|\eps\|^2>\eta^2$.
The localized kernels $\kappa_0$ can be used to approximate G-conv. 
We first derive the expression for a G-conv whose kernel is $\kappa_0$. 

\out{
Note that with $f(g)\in \R^m$, each $\eps^i \in\R^{m} \otimes \R^m $ is a matrix. 
With indices, $f(gv_\eps) $ is given by 
\begin{align}
    [f(gv_\eps)]^a &= \sum_b f^b(g(\delta^a_b + [\eps^i]^a_b L_i)) 
\end{align}
}%%%%

\textbf{Linear expansion of G-conv with localized kernel.}
% \ry{simplify the notations} 
We can expand a G-conv whose kernel is $\kappa_0(u) = W^0 \delta_\eta(u)$ in the Lie algebra of $G$ to linear order. 
With $v_\eps = I+\eps^i L_i $, we have
%to obtain the L-conv architecture.
(see SI \ref{ap:theory-extended})%, \eqref{eq:L-conv-basic})
\begin{align}
    Q[f](g)&=[\kappa_0 \star f](g) 
    = \int_G dv \kappa_0 (v) f(gv )
    =\int_{\|\eps\| <\eta } dv_\eps  \kappa_0 (v_\eps) f(gv_\eps )\cr
    % &= \int d\eps \delta_\eta (I+\eps^i L_i) f(g(I+\eps^i L_i))\cr
    % &= \int d\eps \delta_\eta (I+\eps^i L_i) f(g+\eps^i gL_i)\cr
    &
    = W^0\int d\eps \delta_\eta(v_\eps) %(I+\eps^i L_i) 
    \br{f(g)+ \eps^i g L_i \cdot {d\over dg}  f(g) + O(\eps^2) } %\bigg|_{u\to g}
    \cr
    % &\approx \int d\eps \kappa_0 (I+\eps^i L_i) \br{I+ \eps^i g L_i \cdot {d\over dg} } f(g)
    % \cr
    % & \approx W^0\br{I + \ba{\eps}^i g L_i\cdot {d\over dg} } f(g)
    % \label{eq:L-conv-basic} 
    &= W^0 \br{I + \ba{\eps}^i g L_i\cdot {d\over dg} }f(g) +O(\eta^2) %\cr 
    % \mbox{\nd{out}}
    % &= [W^0]_b f^a\pa{g\pa{\delta^a_b+ [\ba{\eps}^i]_a^b L_i}} +O(\eta^2)
    \label{eq:L-conv}
\end{align}
% where $\delta^a_b$ is the Kronecker delta and, 
where $W^0\in \R^{m'} \otimes \R^m$, and
using  %$\kappa_0(v)=c \delta_\eta(v)$ with $c\in \R^{m'}\otimes \R^m$ 
$\int_G \delta_\eta(g)dg= \int d\eps \delta_\eta(v_\eps)=1 $ and $\eps^i\in \R^{m}\otimes \R^m$, we defined
\begin{align}
    % W^0 &= %\int_G dv \kappa_0(v) = 
    % \int d\eps\kappa_0 (I+\eps^iL_i) \in \R^{m'} \otimes \R^m, &
    \ba{\eps}^i &= \int d\eps \delta_\eta (v_\eps) %(I+\eps^iL_i) 
    \eps^i \in \R^{m} \otimes \R^m. 
\end{align}
Note that because $\|\eps\|<\eta$ we also have $\|\ba{\eps}\|<\eta $ (SI \ref{ap:theory-extended}, \eqref{eq:O-eta-p}). 
Here $d\eps $ is the integration measure on the Lie algebra $\mathfrak{g}=T_IG$ induced by the Haar measure $dv_\eps $ on $G$.
% where we used the fact that the Haar measure $dv$ induces an integration measure on the Lie algebra, which we denoted by $d\eps$.

\textbf{Interpreting the derivatives.}
In a matrix representation of $G$, we have $g L_i\cdot {df\over dg} = [g L_i]_\alpha^\beta {df\over dg_\alpha^\beta} = \Tr{[g L_i]^T {df\over dg} }$.
% $g L_i\cdot {df\over dg} = \sum_{a,b}[g L_i]_{ab} {df\over dg_{ab}} = \Tr{[g L_i]^T {df\over dg} }$.
% Note that in $g(I+\eps^iL_i)\vx_0$, the $gL_i\vx_0 = \hat{L}_i(g)\vx$ come from the pushforward $L_i^{(g)}=gL_ig^{-1} \in T_gG$.
%of $L_i\in T_IG$ from the tangent space at $I$ to the tangent space at $g$, meaning $gL_ig^{-1} \in T_gG$. 
% This is seen from the fact that $gv_\eps$ is moving from $g$ by $\eps L_i$, hence in $T_gG$. 
% We will show this explicitly in following examples. 
% \Eqref{eq:L-conv-basic} is the core of the architecture we are proposing, the Lie algebra convolution or \textbf{L-conv}.
% The $gL_i\cdot df/dg$ in \eqref{eq:L-conv} 
This can be written in terms of partial derivatives $\ro_\alpha f(\vx) = \ro f/\ro \vx^\alpha$ as follows.
% In general, 
Using $\vx^\rho = g^\rho_\sigma \vx_0^\sigma$, we have ${df(g\vx_0)\over dg^\alpha_\beta} =\vx_0^\beta \ro_\alpha f(\vx) $, and so
% \ry{add a cartoon}
\begin{align}
    % \vx^\mu &= g^\rho_\sigma \vx_0^\sigma &
    % {df(g\vx_0)\over dg^\alpha_\beta}
    % &= {d(g^\rho_\sigma \vx_0^\sigma) \over dg^\alpha_\beta}\ro_\rho f(\vx) = \vx_0^\beta \ro_\alpha f(\vx)
    % \label{eq:dgx0-dg}
    % \\
    \hat{L}_if(\vx)\equiv gL_i\cdot {df\over dg} &= [gL_i]^\alpha_\beta \vx_0^\beta \ro_\alpha f(\vx) = [gL_i\vx_0]\cdot \del f(\vx) %= \hat{L}_if(\vx)
    \label{eq:dfdg-general}
\end{align}
Hence, for each $L_i$, the pushforward $gL_ig^{-1}$ generates a flow on $\mathcal{S}$ through the vector field $\hat{L}_i\equiv gL_i\cdot d/dg = [gL_ig^{-1} \vx]^\alpha \ro_\alpha$ (Fig. \ref{fig:Lie-group-S}). 
% $\hat{L}_i$ is related to the Maurer-Cartan form $\omega = g^{-1} \ro_\alpha g d\vx^\alpha$, which encodes the pushforward.
% Write $f(gv_\eps) = f(g+ \eta(\eps)^\alpha \ro_\alpha g)$. 
% We have $ \eta^\alpha \ro_\alpha g = \eps^i g L_i$, so $ \eps^i L_i \eta^\alpha g^{-1} \ro_\alpha g = \eta \cdot \omega $.
% The Maurer-Cartan form defines a way to parallel transport (pushforward) and define a basis frame on  
% Vector fields are sections of the tangent bundle $T\mathcal{S}$.
% We will encounter $gL_i\vx_0$ again on a discretized $T\mathcal{S}$ again below. 
\out{
Being a vector field $\hat{L}_i\in T\mathcal{S}$ (i.e. 1-tensor), $\hat{L}_i$ is basis independent, meaning 
for $v \in G$, $\hat{L}_i(v\vx) = \hat{L}_i$. 
Its components transform as $[\hat{L}_i(v \vx)]^\alpha =[vgL_i\vx_0]^\alpha = v^\alpha_\beta \hat{L}_i(\vx)^\beta $, while the partial transforms as $\ro / \ro[v\vx]^\alpha = [v^{-1}]^{\gamma}_\alpha \ro_\gamma$. 
}%%%
% \ry{formally define a L-conv layer, put Equation 10 here}

\textbf{Lie algebra convolutional (L-conv) layer.}
% \out{
% \textbf{L-conv Layer} 
\Eqref{eq:L-conv} states that for a kernel localized near the identity, the effect of the kernel can be summarized in $W^0$ and $\ba{\eps}^i\hat{L}_i$. 
Note that we do not need to perform the integral over $G$ explicitly anymore. 
Instead of working with a kernel $\kappa_0$, we only need to specify $W^0$ and $\ba{\eps}^i$. 
% The construction in \eqref{eq:L-conv} turns out to be the basic building block of G-conv, as we show below. 
Hence, in general, we define the Lie algebra convolution (L-conv) as  
\begin{align}
    Q[f](\vx)%&= \br{W^0 + W^i g L_i\cdot {d\over dg} } f(g) 
    % &= W^0 \br{I + \ba{\eps}^i g L_i\cdot {d\over dg} }f(g\vx_0) 
    &= W^0\br{I+\ba{\eps}^i\hat{L}_i} f(\vx)\cr 
    &= W^0 \br{I + \ba{\eps}^i  [gL_i\vx_0]^\alpha \ro_\alpha }f(\vx) %\cr 
    % \mbox{\nd{out}}
    % &= [W^0]_b f^a\pa{g\pa{\delta^b_a+ [\ba{\eps}^i]_a^b L_i}\vx_0} +O(\ba{\eps}^2)
    \label{eq:L-conv-def0}
\end{align}
% }%%%%
% For $f(g)\in \R^m $, 
% For brevity, we define $W^i\equiv W^0 \ba{\eps}^i$.
% , and found using the Moore-Penrose inverse $\tilde{W}_0^{-1} = (W^{0T}W^0)^{-1}W^0$. 

Being an expansion of G-conv, L-conv inherits the equivariance of G-conv, as we show next.

\begin{proposition}[Equivariance of L-conv]\label{prop:L-conv-equiv}
    With assumptions above,
    L-conv is equivariant under $G$. % to  $O(\eta^2)$. 
\end{proposition}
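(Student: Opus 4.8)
The plan is to verify the equivariance condition \eqref{eq:equivairance-general} directly on the operator $Q$ of \eqref{eq:L-conv-def0}, reducing it to a single statement about the vector field $\hat{L}_i$. First I would unpack what must be shown: for every $u\in G$, $u\cdot Q[f] = Q[u\cdot f]$, which by \eqref{eq:f-G-action} means $Q[f](u^{-1}\vx) = Q[u\cdot f](\vx)$ for all $\vx$. Since $Q = W^0[I + \ba{\eps}^i\hat{L}_i]$ and $W^0\in\R^{m'}\otimes\R^m$ acts only on the feature (channel) index while the $G$-action moves the spatial argument, the two commute, $u\cdot(W^0 h) = W^0(u\cdot h)$ for any feature map $h$, and the constant $\ba{\eps}^i$ factors out. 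It therefore suffices to prove equivariance of the identity term and of each $\hat{L}_i$ separately. The identity term is equivariant pointwise and trivially, so the entire content is the claim $\hat{L}_i(u\cdot f) = u\cdot(\hat{L}_i f)$.

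The main step is this last identity, established by the chain rule together with the transformation of the lift. Writing $\vx = g\vx_0$, the point $u^{-1}\vx = (u^{-1}g)\vx_0$ lifts to $u^{-1}g$. Applying $\hat{L}_i = [gL_i\vx_0]^\alpha\ro_\alpha$ at $\vx$ to the map $\vx\mapsto f(u^{-1}\vx)$ and using $\ro_\alpha f(u^{-1}\vx) = [u^{-1}]^\beta_\alpha(\ro_\beta f)(u^{-1}\vx)$ gives $[gL_i\vx_0]^\alpha[u^{-1}]^\beta_\alpha(\ro_\beta f)(u^{-1}\vx) = [u^{-1}gL_i\vx_0]^\beta(\ro_\beta f)(u^{-1}\vx)$. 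But $[u^{-1}gL_i\vx_0]$ is exactly the coefficient of $\hat{L}_i$ evaluated at the lifted point $u^{-1}g$, so the right-hand side equals $(\hat{L}_i f)(u^{-1}\vx) = (u\cdot \hat{L}_i f)(\vx)$, as required. Conceptually this is just the remark following \eqref{eq:dfdg-general} that $\hat{L}_i$ is a genuine vector field obtained by pushforward, hence a coordinate-free object that commutes with pullback by $u$.

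As an independent check I would invoke inheritance from G-conv: by \eqref{eq:L-conv}, $Q[f] = [\kappa_0\star f]$ is a group convolution with kernel $\kappa_0$, and \eqref{eq:G-conv-equiv} already shows any G-conv is equivariant; since the $\eta$-expansion is organized order by order and the $G$-action is linear, equivariance holds at each order and in particular for the retained linear-order operator. I expect the only delicate point to be bookkeeping around the lift: the assignment $\vx\mapsto g$ is defined only up to the stabilizer of $\vx_0$, so the two expressions above can be compared at the single point $u^{-1}\vx$ only once $\hat{L}_i$ is well defined as a vector field on $\mathcal{S}$ (equivalently, once $gL_i\vx_0$ depends only on $\vx$). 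I would therefore lean on exactly the standing assumptions that guarantee this, which the statement already presupposes.
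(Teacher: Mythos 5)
Your proof is correct and follows essentially the same route as the paper's: the paper's proof also rests on the observation that in $\hat{L}_i = [gL_i\vx_0]^\alpha\ro_\alpha$ the components transform with $v^\alpha_\beta$ while the partials transform with $[v^{-1}]^\gamma_\alpha$, so the factors cancel and $\hat{L}_i$ is a $G$-invariant vector field, after which equivariance follows by plugging into \eqref{eq:L-conv-def0} exactly as you do. Your chain-rule computation $[gL_i\vx_0]^\alpha[u^{-1}]^\beta_\alpha(\ro_\beta f)(u^{-1}\vx) = [u^{-1}gL_i\vx_0]^\beta(\ro_\beta f)(u^{-1}\vx)$ is just the explicit form of the paper's cancellation remark, and your supplementary inheritance-from-G-conv argument matches the paper's passing comment that L-conv, being an expansion of G-conv, inherits its equivariance.
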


% \nd{redo proof with expansion}
\begin{proof}
\out{
    From \eqref{eq:L-conv} it follows that L-conv is equivariant up to $O(\ba{\eps}^2) \sim O(\eta^2)$, as for $w\in G$
\begin{align}
    w\cdot Q[f](g)&= Q[f](w^{-1}g) 
    = W^0 f\pa{w^{-1} g\pa{I+ \ba{\eps}^i L_i}} +O(\eta^2)
    %= W^0 w \cdot f\pa{ g v_\eps} 
    = Q[w\cdot f](g).
    \label{eq:L-conv-equiv}
\end{align}
\nd{2nd way}
}%%%%%
First, note that the components of $\hat{L}_i$ transform as $[\hat{L}_i(v \vx)]^\alpha =[vgL_i\vx_0]^\alpha = v^\alpha_\beta \hat{L}_i(\vx)^\beta $, while the partial transforms as $\ro / \ro[v\vx]^\alpha = [v^{-1}]^{\gamma}_\alpha \ro_\gamma$.
As a result in $\hat{L}_i = [gL_i\vx_0]^\alpha \ro_\alpha $ all factors of $v$ cancel, meaning 
for $v \in G$, $\hat{L}_i(v\vx) = \hat{L}_i(\vx)$. 
% To prove the equivariance of the expanded version of L-conv, 
This is because of the fact that $\hat{L}_i\in T\mathcal{S}$ is a vector field  (i.e. 1-tensor) and, thus, invariant under change of basis. 
% Its components transform as $[\hat{L}_i(v \vx)]^\alpha =[vgL_i\vx_0]^\alpha = v^\alpha_\beta \hat{L}_i(\vx)^\beta $, while the partial transforms as $\ro / \ro[v\vx]^\alpha = [v^{-1}]^{\gamma}_\alpha \ro_\gamma$ resulting in all factors of $v$ cancelling out. 
Plugging into \eqref{eq:L-conv-def0}, for $w\in G$
\begin{align}
    w\cdot Q[f](\vx)&= Q[f](w^{-1}\vx) = W^0\br{I+\ba{\eps}^i \hat{L}_i(w^{-1}\vx)} f(w^{-1}\vx) \cr
    &=  W^0\br{I+\ba{\eps}^i \hat{L}_i(g)} f(w^{-1}\vx) 
    =  W^0\br{I+\ba{\eps}^i \hat{L}_i(g)} w\cdot f(\vx)%\cr &
    = Q[w\cdot f](\vx)
    \label{eq:L-conv-equiv}
\end{align}
which proves L-conv is equivariant. 
\end{proof}

% Next, we provide some explicit example of the form of L-conv for familiar continuous symmetries. 

% \ry{make this a prop}
% \textbf{Equivariance of L-conv}

\out{
\ry{explain the Taylor expansion in (25)}
% $\delta_\eta \star f$ may be expanded  near identity as
\begin{align}
    Q[f](g)&\equiv c[\delta_\eta \star f](g) 
    = %\approx
    c\int d\eps \delta_\eta (I+\eps^i L_i) \br{I+ \eps^i g L_i \cdot {d\over dg} } f(g) %\bigg|_{u\to g}
    +O(\eta^2)
    \cr
    &= W^0 \br{I + \ba{\eps}^i g L_i\cdot {d\over dg} }f(g) +O(\eta^2)\cr 
    &= [W^0]_b f^a\pa{g\pa{I+ [\ba{\eps}^i]_a^b L_i}} +O(\eta^2)
    \label{eq:L-conv}
    % = \br{W^0 + W^i g L_i\cdot {d\over dg} } f(g)
    % \label{eq:L-conv-basic} 
\end{align}
}%%%% 
% \subsection{Interpretation and Examples \label{sec:interpret-continuous} }
% \paragraph{General case} 

\textbf{Examples.}
Using \eqref{eq:dfdg-general} we can calculate L-conv for specific groups (details in SI \ref{ap:examples-continuous}). 
For {translations} $G=T_n = (\R^n,+)$,  we find the generators become simple partial derivatives $\hat{L}_i = \ro_i$ (SI \ref{ap:example-Tn}), yielding $f(\vx) + \eps^\alpha \ro_\alpha f(\vx)$. 
For {2D rotations} (SI \ref{ap:example-so2}) 
the generator $\hat{L} \equiv \pa{x \ro_y - y \ro_x } = \ro_\theta$, which is the angular momentum operator about the z-axis in quantum mechanics and field theories. %, which generates rotations around the $z$ axis. 
For rotations with scaling, $G= SO(2)\times \R^+$, 
we have two $L_i$, one $\hat{L}_\theta=\ro_\theta $ from $so(2)$ and a scaling with $L_r = I$, yielding
$\hat{L}_r = x\ro_x+y\ro_y= r\ro_r $. % (SI \ref{ap:examples-continuous}) 
Next, we discuss the form of L-conv on discrete data.

\subsection{Approximating G-conv using L-conv \label{sec:G-conv2L-conv} }
\begin{figure}
    \centering
    \includegraphics[width=.8\linewidth]{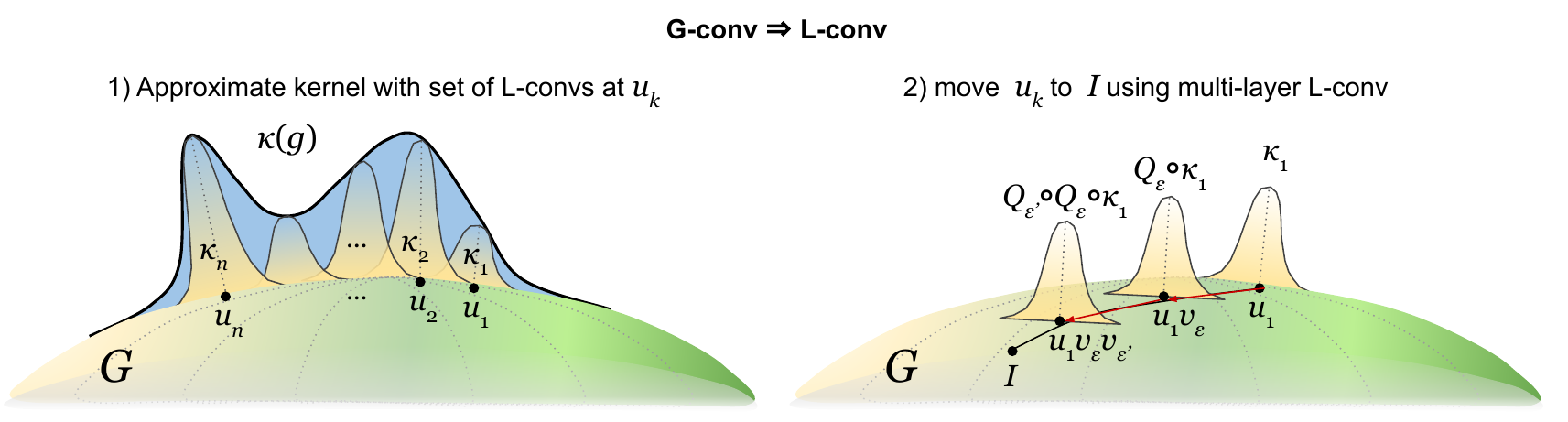}
    \caption{
    Sketch of the procedure for approximating G-conv using L-conv. 
    First, the kernel is written as the sum of a number of localized kernels $\kappa_k$ with support around $u_k$ (left). 
    Each of the $\kappa_k$ is then moved toward identity by composing multiple L-conv layers $Q_{\eps'}\circ Q_\eps\dots \kappa_k $ (right). 
    % The more layers used, the more accurate the 
    }
    \label{fig:G-conv2L-conv}
\end{figure}

L-conv can be used as a basic building block to construct G-conv with more general kernels. 
Figure \ref{fig:G-conv2L-conv} sketches the argument described here (see also SI \ref{ap:G-conv2L-conv}).
\begin{theorem}[G-conv from L-convs]
    \thlabel{thm:Gconv2Lconv}
    G-conv \eqref{eq:G-conv} can be approximated using L-conv layers.
\end{theorem}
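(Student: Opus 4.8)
The plan is to realize an arbitrary G-conv as a \emph{sum} of branches, each branch being a \emph{composition} of several L-conv layers, mirroring the two-step picture in Figure \ref{fig:G-conv2L-conv}. The two ingredients I would combine are (i) a decomposition of a general kernel into kernels localized near chosen group elements $u_k$, and (ii) the fact quoted before \eqref{eq:L-conv} that any $u_k$ in a connected group factors as a product of elements near the identity, each of which is exactly what a single L-conv layer implements to linear order.

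First I would write the kernel as a sum of localized bumps. Cover the (compact) support of $\kappa$ by $K$ neighborhoods of size $\eta$ centered at $u_1,\dots,u_K$ and set $\kappa_k(v)=W_k\,\delta_\eta(u_k^{-1}v)$ with $W_k=\kappa(u_k)\,\Delta v_k$, so that $\kappa\approx\sum_{k}\kappa_k$ in the sense of a Riemann sum against the Haar measure. By linearity of \eqref{eq:G-conv} this gives $[\kappa\star f](g)\approx\sum_k\int_G\kappa_k(v)f(gv)\,dv\approx\sum_k W_k\,f(gu_k)$, with error controlled by the modulus of continuity of $\kappa$ and by $\eta$. Thus it suffices to realize each elementary operation $f\mapsto W_k\,f(g u_k)$ by L-conv layers and then sum the outputs across $k$; the sum is implemented as parallel channels that are added at the end, so the full object is a multi-layer, multi-branch L-conv network.

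Second I would build each elementary operation by transporting the bump from $u_k$ to the identity along a path. Using the product-of-exponentials decomposition, write $u_k=\prod_{a=1}^{N}\exp[t_a^i L_i]\approx\prod_{a=1}^{N}\pa{I+\ba{\eps}_a^i L_i}$ with each factor inside an $\eta$-neighborhood of $I$. Each factor is precisely the argument shift produced by one L-conv layer $Q_a[f](g)=W^0_a\br{I+\ba{\eps}_a^i\hat L_i}f(g)$, which to linear order equals $W^0_a f\pa{g(I+\ba{\eps}_a^iL_i)}$. Choosing $W^0_a=I$ for $a<N$, $W^0_N=W_k$, and composing $Q_N\circ\cdots\circ Q_1$ (with the factors ordered to match the path) therefore reproduces $W_k f(gu_k)$ up to higher-order terms, so a single branch of depth $N$ delivers one term of the sum.

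The main obstacle is error control under composition, which I expect to be an Euler-method-style estimate. Each L-conv layer is accurate only to $O(\eta^2)$ (see \eqref{eq:L-conv}), and a branch reaching $u_k$ needs $N=O(1/\eta)$ steps to cover a fixed path length with steps of size $\eta$; the local errors accumulate to $O(N\eta^2)=O(\eta)$ per branch, which vanishes as $\eta\to0$ while $K$ and $N$ grow. I would make this rigorous by bounding the difference between the discrete product $\prod_a(I+\ba{\eps}_a^iL_i)$ and the path-ordered exponential, propagating that bound through $f$ using its smoothness, and finally combining the per-branch errors with the Riemann-sum error from the first step. The remaining bookkeeping — matching channel dimensions so the $W^0_a$ compose correctly, and ensuring the parallel branches can be added within the L-conv formalism — is routine once the width is taken large enough, in direct analogy with the universal approximation theorem alluded to in the text.
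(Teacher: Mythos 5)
Your proposal is correct and follows essentially the same two-step route as the paper: the paper likewise decomposes $\kappa$ into localized bumps $\kappa_k(g)=c_k\,\delta_\eta(u_k^{-1}g)$ so that $[\kappa\star f](g)=\sum_k c_k[\delta_\eta\star f](gu_k)$, and then transports each bump to the identity by composing L-conv layers with $W^0=I$ whose argument shifts realize factors $v_{\eps_a}=I+\ba{\eps}_a^i L_i$ with $u_k\approx\prod_a v_{\eps_a}$. Your Euler-style accumulated error bound $O(N\eta^2)=O(\eta)$ per branch is actually more conservative (and more defensible) than the paper's stated $O(\eta^{p+1})$, but the argument is otherwise the same.
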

\begin{proof}
    The procedure involves two steps, as illustrated in Fig. \ref{fig:G-conv2L-conv}: 1) approximate the kernel using localized kernels as the $\delta_\eta$ in L-conv; 2) move the kernels towards identity using multiple L-conv layers. 
    The following lemma outline the details.
\end{proof}

% To do so, we first note that any function $\kappa(v)$ can be approximated to arbitrary accuracy using a set of local kernels $\delta_\eta $ (using the universal approximation theorem of neural networks \citep{hornik1989multilayer,cybenko1989approximation}).

\begin{lemma}[Approximating the kernel] 
    \label{lem:kernel-approx}
    % \nd{Robin, please check:}
    Let the kernel $\kappa:G\to \mathcal{F}'\otimes \mathcal{F}$ with $\int_G \|\kappa(g)\|^2 dg < \infty$ be continuously differentiable with $\|d\kappa(g)/dg\|^2<\xi^2 $, and with compact support over $G_0\subset G$.  
    Let $\kappa_k(g)=c_k\delta_\eta(u_k^{-1}g) $ be a set of $N$ kernels with support on an $\eta$ neighborhood of $u_k\in G$.  
    Then there exist $c_k \in  \mathcal{F}'\otimes \mathcal{F}$ and $u_k \in G$ such that $\tilde{\kappa}= \sum_{k=1}^N \kappa_k$ approximates $\kappa$, meaning
    $\int_G \|\kappa(g)-\tilde{\kappa}(g)\|^2dg < \zeta^2 $ for arbitrary small $\zeta\in \R_+$.
\end{lemma}
% \textbf{Approximating the kernel}
\begin{proof}
See SI \ref{ap:G-conv2L-conv} for details. 
The intuition is similar to the universal approximation  theorem for neural networks \citep{hornik1989multilayer,cybenko1989approximation}, only generalized to  a group manifold instead of $\R$. 
Let $B_0 $ be the set of $v_\eps = I+\eps^iL_i\in \mathfrak{g}$, with $\|\eps\|^2<\eta^2$.
Choose a set of $u_k\in G$ such that the neighborhoods $B_k = u_k B_0 \subset G$ cover the support $G_0$ of $\kappa$. 
The bound $\|d\kappa(g)/dg\|^2<\xi^2 $ means that on small enough neighborhoods $B_k\subset G$, for any two $u,v \in B_k$ we have $\|\kappa(u)-\kappa(v)\|^2 \leq \eta^2 \xi^2 $,
where $|G_0|$ is the volume of the support of $\kappa$. 
%, with $\eta$ being the longest geodesic path on $B_k$. 
Hence, for $g\in B_k$, $\kappa(g)$ can be approximated with $\kappa_k(g) =\kappa(u_k) \delta_\eta(u_k^{-1}g)$, with normalized localized kernels   $\delta_\eta(g)$, and any element $u_k \in B_k$.
We show that the approximation error of using $\tilde{\kappa}= \sum_k \kappa_k$ to approximate $\kappa$ is bounded by $ \int_G dg \|\kappa(g)-\tilde{\kappa}(g)\|^2 < |G_0|\eta^2 \xi^2$. 
Any desired error bound $\zeta$ can then be attained by choosing small enough $\eta$ for neighborhood sizes.
\end{proof}

Thus, we can approximate a large class of kernels as $\kappa(g) \approx \sum_k \kappa_k(g)$ where the local kernels $\kappa_k (g) = c_k \delta_\eta (  u_k^{-1}g)$ have
support only on an $\eta$ neighborhood of $u_k \in G $. 
Here $c_k\in \R^{m'} \otimes \R^{m}$ are constants and $\delta_\eta(u)$ is as in \eqref{eq:L-conv}.  
Using this, G-conv \eqref{eq:G-conv} becomes
\begin{align}
    [\kappa \star f](g)
    % & = \sum_k [\kappa_k \star f](g) 
    &= \sum_k c_k \int dv \delta_\eta(u_k^{-1} v) f(gv) 
    % &= \sum_k c_k \int dv \delta_\eta(v ) f(gu_k v) 
    = \sum_k c_k [\delta_\eta \star f] (gu_k).
    \label{eq:c-k-delta}
\end{align}
% As we showed in \eqref{eq:L-conv-basic}, $[\delta_\eta \star f](g)$ is the definition of L-conv. 
% \ry{put this as a theorem, show as an approximation error}
The kernels $\kappa_k$ are localized around $u_k$, whereas in L-conv the kernel is around identity. 
We can compose L-conv layers to move $\kappa_k$ from $u_k$ to identity. 

\out{
Can we write the movement of a $\delta$ on the group explicitly as the action of the exp map on a $\delta$ near identity? 
For a group element it is clear, but for a function over the group? 
Should we use the fact that we have convolution, i.e. integral over the group? 
With $v = P\exp[\int_\gamma dt^i L_i]$ we have 
\begin{align}
    f(g) &= \int_G du f(u)\delta(u^{-1}g) \cr 
    f(vg) &= \int_G du f(u)\delta(u^{-1}vg) = \int_G du' f(v^{-1}u')\delta({u'}^{-1}g) 
\end{align}
Can we use the integral 
}
\begin{lemma}[Moving kernels to identity]
    % The local kernel 
    $\kappa_k$ can be moved near identity using a multilayer L-conv.  
\end{lemma}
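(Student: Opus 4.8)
The plan is to use the representation, recalled in the Background, that on a connected Lie group every element can be written as a product of exponentials $u_k = \prod_{a=1}^N \exp[t_a^i L_i]$, or more finely as the path-ordered exponential $u_k = P\exp[\int_\gamma dt^i L_i]$ along a path $\gamma$ from $I$ to $u_k$. Refining $\gamma$ into $N$ infinitesimal steps turns each factor into a near-identity element $v_a = I + \eps_a^i L_i$ with $\|\eps_a\| < \eta$, so that $u_k = v_N v_{N-1}\cdots v_1$ exactly in the limit (and up to $O(\eta)$ for finite $N$). The whole point is that these $v_a$ live in the $\eta$-neighborhood of $I$ where the localized kernel $\delta_\eta$ of L-conv has support, so each $v_a$ is realizable by a single L-conv layer.

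First I would record how one L-conv layer acts on the argument of $f$. From \eqref{eq:L-conv}, $Q_\eps[f](g) = W^0\,f\pa{g(I+\ba{\eps}^i L_i)} + O(\eta^2)$, i.e. a single layer is, to first order, a small right-translation $f(g)\mapsto W^0 f(g v_\eps)$ with $v_\eps = I+\ba{\eps}^i L_i$. Composing two layers gives $Q_{\eps_2}\circ Q_{\eps_1}[f](g) \approx W_2 W_1\, f(g\, v_{\eps_2} v_{\eps_1})$, and by induction a stack of $N$ layers with weights $(W_a,\eps_a)$ yields $\pa{Q_{\eps_N}\circ\cdots\circ Q_{\eps_1}}[f](g) \approx \pa{W_N\cdots W_1}\, f\pa{g\, v_{\eps_N}\cdots v_{\eps_1}}$. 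Choosing the $\eps_a$ to match the product-of-exponentials decomposition of $u_k$ makes $v_{\eps_N}\cdots v_{\eps_1} = u_k$, so the stacked L-conv computes $f(g u_k)$ up to a constant; absorbing the matrix product $W_N\cdots W_1$ into the coefficient $c_k$ then reproduces exactly the term $c_k\,[\delta_\eta\star f](g u_k)$ from the previous lemma. Summing over $k$ reassembles the full G-conv of \eqref{eq:G-conv}, closing the chain Theorem $\to$ Lemma (kernel approximation) $\to$ this lemma.

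The main obstacle is controlling the accumulated approximation error. Each layer contributes an $O(\eta^2)$ error, and traversing a path of $O(1)$ length in $G$ requires $N = O(1/\eta)$ layers, so the naive bound is $N\cdot O(\eta^2) = O(\eta)$, which vanishes as $\eta\to 0$; making this rigorous needs uniform bounds on the first and second derivatives $\del f$ (and on the pushed-forward fields $\hat{L}_i$) along $\gamma$, together with an argument that the per-step errors do not compound multiplicatively through the weights $W_a$. A secondary subtlety, and the reason several layers rather than one are needed, is that for non-compact groups (or groups where $\exp$ is not surjective) no single near-identity element reaches $u_k$; one must invoke the path-ordered / product-of-exponentials form cited from \citet{hall2015lie} to guarantee that every $u_k$ in the support of $\kappa$ is reachable by a finite composition of $\eta$-steps.
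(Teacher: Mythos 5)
Your proof takes essentially the same route as the paper's: decompose $u_k$ into a product of near-identity factors $v_{\eps_a}=I+\eps_a^i L_i$ (guaranteed by the product-of-exponentials/path-ordered form on a connected group), realize each factor as a single L-conv layer acting as $f(g)\mapsto W^0 f(gv_\eps)$, and compose layers $Q_p\circ\dots\circ Q_1$ to shift the kernel's support from $u_k$ to the identity, exactly as in the paper's proof and its SI elaboration where the $u_k$ is peeled off progressively via $u_k=v_k(I+\eps_k^i L_i)$. The only divergence is error bookkeeping: where the paper asserts the error in $u_k$ is $O(\eta^{p+1})$ for $p$ layers, you give the more conservative accounting (per-layer $O(\eta^2)$ over $N=O(1/\eta)$ layers, totaling $O(\eta)$), which if anything is the more defensible bound and does not change the conclusion.
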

\begin{proof}
In \eqref{eq:c-k-delta}, write $u_k = v_\eps u'_k$, with $v_\eps = I+ \eps^i L_i \in \mathfrak{g}$. 
Using the definition \eqref{eq:L-conv-def0} an L-conv layer $Q_{\eps} = I-\eps^i \hat{L}_i$ performs a first order Taylor expansion (SI \ref{ap:G-conv2L-conv}) and so 
$Q_\eps[\delta_\eta]({u'}_k^{-1} v)  = \delta_\eta(u_k^{-1} v) +O(\eps^2)$. 
\out{
\begin{align}
    \delta_\eta(u_k^{-1} v) &= \left.\br{I+ \eps^i g L_i \cdot {d\over dg} }\delta_\eta(g)\right|_{g\to {u'}_k^{-1} v} + O(\eps^2) \approx Q_\eps[\delta_\eta]({u'}_k^{-1} v)
\end{align}
where $Q_{\eps} = I-\eps^i \hat{L}_i$ is an L-conv layer with $W^0=I$. 
}%%%
Thus, 
% This means that a 
applying one L-conv layer % with the parameters above 
moves the localized kernel along $v_\eps $ on $G$.   
% Iterating this further, 
Writing $u_k$ as the product of a set of small group elements $u_k = \prod_{a=1}^p v_a$, with $v_a = I+ \eps_a^i L_i \in \mathfrak{g}$. 
Defining L-conv layers $Q_a = I-\eps_a^i \hat{L}_i $,
we can write 
\begin{align}
    \kappa_k (g) &\approx c_k Q_p \circ \cdots \circ Q_1 \circ \delta_\eta (g)  
\end{align}
meaning $\kappa_k$ localized around $u_k$ can be written as a $p$ layer L-conv acting on a kernel  $\delta_\eta(g)$, localized around the identity of the group. 
\out{
Thus, by applying multiple L-conv layers we can find $Q_p \circ \dots Q_1\circ \kappa_k\approx c_k\delta_\eta $. 
This can always be done using a set of $v_{\eps_a} = (I+\ba{\eps}_a^iL_i)$ such that $u_k \approx \prod_{a=1}^p v_{\eps_a}$. 
}%%%%
With $\|\eps_a\|<\eta$, the error in $u_k$ is $O(\eta^{p+1})$.
\end{proof}

Thus, we conclude that any G-conv \eqref{eq:G-conv} can be approximated by multilayer L-conv. 
% We can take this result even further, 
Furthermore, for compact $G$, using the theorem in \citet{kondor2018generalization}, we can show that any equivariant feedforward neural network can be approximated using multilayer L-conv with nonlinearities. 

\textbf{Equivariance of nonlinearity.}
Pointwise nonlinearities give equivariant maps between scalar feature maps. %\nd{ref instead?} 
To see this, 
let $\sigma:\R \to \R $.  We extend $\sigma: \gF \to \gF$ by applying $\sigma$ component-wise. Let $f:\mathcal{S}\to \mathcal{F}$ be a scalar feature map (i.e., $g\cdot f(\vx) = f(g^{-1}\vx)$).  Then 
\[
g \cdot (\sigma \circ (f))(\vx) = 
\sigma \circ (f)( g^{-1} \vx) = 
\sigma \circ (g \cdot f)(\vx).
\]
Since the composition of equivariant maps is equivariant, given equivariant linear mapping $Q:\mathcal{F}^\gS\to \mathcal{F}^{\prime\gS}$ (i.e. $g\cdot Q[f]= Q[g\cdot f]$), the layer $f \mapsto \sigma \circ Q[f]$ is equivariant.
% Then, $\sigma(Q[f](\vx))$ is still a scalar and for $g\in G$ we have 
% \begin{align}
%     % g\cdot \sigma\circ f^a(\vx)& = \sigma\circ f^a(g^{-1}\vx) = \sigma\circ g\cdot f^a(\vx) \cr
%     g\cdot \sigma\circ Q[f](\vx)& = \sigma\circ Q[f](g^{-1}\vx) = \sigma\circ g\cdot Q[f](\vx) 
% \end{align}
% meaning, point-wise nonlinearity $\sigma$ is equivariant for scalar $Q[f]$. 
Hence we have the corollary:

\begin{corollary}%[Feedforward NN]
    %Multilayer $G$-conv with pointwise non-linearity can be approximated by multilayer $L$-conv with pointwise non-linearity.
    Assume $G$ is compact and acts on $\gS$ transitively.  Then any equivariant feedforward neural network (FNN) can be approximated using multilayer L-conv with point-wise nonlinearities. 
\end{corollary}
\begin{proof}
%This follows from Theorem 1 in  \citet{kondor2018generalization} and the our Theorem \ref{thm:Gconv2Lconv}. 
A FNN is defined as $\sigma_p \circ F_p[\cdots [ \sigma_1\circ  F_1  [f]](\vx)$ where $F_k$ are linear and $\sigma_k$ are point-wise nonlinearities.
By Theorem 1 of \citet{kondor2018generalization}, any linear layer in the equivariant FNN is a G-conv, which by Theorem \ref{thm:Gconv2Lconv} can be approximated by multilayer L-conv.
Therefore, multilayer L-conv with nonlinearity can approximate any equivariant FNN.
\end{proof}

% As a final note, we should point out 
Finally,  to our knowledge it is not known whether \textit{every equivariant function} can be approximated by equivariant FNN for a Lie group $G$. 
Hence, the corollary above is \textit{not} a universal approximation theorem for equivariant scalar functions in terms of L-conv.  However, it does show that multilayer $L$-conv is equally expressive as other equivariant networks. 
%
% \nd{Mention/review Rao's result and how it demonstrates this theorem for 1D translation.}
\out{
We conducted small controlled experiments to verify how multilayer L-conv approximates G-conv (SI \ref{ap:experiments}). 
We briefly discuss them here. 

\paragraph{Learning symmetries using L-conv}
\citet{rao1999learning} introduced a basic version of L-conv and showed that it can learn 1D translation and 2D rotation. 
% They showed that t
The learned 
% infinitesimal generator 
$L_i$ for 1D translation reproduced finite translation well using $(I+\eps L)^N$, which is $N$ recursive L-conv layers. 
Hence, their results prove that L-conv can be used to approximate CNN, as well as $SO(2)$ G-conv. 
We discuss the details of L-conv approximating CNN below in sec. \ref{sec:approx}.
We also conducted more complex experiments using recursive L-conv to learn large rotation angle between two images 
% \nd{revise if experiments moved to SI}
(SI \ref{ap:experiments}).
% These experiments required L-conv to approximate rotations by large angles, which it did successfully. 
% The learned $L$ also has the right structure (Fig. \ref{fig:L-so2-combined}).
\out{
% We can use L-conv for learning the Lie algebra as well. 
% In fact, the architecture used in \citet{rao1999learning} is a basic version of L-conv with $W^0 = 1$. 
They show that with a small fixed $\ba{\eps^i}$ they could learn learn the single $L_i$ for continuous 1D translations and for 2D rotations.
Indeed, the architecture used in \citet{rao1999learning} is a special case of L-conv with $W^0 = 1$ and $\ba{\eps}^i\in \R$. 
We conducted a more advanced experiment with L-conv learning rotation angles in pairs of random images (SI \ref{ap:exp-L-multi})
}%
% experiments for with $G=SO(2)$.
% In the first test, we used fixed small rotation angle $\pi/10$ and used 
Figure \ref{fig:L-so2-combined} shows the learned $L$. 
Left shows $L\in so(2)$ learned using L-conv in $3$ recursive layers to learn rotation angles between a pair of $7\times 7$ random images $\vf$ and $R(\theta) \vf$ with $\theta \in [0,\pi/3)$. 
Middle and right of Fig. \ref{fig:L-so2-combined} are experiments with fixed small rotation angle $\theta = \pi/10$ (SI \ref{ap:exp-L-small}).
Middle is the $L$ learned using L-conv and right is using the exact solution $R = (YX^T)(X^TX)^{-1}$. 
While the middle $L$ is less noisy, it does not capture weights beyond first neighbors of each pixel. 
% Right shows the generator calculated using the exact solution to the linear regression problem with $\theta = \pi/10$. 
% The SGD solutions using L-conv are less noisy and capture more details.
(also see SI \ref{ap:experiments} for a discussion on symmetry discovery literature.)

L-conv can potentially replace other equivariant layers in an architectures.
We conducted limited experiments for this on small image datasets (SI \ref{ap:exp-image}). 
L-conv allows one to look for potential symmetries in data which may have been scrambled or harbors hidden symmetries. 
% Next, we will provide some examples of explicit forms of L-conv.
Since many datasets such as images deal with discretized spaces, we first need to derive how L-conv acts on such data, discussed next. 
% Next, we discuss the form of L-conv on discrete data. 

\begin{figure}
    \centering
    \includegraphics[width=.17\linewidth]{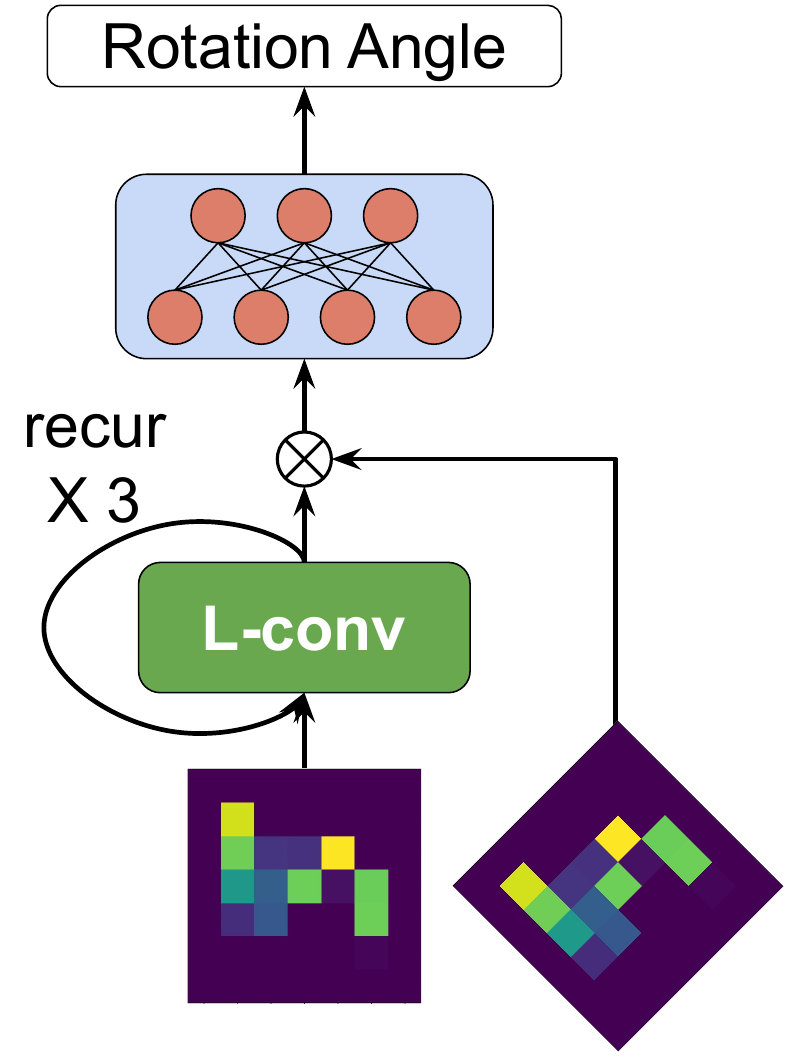}
    \includegraphics[width=.82\linewidth]{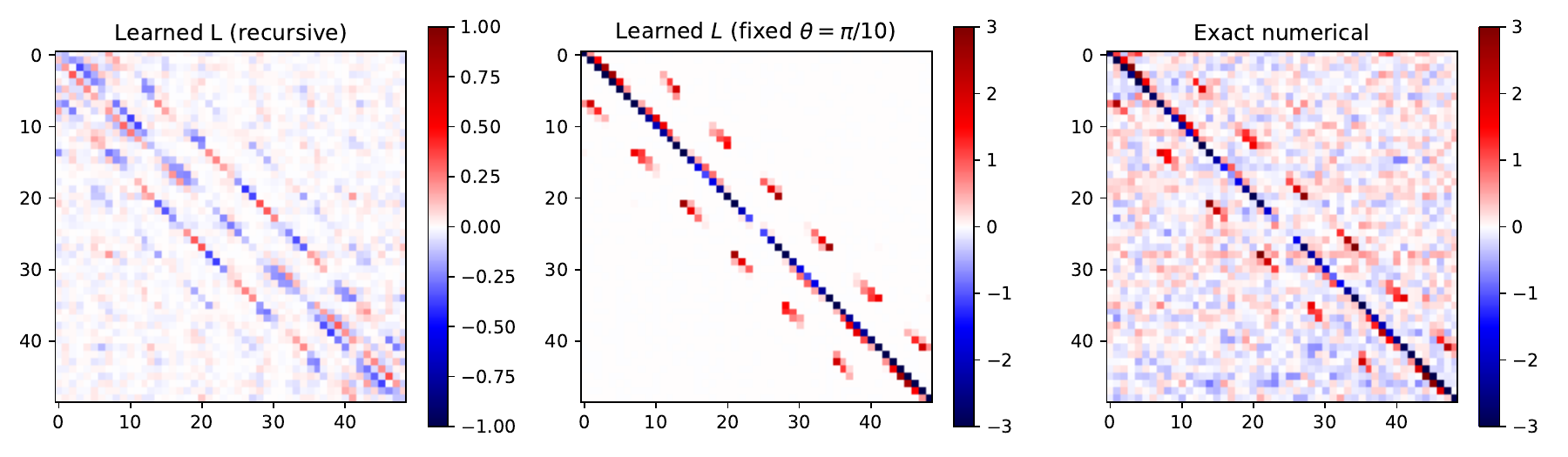}
    \caption{
    \textbf{Learning the infinitesimal generator of $SO(2)$} 
    Left shows the architecture for learning rotation angles between pairs of images (SI \ref{ap:exp-L-multi}). 
    Next to it is the $L$ learned using recursive L-conv in this experiment. %to learn rotation angle between pairs of images.  
    %in $3$ recursive layers to learn rotation angles between a pair of $7\times 7$ random images $\vf$ and $R(\theta) \vf$ with $\theta \in [0,\pi/3)$. 
    Middle $L$ is learned using a fixed small rotation angle $\theta = \pi/10$, and
    % While this $L$ is less noisy, it does not capture weights beyond first neighbors of each pixel. 
    right shows $L$ found using the numeric solution from the data. 
    % The SGD solutions using L-conv are less noisy and capture more details.
    }
    \label{fig:L-so2-combined}
\end{figure}
}%%%%
Next, we discuss implementation details. 
% Since many datasets such as images deal with discretized spaces, we first need to derive how L-conv acts on such data, discussed next. 

\section{Discretized space and implementation: the tensor notation \label{sec:tensor} }
% If the dataset being analyzed is in the form of $f(\vx)$ for some sample of points $\vx$, together with derivatives $\del f(\vx) $, we can use the L-conv formulation above. 
% However, i
\out{
\begin{wrapfigure}{r}{.45\textwidth}
    \vspace{-40pt}
    \includegraphics[width = 1\linewidth, trim=0 0 0 20pt]{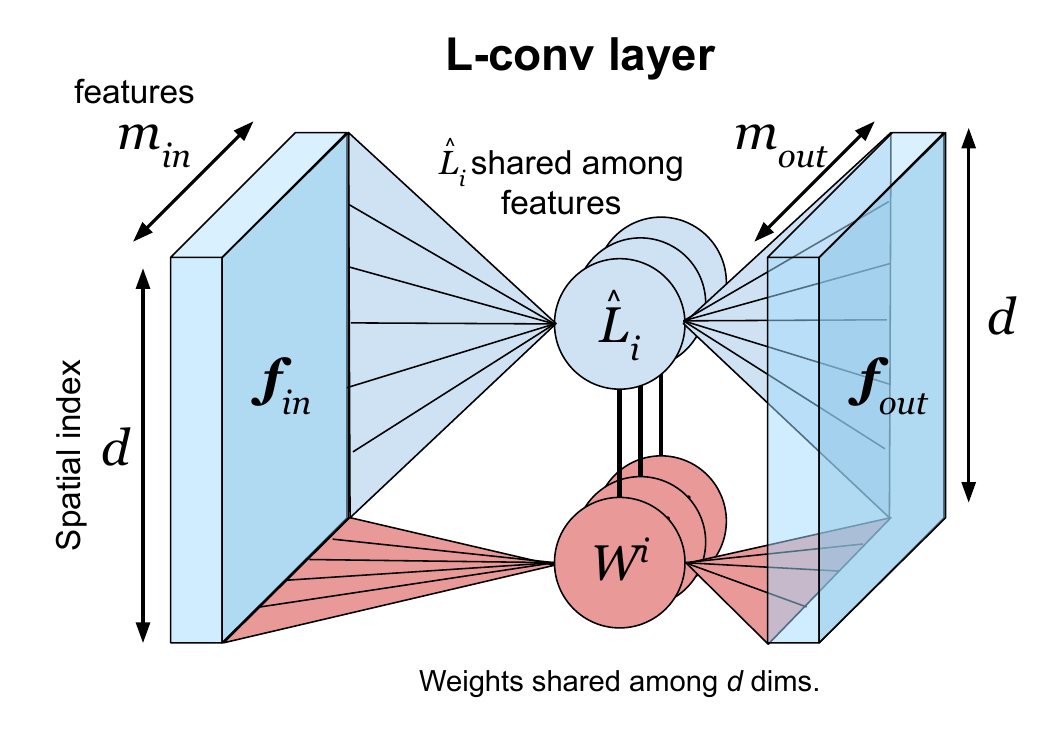}
    % {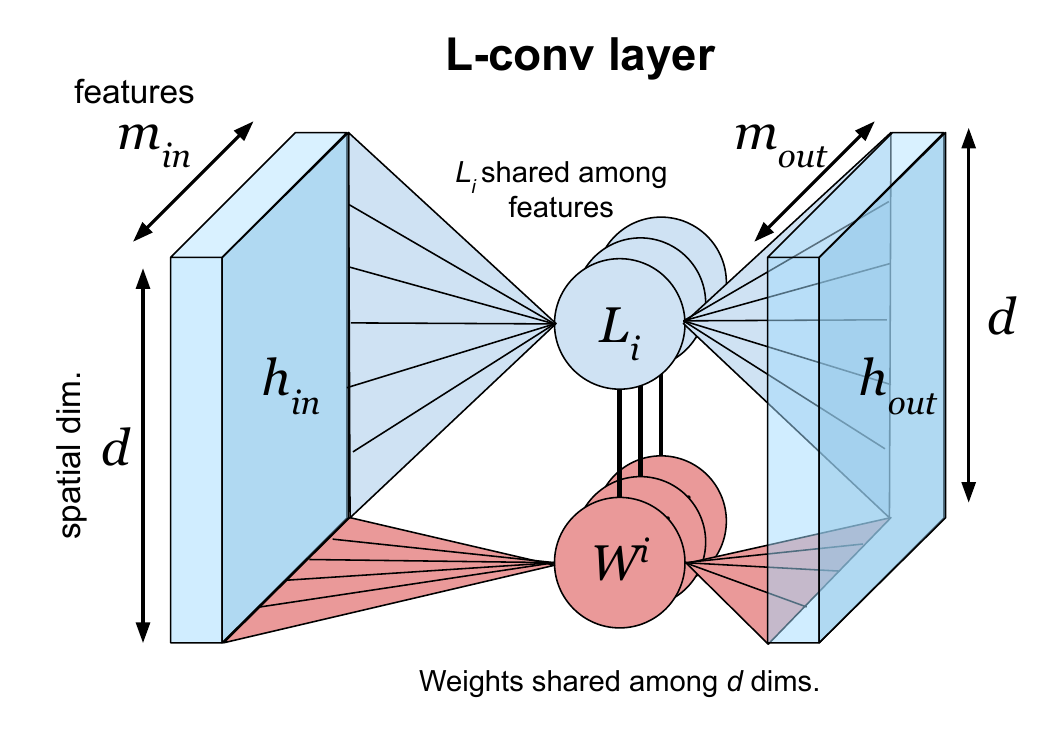}
    %{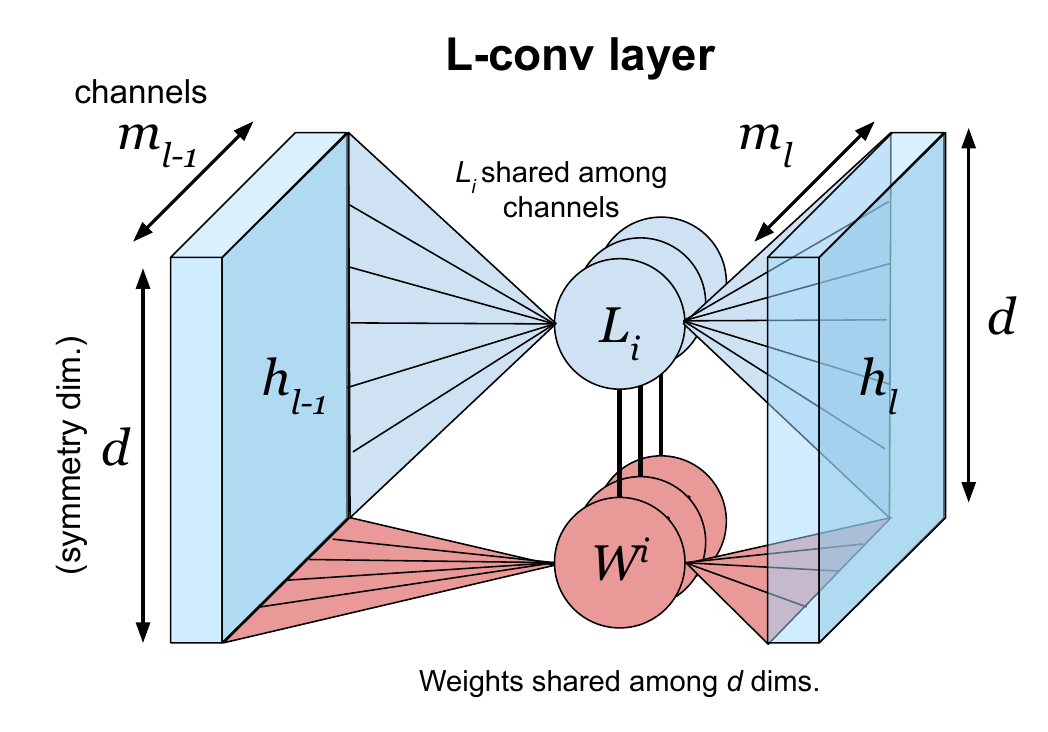}
    \caption{
    {
    L-conv layer architecture.
    $L_i$ only act on the $d$ flattened spatial dimensions, and $W^i$ only act on the $m_{in}$ input features and returns $m_{out}$ output features.
    For each $i$, L-conv is analogous to a Graph Convolutional Network with $d$ nodes and $m_{in}$ features. 
    }
    }
    \label{fig:L-conv-layer}
    \vspace{-20pt}
\end{wrapfigure}
}%%%%%%

In many datasets, such as images, $f(\vx)$ is not given as continuous function, but rather as a discrete array, with %$\vx$ taking values over a grid. 
% In this case all the discussions above still hold, as discuss now. 
% In particular, e
% Let 
$\mathcal{S}= \{\vx_0,\dots \vx_{d-1}\}$ containing $d$ points. 
Each $\vx_\mu$ represents a coordinate in higher dimensional space,
% For instance, 
e.g. on a $10\times 10$ image, $\vx_0$ is $(x,y)=(0,0)$ point and $ \vx_{99}$ is $(x,y) =(9,9)$.

\textbf{Feature maps and group action}
In the tensor notation, we encode $\vx_\mu \in \mathcal{S}$ as the canonical basis (one-hot) vectors in $\vx_\mu \in \R^d$ with 
% To define features $f(\vx_\mu)\in \R^m$ for $\vx_\mu \in \mathcal{S}$, we embed $\vx_\mu \in \R^d$ and 
% encode them as the canonical basis (one-hot) vectors with components 
$[\vx_\mu]_\nu = \delta_{\mu\nu}$ (Kronecker delta), e.g. $\vx_0 = (1,0,\dots, 0)$.
The features become $\vf \in \mathcal{F}= \R^d \otimes \R^m $, meaning 
% feature maps $\vf \in \mathcal{F}$ are 
$d\times m$ tensors, with $f(\vx_\mu) = \vx_\mu^T \vf = \vf_\mu $.
%
% \paragraph{Group action}
% \nd{Change to $\vx_\mu^T g^T$ for less confusion.}
Although $\mathcal{S}$ is discrete, the group acting on $\mathcal{F}$ can be continuous (e.g. image rotations). 
Any $G\subseteq \mathrm{GL}_d(\R)$ of the general linear group (invertible $d\times d$ matrices) acts on $\vx_\mu \in \R^d$ and $\vf\in \mathcal{F}$. 
% Since $\vx_\mu \in \R^d$, $g \in G$ also naturally act on $\vx_\mu$. 
% The resulting 
% Note $\vy = g \vx_\mu \notin  \mathcal{S}$ is a linear combination $\vy = c^\nu \vx_\nu$ of elements in $\mathcal{S}$, not a single element.
% The action of $G$ on $\vf$ and $\vx$, can be defined in multiple equivalent ways.
% The most elegant one is if w
We define $f(g \cdot \vx_\mu) = \vx_\mu^T g^T\vf, \forall g\in G$, so that
% consistent with the lift. 
% This makes the group action on $\vf$ the natural action of $G$, meaning 
for $w\in G$ we have 
\begin{align}
    w\cdot f(\vx_\mu)&= f(w^{-1}\cdot \vx_\mu) = \vx_\mu^T w^{-1T}\vf = %\vx_\mu^T [w^T]^{-1} \vf = 
    [w^{-1}\vx_\mu]^T \vf
    \label{eq:G-action-tensor-T}
\end{align}
Dropping the position $\vx_\mu$, the transformed features are matrix product $w\cdot \vf= w^{-1T}\vf$. 
We can write G-conv in this notation (SI \ref{ap:tensor-long}).
\out{
\paragraph{G-conv and L-conv in tensor notation} 
Writing G-conv \eqref{eq:G-conv} in the tensor notation we have 
\begin{align}
    [\kappa\star f](g\vx_0) = \int_G \kappa(v)f(gv\vx_0) dv = \vx_0^T \int_G v^{T} g^{T} \vf \kappa^T(v) dv 
    \equiv \vx_0^T [\vf \star \kappa](g) 
    \label{eq:G-conv-tensor}
\end{align}
where we moved $\kappa^T(v) \in \R^m \otimes \R^{m'}$ to the right of $\vf$ because it acts as a matrix on the output index of $\vf$.
The equivariance of \eqref{eq:G-conv-tensor} is readily checked with $w\in G$
\begin{align}
    w\cdot [\vf \star \kappa](g) & = [\vf \star \kappa](w^{-1}g) 
    % = \int_G v^{T} [w^{-1}g]^{T} \vf \kappa^T(v) dv %\cr &
    = \vx_0^T \int_G v^{T} g^T w^{-1T} \vf \kappa^T(v) dv %\cr &
    %= \int_G v^{-1} g^{-1} w\vf \kappa^T(v) dv 
    % = [w^{-1T} \vf \star \kappa](g)
    = [(w\cdot\vf) \star \kappa](g)
\end{align}
where we used $[w^{-1}g]^{T}\vf = g^{T} w^{-1T}\vf $.
}%%%%%
Similarly, we can rewrite L-conv \eqref{eq:L-conv} in the tensor notation. 
Defining $v_\eps =I+ \ba{\eps}^i L_i $ 
\begin{align}
    Q[\vf](g) 
    &= W^0 f\pa{g\pa{I+ \ba{\eps}^i L_i}} 
    = \vx_0^T \pa{I+ \ba{\eps}^i L_i}^{T} g^{T} \vf W^{0T} \cr
    % &= \vx_0^T \pa{I+ \ba{\eps}^i L_i^T} g^{T} \vf W^{0T} \cr 
    &= \pa{\vx+ \ba{\eps}^i [gL_i\vx_0]}^T \vf W^{0T}.
    % = \br{\vf_\mu + \ba{\eps}^i [gL_i \vf]_\mu }W^{0T} 
    % \cr
    \label{eq:L-conv-tensor}
\end{align}
Here, $\hat{L}_i=gL_i\vx_0$ is exactly the matrix analogue of pushforward vector field $\hat{L}_i$ in \eqref{eq:dfdg-general}. 
% We will make this analogy more precise below.
The equivariance of L-conv in tensor notation is again evident from the $g^T \vf$, resulting in 
\begin{align}
    Q[w\cdot \vf](g) &= \vx_0^Tv_\eps^T g^T w^{-1T} \vf W^{0T}= Q[\vf](w^{-1}g) = w\cdot Q[\vf](g)
    \label{eq:L-conv-equiv-tensor}
\end{align}

\paragraph{Tensor L-conv  layer implementation}
The discrete space L-conv \eqref{eq:L-conv-tensor} can be rewritten using the global Lie algebra basis $\hat{L}_i$
\begin{align}
    % \mbox{L-conv}:\  
    Q[\vf] &=   \pa{\vf + \hat{L}_i\vf \ba{\eps}^i} W^{0T}, & %\cr 
    Q[\vf]_\mu^a &= \vf_\mu^b [W^{0T}]_b^a + 
    [\hat{L}_i]_\mu^\nu \vf_\nu^c \br{W^i}^a_c
    \label{eq:L-conv-tensor-final}
\end{align}
Where $W^i=W^0\ba{\eps}^i$, $W^0 \in \R^{m_{in}}\otimes \R^{m_{out}}$ and $\ba{\eps}^i \in \R^{m_{in}}\otimes \R^{m_{in}}$ are trainable weights. 
The $\hat{L}_i$ can be either inserted as inductive bias or they can be learned to discover symmetries. % we will be learning $\hat{L}_i$. 
% We will discuss learning symmetries
% We derived L-conv from G-conv using kernels around identity. 

% \out{
% \begin{figure}
%     \centering
\begin{wrapfigure}{r}{.45\textwidth}
    \vspace{-10pt}
    \includegraphics[width = 1\linewidth, trim=0 0 0 20pt]{figs2/L-conv-sketch-2021-05-17.pdf}
    % {figs/L-conv-sketch-2021-02-04.pdf}
    %{figs/L-conv-sketch-2020-11-19.pdf}
    \caption{
    {
    L-conv layer architecture.
    $L_i$ only act on the $d$ flattened spatial dimensions, and $W^i$ only act on the $m_{in}$ input features and returns $m_{out}$ output features.
    For each $i$, L-conv is analogous to a GCN %Graph Convolutional Network 
    with $d$ nodes and $m_{in}$ features.
    }
    }
    \label{fig:L-conv-layer}
    \vspace{-20pt}
\end{wrapfigure}
% \end{figure}
% }
To implement L-conv, note that the formula of \eqref{eq:L-conv-tensor-final} is quite similar to a Graph Convolutional Network (GCN) \citep{kipf2016semi}. 
For each $i$, the shared convolutional weights are $\ba{\eps}^i W^{0T}$ and the aggregation function of the GCN, a function of the graph adjacency matrix, is $\hat{L}_i$ in L-conv. 
Thus, L-conv can be implemented as GCN modules for each $\hat{L}_i$, plus a residual connection for the $\vf W^{0T}$ term.

Figure \ref{fig:L-conv-layer} shows the schematic of the L-conv layer. 
In a naive implementation,  $\hat{L}_i$ can be general $d\times d $ matrices. 
However, being vector fields generated by the Lie algebra, $\hat{L}_i$ has a more constrained structure which allows them to be encoded and learned using much fewer parameters than a $d\times d $ matrix. 
Specifically, encoding the topology of $\mathcal{S}$ as a graph (see SI \ref{ap:L-conv-tensor-interpret}), the incidence matrix replaces partial derivatives \citep{schaub2020random} in \eqref{eq:dfdg-general} and the $L_i$ become weighting of the edges.
This weighting is similar to Gauge Equivariant Mesh (GEM) CNN \citep{cohen2019gauge}. 
% The main difference is that in GEM-CNN the basis of the tangent spaces $T_{\vx_\mu}\mathcal{S}$ are not fixed and are chosen by a gauge. 
Indeed, in L-conv the lift $\vx_\mu = g_\mu\vx_0$ fixes the gauge by mapping neighbors of $\vx_0$ to neighbors of $\vx_\mu$. 
% , meaning it determines how the basis at $<0>$ maps to the basis at $<\mu>$. 
Changing how the discrete $\mathcal{S}$ samples an underlying continuous space 
% $\mathcal{S}_0$ 
% is unchanged but the points sampled from it to create $\mathcal{S}$ are moved a bit, 
will change $g_\mu$ and hence the gauge.

\textbf{Choosing the number of $L_i$.} 
Beside the width of $W^0$ and $\ba{\eps}^i$, the number $n_L$ of $L_i$ is a hyperparameter in L-conv. 
For instance, if $\mathcal{S}$ is a discretization of $n$ dimensional space the symmetry group is likely $G\subset \mathrm{GL}_n(\R)\ltimes T_n $, with $n_L \sim O(n^2)$. 
Note that $n_L$ is independent of the size $d$ of the discretized space (e.g. number of pixels) and generally $n^2\ll d$.  
Choosing $n_L$ larger than the true number of $L_i$ only results in an over-complete basis and shouldn't be a problem. 
% \paragraph{Discretizing space vs group} 
% Next, we discuss these constraints. 
%
We conducted small controlled experiments to verify how multilayer L-conv approximates G-conv (SI \ref{ap:experiments}).

\paragraph{Learning symmetries using L-conv.}
\citet{rao1999learning} introduced a basic version of L-conv and showed that it can learn 1D translation and 2D rotation. 
% They showed that t
% The learned 
% % infinitesimal generator 
% $L_i$ for 1D translation reproduced finite translation well using $(I+\eps L)^N$, which is $N$ recursive L-conv layers. 
% Hence, their results prove that L-conv can be used to approximate CNN, as well as $SO(2)$ G-conv. 
% We discuss the details of L-conv approximating CNN below in sec. \ref{sec:approx}.
We  conducted  experiments to learn large rotation angle between two images 
% \nd{revise if experiments moved to SI}
(SI \ref{ap:experiments}), shown in Fig. \ref{fig:L-so2-combined}.
% These experiments required L-conv to approximate rotations by large angles, which it did successfully. 
% The learned $L$ also has the right structure (Fig. \ref{fig:L-so2-combined}).
\out{
% We can use L-conv for learning the Lie algebra as well. 
% In fact, the architecture used in \citet{rao1999learning} is a basic version of L-conv with $W^0 = 1$. 
They show that with a small fixed $\ba{\eps^i}$ they could learn learn the single $L_i$ for continuous 1D translations and for 2D rotations.
Indeed, the architecture used in \citet{rao1999learning} is a special case of L-conv with $W^0 = 1$ and $\ba{\eps}^i\in \R$. 
Fig. \ref{fig:L-so2-combined} shows experiments with L-conv learning rotation angles in pairs of random images (SI \ref{ap:exp-L-multi})
}%
% experiments for with $G=SO(2)$.
% In the first test, we used fixed small rotation angle $\pi/10$ and used 
% Figure \ref{fig:L-so2-combined} shows the learned $L$. 
Left shows the architecture for learning the rotation angles between a pair of $7\times 7$ random images $\vf$ and $R(\theta) \vf$ with $\theta \in [0,\pi/3)$. Second left is the learned $L\in SO(2)$  using $3$ recursive layer L-conv.
Middle is the $L$ learned using L-conv with fixed small rotation angle $\theta = \pi/10$ (SI \ref{ap:exp-L-small}) and right is  the exact solution $R = (YX^T)(X^TX)^{-1}$. 
While the middle $L$ is less noisy, it does not capture weights beyond first neighbors of each pixel. 
% Right shows the generator calculated using the exact solution to the linear regression problem with $\theta = \pi/10$. 
% The SGD solutions using L-conv are less noisy and capture more details.
(also see SI \ref{ap:experiments} for a discussion on symmetry discovery literature.)

L-conv can potentially replace other equivariant layers in a neural network.
We conducted limited experiments for this on small image datasets (SI \ref{ap:exp-image}). 
L-conv allows one to look for potential symmetries in data which may have been scrambled or harbors hidden symmetries. 
% Next, we will provide some examples of explicit forms of L-conv.
% Next, we discuss the form of L-conv on discrete data. 

\begin{figure}
    \centering
    \includegraphics[width=.17\linewidth]{figs2/L-conv-recur-3.pdf}
    \includegraphics[width=.82\linewidth]{figs2/L-so2-combined.pdf}
    \caption{
    \textbf{Learning the infinitesimal generator of $SO(2)$} 
    Left shows the architecture for learning rotation angles between pairs of images (SI \ref{ap:exp-L-multi}). 
    Next to it is the $L$ learned using recursive L-conv in this experiment. %to learn rotation angle between pairs of images.  
    %in $3$ recursive layers to learn rotation angles between a pair of $7\times 7$ random images $\vf$ and $R(\theta) \vf$ with $\theta \in [0,\pi/3)$. 
    Middle $L$ is learned using a fixed small rotation angle $\theta = \pi/10$, and
    % While this $L$ is less noisy, it does not capture weights beyond first neighbors of each pixel. 
    right shows $L$ found using the numeric solution from the data. 
    % The SGD solutions using L-conv are less noisy and capture more details.
    }
    \label{fig:L-so2-combined}
\end{figure}
%%

% Next, we discuss the relation between L-conv and other neural architectures. 

\section{Relation to other architectures \label{sec:approx} }
% \paragraph{Interpretation}
% Let us interpret this equation.
% In addition to the connection with GEM-CNN, we will now show that other important architectures such as CNN and Graph Convolutional Networks (GCN) \citep{kipf2016semi} are either a subset of L-conv or closely related to it.
% L-conv can reproduce some familiar architectures for specific Lie groups, as we discuss now. 

\textbf{CNN.}
% While it is trivial to find permutations taking $\vx_0$ to $\vx_\mu$, most of them won't preserve the topology as \eqref{eq:lift-neigh}. 
% be an element of a given $G$. 
% For instance, 
This is a special case of expressing G-conv as L-conv when the group is continuous 1D translations. 
The arguments here generalize trivially to higher dimensions. 
\citet[sec. 4]{rao1999learning} used the Shannon-Whittaker Interpolation \citep{whitaker1915functions} to define continuous translation on periodic 1D arrays as $\vf'_\rho = g(z)_\rho^\nu \vf_\nu $.
Here $g(z)_\rho^\nu = {1\over d} \sum_{p=-d/2}^{d/2} \cos \pa{{2\pi p\over d}(z+\rho -\nu)} $ approximates the shift operator for continuous $z$.
These $g(z)$ form a 1D translation group $G$ as $g(w)g(z)=g(w+z)$ with $g(0)^\nu_\rho = \delta^\nu_\rho$.
For any $z=\mu \in \Z$,
$g_\mu=g(z=\mu)$ are circulant matrices that shift by $\mu$ as $ [g_\mu]^\rho_\nu=\delta^\rho_{\nu-\mu}$.
Thus, a 1D CNN with kernel size $k$ can be written suing $g_\mu$ as 
\begin{align}
    F(\vf)_{\nu}^a & = \sigma\pa{\sum_{\mu=0}^k  \vf_{\nu-\mu}^c [W^\mu ]^a_{c}  +b^a} = \sigma\pa{\sum_{\mu=0}^k  [g_\mu\vf]_\nu^c [W^\mu ]^a_{c}  +b^a}  
    \label{eq:CNN-1D}
\end{align}
where $W,b$ are the filter weights and biases. 
$g_\mu$ can be approximated using the Lie algebra and written as multi-layer L-conv as in sec. \ref{sec:G-conv2L-conv}. 
Using $g(0)^\nu_\rho \approx \delta(\rho-\nu)$, the single Lie algebra basis $[\hat{L}]_0=\ro_z g(z)|_{z\to0}$, acts as $\hat{L}f(z) \approx -\ro_z f(z)$ (because $\int \ro_z \delta (z-\nu)f(z) =-\ro_\nu f(\nu)$). 
% The Lie algebra has a single basis $[\hat{L}]_0=\ro_z g(z)|_{z\to0}$. 
% Since $g(0)^\nu_\rho \approx \delta(\rho-\nu)$, we have $\hat{L}f(z) \approx -\ro_z f(z)$. 
% , meaning $\hat{L}$ is a Fourier series for $\ro_z$. 
% We have
% \begin{align}
%     [\hat{L}]^\nu_\rho&=\hat{L}(\rho-\nu) = \sum_p {2\pi p \over d^2} \sin \pa{{2\pi p\over d}(\rho -\nu)},&
%     [\hat{L}\vf]_\rho &= \sum_\nu \hat{L}(\rho-\nu)\vf_\nu = [\hat{L}\star \vf]_\nu
% \end{align}
Its components are
$\hat{L}^\nu_\rho={L}(\rho-\nu) = \sum_p {2\pi p \over d^2} \sin \pa{{2\pi p\over d}(\rho -\nu)} $, which are also circulant due to the $(\rho-\nu)$ dependence.
% are approximately 
% ${L}(z)\approx {d\over \pi} \sin(\pi z) \br{z^{-2}+(d-z)^{-2}}$ on $z\in\Z$. 
% \nd{%From sinc form of $g(z)$. 
% What's the exact form?}
Hence, $[\hat{L}\vf]_\rho = \sum_\nu {L}(\rho-\nu)\vf_\nu = [{L}\star \vf]_\nu$ is a convolution. 
\citet{rao1999learning} already showed that this $\hat{L}$ can reproduce finite discrete shifts $g_\mu$ used in CNN.
They used a primitive version of L-conv with $g_\mu = (I+\eps \hat{L})^N$. 
Thus, L-conv can approximate 1D CNN. 
This result generalizes easily to higher dimensions. 
\out{
% $\hat{L}$ is a circulant matrix. 
$\hat{L}^\nu_\rho$ becomes concentrated around small $\rho-\nu$ when $d\gg1$  
% Thus, for large $d$ 
and we can approximate $L$ using $k\ll d$ nearest neighbors of each node.
Here $\mathcal{S}$ is a periodic 1D lattice with 
% $k$ nearest neighbor 
% graph encoding the topology is a circulant matrix
and adjacency matrix
$\mA_{\mu\nu} = \sum_{-k<\nu<k} \delta_{\mu,\mu+i}$ being a circulant matrix.
}%%%%

\textbf{Graph Convolutional Network (GCN).}
Let $\mA$ be the adjacency matrix of a graph. 
% Notice that 
In \eqref{eq:L-conv-tensor-final} if $\hat{L}_i = h(\mA)$, such as $\hat{L}_i =  \mD^{-1/2}\mA\mD^{-1/2}$, we obtain a GCN \citep{kipf2016semi} ($\mD_{\mu\nu} =\delta_{\mu\nu} \sum_\rho \mA_{\mu\rho} $ being the degree matrix). 
% In fact, 
% $[\hat{L}_i]_\mu^\nu=\mA_{\mu\nu}[\hat{L}_i]_\mu^\nu $ is only non-zero where the graph has. 
So in the special case where all neighbors of each node $<\mu>$ have the same edge weight, meaning $[\hat{L}_i]^\nu_\mu = [\hat{L}_i]^\rho_\mu$, $\forall\nu,\rho \in <\mu> $, \eqref{eq:L-conv} is uniformly aggregating over neighbors and L-conv reduces to a GCN. 
Note that this similarity is not just superficial. 
In GCN $h(\mA)= \hat{L}$ is in fact a Lie algebra basis. 
% As discussed above, $\hat{L}_i$ are vector fields on $T\mathcal{S}$.
When 
$\hat{L}= h(\mA)$, the vector field is the flow of isotropic diffusion $d\vf/dt = h(\mA)\vf$ from each node to its neighbors.
This vector field defines one parameter Lie group with elements $g(t) = \exp[h(\mA)t]$. Hence, %the most restricted 
L-conv for flow groups with a single generator are GCN. 
These flow groups include Hamiltonian flows and other linear dynamical systems. The main difference between L-conv and GCN is that L-conv can assign a different weight to each neighbor of the same node, similar to GEM-CNN \citep{cohen2019gauge} with a fixed gauge set by $g_\mu $.
Next, we discuss the mathematical properties of the loss functions for L-conv.

\section{Group invariant loss \label{sec:Loss} }
% \ry{add the example of cycle-consistent loss}
% $G$ being a symmetry of the system means that $f$ and transformed features $g\cdot f$ have similar probability of occurring.
% Hence, 
Loss functions of equivariant networks are rarely discussed. 
Yet, recent work by \citet{kunin2020neural} showed the existence of symmetry directions in the loss landscape. 
To understand how the symmetry generators in L-conv manifest themselves in the loss landscape, we work out the explicit example of a mean square error (MSE) loss. 
Because $G$ is the symmetry group, $f$ and $g\cdot f$ should result in the same optimal parameters. 
Hence, the minima of the loss function need to be \textit{group invariant}.
One way to satisfy this is for the loss itself to be group invariant, which can be constructed 
% group invariant functions is to 
by integrating over $G$ (global pooling \citep{bronstein2021geometric}). 
A function $I  = \int_G dg F(g)$ is $G$-invariant (SI \ref{ap:invariant-loss}). 
We can also change the integration to $\int_\mathcal{S} d^nx $ by change of variable $dg/d\vx$ (see SI \ref{ap:invariant-loss} for discussion on stabilizers). 
% Next, we discuss the mean square loss (MSE) for a single layer L-conv and its relation to physics. 

\paragraph{MSE loss and Field Theory.}
% In \eqref{eq:loss-invariance} $F(g)\in \R$
% is any scalar function of the features $f$ with a defined $G$ action. 
% For example 
% $F(g)$ 
% can be 
% Using \eqref{eq:loss-invariance},
The MSE is given by $I = \sum_n\int _G dg\|Q[f_n](g)\|^2$, where $f_n$ are data samples and $Q[f]$ is L-conv or another $G$-equivariant function. 
In supervised learning the input is a pair $f_n,y_n$. 
$G$ can also act on the labels $y_n$. 
We assme that $y_n$ are either also scalar features $y_n:\mathcal{S}\to \R^{m_y}$ with a group action $g\cdot y_n(\vx)=y_n(g^{-1}\vx)$ (e.g. $f_n$ and $y_n$ are both images), or that $y_n$ are categorical.
In the latter case $g\cdot y_n = y_n$ because the only representations of a continuous $G$ on a discrete set are constant. 
% via some representation $\pi:G\to \mathrm{GL}_c(\R)$.
%have their own $G$ action, 
% including being 
% though in classification $y_n$ is usually $G$-invariant. 
We can concatenate the inputs to 
% \ry{make sure the notation f is consistent} 
$\phi_n \equiv [f_n|y_n]$ % to define a merged feature which has 
with a well-defined $G$ action $g\cdot \phi_n = [g\cdot f_n| g\cdot y_n]$.
% Therefore, in the analysis below we use the combined features $\phi_n$.
% assume the input is a set of $f_n$ and won't distinguish between $f_n$ and $y_n$. 
The collection of combined inputs $\Phi = (\phi_1,\dots, \phi_N)^T$ is an $(m+m_y)\times N$ matrix. 
Using %the combined features $\Phi$, 
equations \ref{eq:L-conv} and  \ref{eq:dfdg-general}, the MSE loss with parameters $W = \{W^0,\ba{\eps}\}$ becomes
% \ry{include a 2d function} 
(SI
\ref{ap:Loss-MSE})
\begin{align}
    I[\Phi;W] &= \int_G dg \L[\Phi;W] = \int_G dg \left\|  W^0 \br{I + \ba{\eps}^i  [\hat{L}_i]^\alpha \ro_\alpha}\Phi(g) \right\|^2 
    \cr
    % &= \int_G dg \br{ 
    % \|W^0\Phi\|^2 + \left\|W^i [\hat{L}_i]^\alpha \ro_\alpha \Phi \right\|^2 +2\Phi^T  W^{0T}W^i [\hat{L}_i]^\alpha \ro_\alpha \Phi}
    % \label{eq:loss-MSE-expand}\\
    % &=\sum_n \int_G dg \br{ M_{ab}\phi_n^a \phi_n^b +\mathbf{h}^{\alpha\beta}_{ab}  \ro_\beta \phi_n^b \ro_\alpha \phi_n^a + [\hat{L}_i]^\alpha \ro_\alpha \pa{\phi_n^T V^i \phi_n}}
    % \label{eq:Loss-MSE-simplified}\\
    &= \int_\mathcal{S} {d^nx\over\left|\ro x\over \ro g\right|} \br{\Phi^T\mathbf{m}_2\Phi + \ro_\alpha \Phi^T \mathbf{h}^{\alpha\beta} \ro_\beta \Phi +[\hat{L}_i]^\alpha \ro_\alpha \pa{\Phi^T \mathbf{v}^i \Phi} }
    \label{eq:Loss-MSE}
    % \mbox{Invariance: } & u\in G: \quad u^T\mathbf{g}_{ab} u = \mathbf{g}_{ab}
 \end{align}
\Eqref{eq:Loss-MSE} generalizes  the free field theories in physics \citep{polyakov2018gauge}. 
% \nd{Discuss stabilizers}
Here $\left|\ro x\over \ro g\right|$ is the determinant of the Jacobian, $W^i = W^0 \ba{\eps}^i$ and
\begin{align}
    % \mathbf{m}_2 &=W^{0T}W^0 &
    % \mathbf{h}^{\alpha\beta}(\vx) &= W^{iT}W^j [\hat{L}_i]^\alpha [\hat{L}_i]^\beta &
    % \mathbf{v}^i &= W^{0T}W^i. \\
    \mathbf{m}_2 &=W^{0T}W^0, &
    \mathbf{h}^{\alpha\beta}(\vx) &= \ba{\eps}^{iT} \mathbf{m}_2 \ba{\eps}^j [\hat{L}_i]^\alpha [\hat{L}_j]^\beta, &
    \mathbf{v}^i &= \mathbf{m}_2\ba{\eps}^{i} . 
    \label{eq:MSE-params}
\end{align}
Note that $\mathbf{h}$ has feature space indices via $[\ba{\eps}^{iT} \mathbf{m}_2 \ba{\eps}^j]_{ab}$, with index symmetry
$\mathbf{h}^{\alpha\beta}_{ab}=\mathbf{h}^{\beta\alpha}_{ba} $.
When $\mathcal{F}= \R$ (i.e. $f$ is a 1D scalar), $\mathbf{h}^{\alpha\beta}$ becomes a
a Riemannian metric 
% $\mathbf{h}^{\alpha\beta}$ 
for $\mathcal{S}$.
In general $\mathbf{h}$ combines a 2-tensor $\mathbf{h}_{ab}=\mathbf{h}_{ab}^{\alpha\beta}\ro_\alpha \ro_\beta \in T\mathcal{S} \otimes T\mathcal{S}$ with an inner product $h^T\mathbf{h}^{\alpha\beta}f $ on the feature space $\mathcal{F}$. 
% Hence $\mathbf{h}\in T\mathcal{S}\otimes T\mathcal{S}\otimes \mathcal{F}^*\otimes \mathcal{F}^*$ is a $(2,2)$-tensor, with $\mathcal{F}^*$ being the dual space of $\mathcal{F}$.   
% \nd{Is it symmetric?}
% Note that $\mathbf{h}^{\alpha\beta}(\vx)$ is a function of $\vx$ because $\hat{L}_i(\vx)$ depends on $\vx$.
%

In field theory, the motivation is to preserve spatial symmetries for the metric $\mathbf{h}$.
In \eqref{eq:Loss-MSE}, 
$\mathbf{h}$ transforms equivariantly as a 2-tensor $v\cdot \mathbf{h}^{\alpha\beta}=[v^{-1}]^\alpha_\rho [v^{-1}]^\beta_\gamma\mathbf{h}^{\rho\gamma}(\vx)$ for $v\in G$ (SI \ref{ap:invariant-loss}).
% In physics the last term in \eqref{eq:Loss-MSE} is absent, which also 
The last term in \eqref{eq:Loss-MSE} vanishes for many groups (SI \ref{ap:invariant-loss}) and 
it is also absent in physics. 
% We can show that for L-conv also when $G$ is translations $T_n$, or rotations $SO(n)$, by partial integration the last term becomes a boundary term and expected to vanish (SI \ref{ap:invariant-loss}).
% For $G=T_n$, the first term in \eqref{eq:Loss-MSE} becomes $\ro_j \phi^T \mathbf{h}^{ji} \ro_i \phi$. 
% We discuss the implications of this relation between physics and equivariant neural networks below. 

% \paragraph{Connections to physics.} 
\out{
Although \eqref{eq:Loss-MSE} was % is the MSE loss for L-conv, 
derived purely using learning and equivariance,
% But 
physicists will recognize \eqref{eq:Loss-MSE} as a generalization of the non-interacting field theory action (integral of a Lagrangian). 
In field theory, $\phi(x)$ (each input feature $[f|y]$) is called a field.
$\mathbf{m}_2$ is the ``mass matrix'' whose eigenvalues are the squares of masses of the fields.
The second term $\ro\phi^T\cdot \mathbf{h}\cdot\ro\phi = \ro^\mu \phi^a \ro_\mu \phi_a$ is the ``kinetic'' term. 
% Mass matrices appear in the standard model of particle physics in the Higgs interaction  
In physics, the loss in \eqref{eq:Loss-MSE} is used in two ways. 
When the parameters $\mathbf{m}_2$ and $\mathbf{h}$ are known, 
% the features $f(\vx)$ minimizing the loss are not known. 
% In this case, the loss lacks the $\sum_n$ and $I [f]$ is a function of $f$, with $M$ and $\mathbf{g}$ fixed.
% Therefore 
a variational procedure known as the Euler-Lagrange equations is used to find optimal $\phi(\vx)= \argmin_\phi I [\phi]$. 
When $\mathbf{m}_2$ and $\mathbf{h}$ are not known, $\phi_n$ are observational data and physicists use \eqref{eq:Loss-MSE} the same way as ML. %, as an ansatz to estimate $\mathbf{m}_2$ and $\mathbf{h}$. 
In this case \eqref{eq:Loss-MSE} is a variational ansatz used to identify parameters $\mathbf{m}_2$ and $\mathbf{h}$ such that the expected loss for the observed $\phi_n$ is minimized.}
\out{
The main difference with physics is that % with the ansatz used in physics is that the metrics on $T\mathcal{S}$ and $\mathcal{F}$ are fused into a tensor $\mathbf{h}$. 
physics assumes a simpler form in which 
% the ansatz used is much simpler and 
$\mathbf{h}$ factorizes into a tensor product $\mathbf{h} = \eta \otimes \delta $, where $\eta_{\alpha\beta}$ is a Riemannian metric on $T\mathcal{S}$ and $\delta$, usually the Kronecker delta, is the metric on $\mathcal{F}$. % (internal space or fibers in physics). 
% The other difference is that in field theory the last term in \eqref{eq:Loss-MSE} is usually absent. 
% We now show via examples that this term can be important for conservation laws in ML. 
% \nd{Talk about how L-conv allows us to create more general ansaetze for physics. }
%%%%

\textbf{Loss minimization in physics and ML}
In physics, the loss in \eqref{eq:Loss-MSE} is used in two ways. 
When the parameters $\mathbf{m}_2$ and $\mathbf{h}$ are known, 
% the features $f(\vx)$ minimizing the loss are not known. 
% In this case, the loss lacks the $\sum_n$ and $I [f]$ is a function of $f$, with $M$ and $\mathbf{g}$ fixed.
% Therefore 
a variational procedure known as the Euler-Lagrange equations is used to find optimal $\phi(\vx)= \argmin_\phi I [\phi]$. 
When $\mathbf{m}_2$ and $\mathbf{h}$ are not known, $\phi_n$ are observational data and physicists use \eqref{eq:Loss-MSE} the same way as ML. %, as an ansatz to estimate $\mathbf{m}_2$ and $\mathbf{h}$. 
In this case \eqref{eq:Loss-MSE} is a variational ansatz used to identify parameters $\mathbf{m}_2$ and $\mathbf{h}$ such that the expected loss for the observed $\phi_n$ is minimized.
% \nd{Is SGD related to RG and Callan-Symantzik? }

% Next, we will discuss how to implement \eqref{eq:L-conv-tensor0} in practice and how to learn symmetries with L-conv.

% \nd{1) physics; 2) coarse-graining}
}%%%
% We conclude with some closing remarks. 
%in the discussions sec. \ref{sec:discussion}. 
% One aspect of the physics theory which may benefit machine learning is that for $\phi_n$ which minimize the loss, the loss function is robust to small perturbations. 
\out{In physics the loss is minimized around real data points $\phi_n$. 
This idea yields a natural way to probe robustness in equivariant neural networks, discussed next. 
}
% \paragraph{Generalization error}
\textbf{Robustness and Euler-Lagrange Equation.}
Equivariant neural networks are  more robust. %, meaning equivariance should improve generalization. 
To check this, we can quantify how the network would perform for an input $\phi'= \phi + \delta \phi$ which adds a small random perturbation $\delta \phi$ to a  data point $\phi$.
Robustness to such perturbation would mean that, for optimal parameters $W^*$ , the loss function would not change,
i.e. $I [\phi';W^*]=I [\phi;W^*]$, requiring $I $ to be minimized around real data points $\phi$.

This can be cast as a variational equation $\delta I [\phi;W^*] = 0$, 
\out{
Writing $I[\phi;W]= \int d^nx \L[\phi;W]$, we have 
\begin{align}
    % I[\phi;W]&= \int d^nx \L[\phi;W], &
    \delta I [\phi;W^*] &=\int_\mathcal{S} d^nx\br{
    {\ro \L \over \ro \phi^a }\delta \phi^a + {\ro \L \over \ro (\ro_\alpha \phi^a) } \ro_\alpha (\delta \phi^a) 
    }
\end{align}
Doing a partial integration on the second term, we get a vanishing boundary term plus the familiar 
}%%%%
which yield the familiar
Euler-Lagrange (EL) equation (SI \ref{ap:generalization}). 
Therefore, for an equivariant network to be robust, i.e. $\delta I [\phi;W^*]/\delta \phi=0$, we would require the data points $\phi$ to satisfy the EL equations for  optimal parameters $W^*$:
% Writing $I[\phi;W]= \int d^nx \L[\phi;W]$, we have 
\begin{align}
    \mbox{Robustness to random noise} \Longleftrightarrow \mbox{EL: }
    \quad 
    {\ro \L \over \ro \phi^b } - \ro_\alpha {\ro \L \over \ro (\ro_\alpha \phi^b) } = 0 
    \label{eq:loss-Euler-Lagrange}
\end{align}
where the partial derivative terms appear because of the L-conv layer.

% Note that \eqref{eq:loss-Euler-Lagrange} only needs to be satisfied for random noise and adversarial attacks may violate it. 
% This could serve as a way to identify such attacks. 

\out{
Applying this to the MSE loss \eqref{eq:Loss-MSE}, \eqref{eq:loss-Euler-Lagrange} becomes
\begin{align}
    % \mbox{EL:} \quad 
    \mathbf{m}_2 \phi - \ro_\alpha \pa{|J| \mathbf{h}^{\alpha\beta} \ro_\beta \phi } - \ro_\alpha\pa{|\ro_\vx g|\mathbf{v}^i [\hat{L}_i]^\alpha }\phi =0 
    \label{eq:loss-EL-MSE}
\end{align}
where % $|\mathcal{J}| = |\ro g/\ro x|$ 
$|\ro_\vx g|$ is the determinant of the Jacobian.
For the translation group, \eqref{eq:loss-EL-MSE} becomes a Helmholtz equation
\begin{align}
    \ba{\eps}^i\mathbf{m}_2 \ba{\eps}^j \ro_i\ro_j\phi = \mathbf{m}_2 \phi
\end{align}
}%%%%
% Another idea which can be transferred from physics to equivariant neural networks is that symmetries produce conserved currents, via Noether's theorem, also discussed in \citet{kunin2020neural}. 
\out{
A major tool in physics for continuous  symmetries is conserved currents, via Noether's theorem, also discussed in \citet{kunin2020neural}. 
These could serve as a new way to learn symmetries. 
We briefly discuss them below. 
}%%%

\textbf{Equivariance and Conservation laws.}
Conserved currents, via Noether's theorem provide a way to find hidden symmetries (see also \citet{kunin2020neural}).
The idea is that the equivariance condition \eqref{eq:equivairance-general} can be written for the integrand of the loss, $\L[\phi,W]$. 
If we write the equivariance equation for infinitesimal $v_\eps $, we obtain a vector field which is divergence free. 
% For $\delta I [\phi;W^*]/\delta \phi=0$, beside the EL equations, the variations at the boundary should be orthogonal to the normal to the boundary $d^{n-1}\Sigma$. 
Since $G$ is the symmetry of the system, transforming an input $\phi\to w\cdot \phi$ by $w\in G$  the integrand should change equivariantly, meaning $\L[w\cdot \phi]= w\cdot \L[\phi] $. 
% We will now show that 
When robustness error is minimized as in \eqref{eq:loss-Euler-Lagrange}, an infinitesimal $w\approx I+\eta^i L_i$, with $\delta \phi = \eps^i \hat{L}_i \phi$, results in a conserved current (SI \ref{ap:generalization})
% the correct $L_i \in \mathfrak{g}$ appear in $I[\phi;W^*]$.
% Let's see how the Lagrangian $\L$ (integrand of the loss) changes when 
% When the generalization error is minimized (i.e. EL equations are satisfied) the term that produced the vanishing boundary term can
\begin{align}
    \mbox{Noether current: } J^\alpha &= {\ro \L \over \ro (\ro_\alpha \phi^b) } \delta \phi^b - {\ro \L \over \ro \vx^\alpha } \delta \vx^\alpha , & \delta I [\phi;W^*]=0 \quad \Rightarrow \ro_\alpha J^\alpha &= 0
\end{align}
% This is the integrand of the boundary term 
% This $J$ is called the Noether current in physics.
The above equation shows that for equivariant networks with a given symmetry, the deviation in data along the symmetry direction ($\hat{L}_i$) yields a  divergence free current $J^\alpha$, known as Noether current.
% It captures the change of the Lagrangian $\L$ along symmetry direction $\hat{L}_i$.
It also provides an alternative means to discover symmetry generators $L_i$ by minimizing $\|\ro_\alpha J^\alpha\|$. 
Note that this   Noether current is the ``stress-energy'' tensor,  associated with space (or space-time) variations $\delta \vx$ \citep{landau2013classical} (SI \ref{ap:noether}). We can  potentially design more general equivariant networks leading to other Noether currents.
% It appears here because $G$ acts on the space, as opposed to acting on feature dimensions. 
% We will discuss the implication of this in the discussions. 
\out{
In physics, there are many other types of conserved charges (including electric charge) which are not the stress- energy tensor because the underlying symmetry is not a spatial symmetry. 
This also reveals that the way equivariance  is currently defined is much less general than how it is used in physics. 
This could lead to defining more general equivariant architectures in machine learning. 
}%%%

% This is the integrand of the boundary term 
% This $J$ is called the Noether current in physics.
% It captures the flow of the Lagrangian $\L$ along symmetry direction $\hat{L}_i$.
% and the requirement for the boundary term to vanish is the statement that the Noether current be conserved (i.e. divergence free) or 
% \nd{Becomes Euler-Lagrange equations}
% It would be interesting to see if these conserved currents can be used in practice as an alternative way for identifying or discovering symmetries. 
% We conclude by remarking on similarity between ML and physics problems. 

\out{
\paragraph{Relation to physics} Although \eqref{eq:Loss-MSE} was % is the MSE loss for L-conv, 
derived purely using ML and equivariance,
% But 
physicists will recognize \eqref{eq:Loss-MSE} as a generalization of the non-interacting field theory action (integral of a Lagrangian). 
In field theory, $\phi(x)$ (each input feature $[f|y]$) is called a field.
$\mathbf{m}_2$ is the ``mass matrix'' whose eigenvalues are the squares of masses of the fields.
The second term $\ro\phi^T\cdot \mathbf{h}\cdot\ro\phi = \ro^\mu \phi^a \ro_\mu \phi_a$ is the ``kinetic'' term. 
% Mass matrices appear in the standard model of particle physics in the Higgs interaction  
In physics, the loss in \eqref{eq:Loss-MSE} is used in two ways. 
When the parameters $\mathbf{m}_2$ and $\mathbf{h}$ are known, 
% the features $f(\vx)$ minimizing the loss are not known. 
% In this case, the loss lacks the $\sum_n$ and $I [f]$ is a function of $f$, with $M$ and $\mathbf{g}$ fixed.
% Therefore 
a variational procedure known as the Euler-Lagrange equations is used to find optimal $\phi(\vx)= \argmin_\phi I [\phi]$. 
When $\mathbf{m}_2$ and $\mathbf{h}$ are not known, $\phi_n$ are observational data and physicists use \eqref{eq:Loss-MSE} the same way as ML. %, as an ansatz to estimate $\mathbf{m}_2$ and $\mathbf{h}$. 
In this case \eqref{eq:Loss-MSE} is a variational ansatz used to identify parameters $\mathbf{m}_2$ and $\mathbf{h}$ such that the expected loss for the observed $\phi_n$ is minimized.
The main difference with physics is that % with the ansatz used in physics is that the metrics on $T\mathcal{S}$ and $\mathcal{F}$ are fused into a tensor $\mathbf{h}$. 
physics assumes a simpler form in which 
% the ansatz used is much simpler and 
$\mathbf{h}$ factorizes into a tensor product $\mathbf{h} = \eta \otimes \delta $, where $\eta_{\alpha\beta}$ is a Riemannian metric on $T\mathcal{S}$ and $\delta$, usually the Kronecker delta, is the metric on $\mathcal{F}$. % (internal space or fibers in physics). 
% The other difference is that in field theory the last term in \eqref{eq:Loss-MSE} is usually absent. 
% We now show via examples that this term can be important for conservation laws in ML. 
% \nd{Talk about how L-conv allows us to create more general ansaetze for physics. }

\out{
\textbf{Loss minimization in physics and ML}
In physics, the loss in \eqref{eq:Loss-MSE} is used in two ways. 
When the parameters $\mathbf{m}_2$ and $\mathbf{h}$ are known, 
% the features $f(\vx)$ minimizing the loss are not known. 
% In this case, the loss lacks the $\sum_n$ and $I [f]$ is a function of $f$, with $M$ and $\mathbf{g}$ fixed.
% Therefore 
a variational procedure known as the Euler-Lagrange equations is used to find optimal $\phi(\vx)= \argmin_\phi I [\phi]$. 
When $\mathbf{m}_2$ and $\mathbf{h}$ are not known, $\phi_n$ are observational data and physicists use \eqref{eq:Loss-MSE} the same way as ML. %, as an ansatz to estimate $\mathbf{m}_2$ and $\mathbf{h}$. 
In this case \eqref{eq:Loss-MSE} is a variational ansatz used to identify parameters $\mathbf{m}_2$ and $\mathbf{h}$ such that the expected loss for the observed $\phi_n$ is minimized.
% \nd{Is SGD related to RG and Callan-Symantzik? }

% Next, we will discuss how to implement \eqref{eq:L-conv-tensor0} in practice and how to learn symmetries with L-conv.

% \nd{1) physics; 2) coarse-graining}
}%%%
% We conclude with some closing remarks. 
}%%%%

% \section{Training L-conv}
% \input{secs/training-L-conv}

% \section{Experiments}
% \input{secs2/experiments}

\section{Conclusion and Discussions \label{sec:discussion}}

We propose the Lie algebra convolutional neural network (L-conv), an infinitesimal version of G-conv.
% In conclusion, we found that expanding group convolutional architectures into a smaller building block, L-conv, has a number of benefits. 
% L-conv is simpler than G-conv.
% We showed that 
L-conv layers do not require encoding irreps or discretizing the group, and can be combined to approximate \textit{any} feedforward equivariant networks on compact groups. 
Additionally, L-conv's universal and simple structure allows us to discover symmetries from data. 
It is easy to implement, with a formula similar to GCN. We validated that L-conv can learn the correct Lie algebra basis in a synthetic experiment.

We discover several intriguing connections between L-conv and physics.
% But perhaps the most intriguing property of L-conv is its connection with symmetry encoding models in physics.
%
% This is significant for two reasons. 
Our derivation shows that equivariant neural networks based on L-conv lead to Noether's theorem and conservation laws. Conversely, we can also optimize Noether current  to discover symmetries. 
% We alluded to this connection, but how to do so in practice requires more research. 
Furthermore, the current equivariance formulation only pertains to ``spatial symmetries'' (i.e. $G$ acts on $\mathcal{S}$). 
In physics, more general ``internal symmetries'' are quite common (e.g. particle physics). We can potentially design more general equivariant networks with  L-conv encoding such symmetries.
\out{
First, methods for dealing with symmetry in physics acquire important meanings in ML, as our discussion on the loss function, robustness and conservation laws show. }%%%%%

Our method also shed lights on  scientific machine learning, especially for physical sciences.
Physicists generally use simple polynomial forms for the Lagrangian, or the loss function. %, similar to the MSE loss above. 
These ``perturbative'' Lagrangian lead to divergences in quantum field theory. However,  it is believed the true Lagrangian is more complicated.  
% In quantum field theory any areas of physics  from this because these ansatze diverge in many cases or fit the data poorly. 
Hence, more expressive L-conv based models can potentially  provide more advanced ansatze for solving scientific problems.
\out{
Also, for machine learning, we discussed how Noether's theorem can provide a new way to discover symmetries. 
We also found that the current equivariance formulation is linked to the conservation of the so-called stress-energy tensor.  
In physics, there are many other types of conserved charges (including electric charge) which are not the stress- energy tensor because the underlying symmetry is not a spatial symmetry. 
This also reveals that the way equivariance  is currently defined is much less general than how it is used in physics. 
This could lead to defining more general equivariant architectures in machine learning.
}

\out{

\paragraph{Relation to physics} \eqref{eq:Loss-MSE} is the MSE loss for L-conv, derived purely using machine learning and equivariance.
But physicists will recognize \eqref{eq:Loss-MSE} as a generalization of the non-interacting field theory action (integral of a Lagrangian). 
In field theory, $\phi(x)$ (each input feature $[f|y]$) is called a field.
$\mathbf{m}_2$ is the ``mass matrix'' whose eigenvalues are the squares of masses of the fields.
The second term $\ro\phi^T\cdot \mathbf{h}\cdot\ro\phi = \ro^\mu \phi^a \ro_\mu \phi_a$ is the ``kinetic'' term. 
% Mass matrices appear in the standard model of particle physics in the Higgs interaction  
The main difference is that % with the ansatz used in physics is that the metrics on $T\mathcal{S}$ and $\mathcal{F}$ are fused into a tensor $\mathbf{h}$. 
physics assumes a simpler form in which 
% the ansatz used is much simpler and 
$\mathbf{h}$ factorizes into a tensor product $\mathbf{h} = \eta \otimes \delta $, where $\eta_{\alpha\beta}$ is a Riemannian metric on $T\mathcal{S}$ and $\delta$, usually the Kronecker delta, is the metric on $\mathcal{F}$. % (internal space or fibers in physics). 
% The other difference is that in field theory the last term in \eqref{eq:Loss-MSE} is usually absent. 
% We now show via examples that this term can be important for conservation laws in ML. 
\nd{Talk about how L-conv allows us to create more general ansaetze for physics. }

\paragraph{Loss minimization in physics and ML}
In physics, the loss in \eqref{eq:Loss-MSE} is used in two ways. 
When the parameters $\mathbf{m}_2$ and $\mathbf{h}$ are known, 
% the features $f(\vx)$ minimizing the loss are not known. 
% In this case, the loss lacks the $\sum_n$ and $I [f]$ is a function of $f$, with $M$ and $\mathbf{g}$ fixed.
% Therefore 
a variational procedure known as the Euler-Lagrange equations is used to find optimal $\phi(\vx)= \argmin_\phi I [\phi]$. 
When $\mathbf{m}_2$ and $\mathbf{h}$ are not known, $\phi_n$ are observational data and physicists use \eqref{eq:Loss-MSE} the same way as ML. %, as an ansatz to estimate $\mathbf{m}_2$ and $\mathbf{h}$. 
In this case \eqref{eq:Loss-MSE} is a variational ansatz used to identify parameters $\mathbf{m}_2$ and $\mathbf{h}$ such that the expected loss for the observed $\phi_n$ is minimized.
\nd{Is SGD related to RG and Callan-Symantzik? }

% As noted, \eqref{eq:Loss-MSE-simplified} is in one way more general than the ansatz used in physics.
% We can absorb the mass matrix by defining $\phi = W^0 f$ yielding the integrand $\phi^T \phi + \ro_\alpha \phi^T\gamma^{\alpha \beta}\ro_\beta $, with a new metric $\gamma^{\alpha\beta}_{ab}=[\hat{L}_i^T\ba{\eps}^i\ba{\eps}^j\hat{L}_j]^{\alpha\beta}_{ab} %= \sum_c [\ba{\eps}^i]_{ac}\hat{L}_i^\alpha [\ba{\eps}^j]_{bc}\hat{L}_j^\beta
% $ 
% In physics, usually the mass matrix $M$ is not related to the metrics $\mathbf{g}$. 

Next, we will discuss how to implement \eqref{eq:L-conv-tensor0} in practice and how to learn symmetries with L-conv.
}%%%

\begin{ack}
% Use unnumbered first level headings for the acknowledgments. All acknowledgments
% go at the end of the paper before the list of references. Moreover, you are required to declare
% funding (financial activities supporting the submitted work) and competing interests (related financial activities outside the submitted work).
% More information about this disclosure can be found at: \url{https://neurips.cc/Conferences/2021/PaperInformation/FundingDisclosure}.
R. Walters is supported by a Postdoctoral Fellowship from the Roux Institute and NSF grants \#2107256 and \#2134178.
This work was supported in part by the U. S. Army Research Office under Grant W911NF-20-1-0334, DOE ASCR 2493 and NSF Grant \#2134274.
N. Dehmamy and D. Wang were supported by the Air Force Office of Scientific Research under award number FA9550-19-1-0354.
\end{ack}

\bibliographystyle{icml2021}
\bibliography{icml2021}

%%%%%%%%%%%%%%%%%%%%%%%%%%%%%%%%%%%%%%%%%%%%%%%%%%%%%%%%%%%%
\section*{Checklist}

%%% BEGIN INSTRUCTIONS %%%
The checklist follows the references.  Please
read the checklist guidelines carefully for information on how to answer these
questions.  For each question, change the default \answerTODO{} to \answerYes{},
\answerNo{}, or \answerNA{}.  You are strongly encouraged to include a {\bf
justification to your answer}, either by referencing the appropriate section of
your paper or providing a brief inline description.  For example:
\begin{itemize}
  \item Did you include the license to the code and datasets? \answerYes{See Section}
  \item Did you include the license to the code and datasets? \answerNo{The code and the data are proprietary.}
  \item Did you include the license to the code and datasets? \answerNA{}
\end{itemize}
Please do not modify the questions and only use the provided macros for your
answers.  Note that the Checklist section does not count towards the page
limit.  In your paper, please delete this instructions block and only keep the
Checklist section heading above along with the questions/answers below.
%%% END INSTRUCTIONS %%%

\begin{enumerate}

\item For all authors...
\begin{enumerate}
  \item Do the main claims made in the abstract and introduction accurately reflect the paper's contributions and scope?
    \answerYes{}
  \item Did you describe the limitations of your work?
    \answerNo{}
  \item Did you discuss any potential negative societal impacts of your work?
    \answerNA{}
  \item Have you read the ethics review guidelines and ensured that your paper conforms to them?
    \answerYes{}
\end{enumerate}

\item If you are including theoretical results...
\begin{enumerate}
  \item Did you state the full set of assumptions of all theoretical results?
    \answerYes{}
	\item Did you include complete proofs of all theoretical results?
    \answerYes{}
\end{enumerate}

\item If you ran experiments...
\begin{enumerate}
  \item Did you include the code, data, and instructions needed to reproduce the main experimental results (either in the supplemental material or as a URL)?
    \answerYes{}
  \item Did you specify all the training details (e.g., data splits, hyperparameters, how they were chosen)?
    \answerYes{}
	\item Did you report error bars (e.g., with respect to the random seed after running experiments multiple times)?
    \answerYes{}
	\item Did you include the total amount of compute and the type of resources used (e.g., type of GPUs, internal cluster, or cloud provider)?
    \answerYes{}
\end{enumerate}

\item If you are using existing assets (e.g., code, data, models) or curating/releasing new assets...
\begin{enumerate}
  \item If your work uses existing assets, did you cite the creators?
    \answerYes{}
  \item Did you mention the license of the assets?
    \answerNA{}
  \item Did you include any new assets either in the supplemental material or as a URL?
    \answerYes{}
  \item Did you discuss whether and how consent was obtained from people whose data you're using/curating?
    \answerNA{}
  \item Did you discuss whether the data you are using/curating contains personally identifiable information or offensive content?
    \answerNA{}
\end{enumerate}

\item If you used crowdsourcing or conducted research with human subjects...
\begin{enumerate}
  \item Did you include the full text of instructions given to participants and screenshots, if applicable?
    \answerNA{}
  \item Did you describe any potential participant risks, with links to Institutional Review Board (IRB) approvals, if applicable?
    \answerNA{}
  \item Did you include the estimated hourly wage paid to participants and the total amount spent on participant compensation?
    \answerNA{}
\end{enumerate}

\end{enumerate}

%%%%%%%%%%%%%%%%%%%%%%%%%%%%%%%%%%%%%%%%%%%%%%%%%%%%%%%%%%%%

\appendix

~\newpage

{\Large\bf
Supplementary Information
}

\section{Extended derivations and proofs
\label{ap:theory-extended}
}

\paragraph{Path-ordered Exponential %\label{ap:path-ordered} 
}
Every element $g\in G$ can be written as a product $g=\prod_a \exp[t_a^i L_i]$ using the matrix exponential \citep{hall2015lie}. 
This can be done using a path $\gamma$ connecting $I$ to $g$ on the manifold of $G$.
Here, $t_a$ will be segments of the path $\gamma$ which add up as vectors to connect $I$ to $g$.  
This surjective map can be written as a ``path-ordered'' (or time-ordered in physics \citep{weinberg1995quantum}) exponential (POE). 
In the simplest form, POE can be defined by breaking $u=\prod_a \exp[t_a^i L_i]$ down into 
% $u = P\exp[\int_\gamma ds t^i(s) L_i]\equiv P\exp[t^i L_i]$ . 
% The path-ordered exponential can be written as the product of 
infinitesimal steps of size $t_a = 1/N$ with $N\to \infty$. 
Choosing $\gamma $ to be a differentiable path, we can replace the sum over segments $\sum_a t_a$ with an integral along the path $\sum_a t_a = \int_\gamma ds t(s) ds$, where $t(s) = d\gamma/ds$ is the tangent vector to the path $\gamma$, where $s\in [0,1]$ parametrizes $\gamma$
The POE is then defined as the infinitesimal $t_a$ limit of $g=\prod_a \exp[t_a^i L_i]$. 
This can be written as
\begin{align}
    g &= P\exp\br{ \int_\gamma t^i(s) L_i ds} = \lim_{N\to \infty} \prod_{a=1}^N \pa{I+ \delta s {\gamma'}^i(s_a) L_i } \cr
    &= \int_0^{s_1}ds_0  {\gamma'}^i(s_0) L_i \int_0^{s_2}ds_1  {\gamma'}^j(s_1) L_j \dots
    \int_0^{1}ds_N  {\gamma'}^k(s_N) L_k
\end{align}

\out{
On connected $G$ we can Taylor expand $\kappa(v)$ around identity. 
\begin{align}
    \kappa(v) &= \kappa\pa{ P\exp\br{\int_\gamma dt^iL_i} } = \kappa(I) + Pt^iL_i \kappa'(I) + P{(t^iL_i)^2\over 2} \kappa''(I) + \dots \cr
    &=P \sum_n {(t^iL_i)^n\over n!} {d^n\kappa(g)\over dg^n} \bigg|_{g\to I}\cr
    &=P \sum_{n=0}^\infty {1\over n!} \prod_{k=1}^n \int_{} dt^iL_i {d^n\kappa(g)\over dg^n} \bigg|_{g\to I}
\end{align}
\nd{
We want the expansion to slowly chip away at $\kappa$, moving it toward identity. 
Starting from G-conv, we cn either expand $f$ or $\kappa$. 
If we expand $\kappa$ the derivatives will be on $\kappa$. 
If we expand the $v$ in $f(gv)$, can we express the $\kappa(v)$ in terms of product of integrals of small $v_\eps$? Can we do that from the beginning? 
What happens if we break the integral down into these smaller pieces? 
Can we prove that it will remain just one integral, rather than the product of many similar integrals, which would be spurious? 
If we think about it as a propagator, then the intermediate integrals will not result in extra integrals. 
}

We can use $u=u_\eps v$ to expand $\kappa(v)$ as
\begin{align}
    \kappa(v) &= \kappa\pa{(I+\eps^iL_i) u} \approx \kappa(u) + \eps^iL_i u {d \kappa(u)\over du} + O(\eps^2) 
    \label{eq:kappa-expand}
\end{align}
While $\kappa(v)$ can be any function on $G$ (with small restrictions discussed in \citet{kondor2018generalization}), we can use \eqref{eq:kappa-expand} to replace $\kappa$ with a new kernel $\kappa_1$ which has support on most of $G$, except for a small neighborhood $\eta$ when $\|\eps\| < \eta$. 
In other words $\kappa(v) \approx \kappa_1(u) + \eps^iL_i u \kappa_1'(u) $. 
in \eqref{eq:G-conv} and rewrite G-conv as follows
\begin{align}
    [\kappa \star f](g) &\approx \int_{G/H_\eta} dv \br{\kappa(v) }
\end{align}
where $H_\eta$ is a subgroup spanned by

For continuous symmetry groups, using the Lie algebra %in the G-conv architecture 
allows us to approximate G-conv without having to integrate over the full group manifold, thus alleviating the need to discretize or sample the group. 
% This leads to a simple formulation for Lie algebra convolution akin to graph convolutional networks \citep{kipf2016semi}.

The following proposition establishes the connection between G-conv and Lie algebras.

\begin{proposition}
    Let $G$ be a Lie group, $f:\mathcal{S}\to \mathcal{F}$ a differentiable equivariant function. % and 
    %$\rho: G \to \mathcal{F}$ a representation of $G$ on space $\mathcal{F}$. 
    If a convolution kernel $\kappa: G \to \mathrm{Hom}(\mathcal{F}, \mathcal{F}') $ has support only on an infinitesimal neighborhood $\eta$ of identity, a G-conv layer of \eqref{eq:G-conv} can be written in terms of the Lie algebra. 
% \nd{either add surjective or assert}
\end{proposition}
\begin{proof}
% This is easily seen by
Consider \eqref{eq:G-conv} 
% Equivariance of $f$ yields $f(v^{-1}u) = \rho[v]\cdot f(u)$.
with $\pi: G \to \mathrm{GL}(\mathbb{R}[\mathcal{S}])$ a representation of $G$. 
Linearization over $\eta$ yields $\pi[v] \approx I + \eps^i L_i$ with $L_i \in \mathrm{Hom}(\mathbb{R}[\mathcal{S}],\mathbb{R}[\mathcal{S}])$ being a representation for the basis of the Lie algebra of $G$. 
% The inverse $u^{-1} \approx I - \eps^i L_i + O(\eps^2)$. 
Since $\kappa $ %in \eqref{eq:G-conv} 
has support only in an $\eta$ neighborhood of identity, fixing a basis $L_i$, we can reparametrize $\tilde{\kappa}(\eps)\equiv \kappa(I+\eps^i L_i)$ as a function over the Lie algebra $\tilde{\kappa }:\mathfrak{g}\to \mathrm{Hom}(\mathcal{F},\mathcal{F}') $. 
The Haar measure is also replaced by a volume element in the tangent space $d\mu(u) \to d\eps$. 
\eqref{eq:G-conv} becomes
% Denoting $g_0 = g(I) = \tilde{g}(0)$ and its 
\out{
\begin{align}
    % g(I+\eps \cdot L)& = \tilde{g}(\eps) \approx g_0 + \eps \cdot g'_0, \qquad  g'_0 = \left. \del_\eps \tilde{g}(\eps)\right|_{\eps \to 0}\cr %\left.{\ro g(u)\over du} \right|_{u\to I} \cr
     %= g_0 + \eps \cdot g'_0\cr 
    &(\kappa \star f)(\vx)  \approx \int_G %{|\eps| <\eta }  
    \tilde{\kappa}(\eps) (I+\eps\cdot L^\rho)f\pa{(I-\eps\cdot L^\pi) \vx} d \eps \cr&
    \approx %\pa{\ba{\eps}_0 I - \ba{\eps}\cdot L }
    \left.
    \pa{W^0 I + W\cdot\left[L^\rho - L^\pi \vx\cdot \del_z\right]} f(z)\right|_{z\to \vx} 
    \label{eq:G-conv-expand} \\
    % &\left.
    % \pa{W^0 I - \sum_i W^i L_i \vx\cdot \del_z} f(z)\right|_{z\to \vx} 
    % \label{eq:G-conv-expand} \\
    &W^0 \equiv \int_G \tilde{\kappa}(\eps) d\eps,  \quad
    % \ba{\eps}_i
    W^i\equiv  \int_G \eps^i \tilde{\kappa}(\eps) d\eps. %\nonumber
    \label{eq:G-conv-gbar} %\cr 
    % c &= Vol(n_L-1)\eta^{n_L-1}
\end{align}
where $L_i^\rho$ and $L_i^\pi$ are the Lie algebra basis in the $\rho$ and $\pi$ representations, respectively. 
}%%%
\begin{align}
    % g(I+\eps \cdot L)& = \tilde{g}(\eps) \approx g_0 + \eps \cdot g'_0, \qquad  g'_0 = \left. \del_\eps \tilde{g}(\eps)\right|_{\eps \to 0}\cr %\left.{\ro g(u)\over du} \right|_{u\to I} \cr
     %= g_0 + \eps \cdot g'_0\cr 
    &(\kappa \star f)(\vx)  \approx \int_G %{|\eps| <\eta }  
    \tilde{\kappa}(\eps) %(I+\eps\cdot L^\rho)
    f\pa{(I-\eps\cdot L) \vx} d \eps \cr&
    \approx %\pa{\ba{\eps}_0 I - \ba{\eps}\cdot L }
    \left.
    \pa{W^0 I - W\cdot L \vx\cdot \del_z} f(z)\right|_{z\to \vx} 
    \label{eq:G-conv-expand} \\
    % &\left.
    % \pa{W^0 I - \sum_i W^i L_i \vx\cdot \del_z} f(z)\right|_{z\to \vx} 
    % \label{eq:G-conv-expand} \\
    &W^0 \equiv \int_G \tilde{\kappa}(\eps) d\eps,  \quad
    % \ba{\eps}_i
    W^i\equiv  \int_G \eps^i \tilde{\kappa}(\eps) d\eps. %\nonumber
    \label{eq:G-conv-gbar} %\cr 
    % c &= Vol(n_L-1)\eta^{n_L-1}
\end{align}
where $L_i$ are the Lie algebra basis. 
% where $L_i^\rho$ and $L_i^\pi$ are the Lie algebra basis in the $\rho$ and $\pi$ representations, respectively. 

\end{proof}

}

\paragraph{L-conv derivation}
Let us consider what happens if the kernel in G-conv \eqref{eq:G-conv} is localized near identity. 
Let $\kappa_I (u) = c \delta_\eta (u)$, with constants $c\in \R^{m'} \otimes \R^{m}$ and kernel $\delta_\eta(u) \in \R$ which has support only on on an $\eta$ neighborhood of identity, meaning $\delta_\eta(I+\eps^iL_i) \to 0 $ if $|\eps|>\eta$.
This allows us to expand G-conv in the Lie algebra of $G$ to linear order. 
With $v_\eps = I+\eps^i L_i $, we have
% $\delta_\eta \star f$ may be expanded  near identity as
\begin{align}
    [\delta_\eta \star f](g) &= \int_G dv \delta_\eta (v) f(gv )
    =\int_{\|\eps\| <\eta } dv_\eps  \delta_\eta (v_\eps) f(gv_\eps )\cr
    &= \int d\eps \delta_\eta (I+\eps^i L_i) f(g(I+\eps^i L_i))\cr
    &= \int d\eps \delta_\eta (I+\eps^i L_i) f(g+\eps^i gL_i)\cr
    &= \int d\eps \delta_\eta (I+\eps^i L_i) \br{f(g)+ \eps^i g L_i \cdot {d\over dg}  f(g) + O(\eps^2) } %\bigg|_{u\to g}
    \cr
    &= \int d\eps \delta_\eta (I+\eps^i L_i) \br{I+ \eps^i g L_i \cdot {d\over dg} } f(g) + O(\eta^2) \cr
    &= W^0\br{I + \ba{\eps}^i g L_i\cdot {d\over dg} } f(g)  + O(\eta^2)
    \label{eq:L-conv-basic} 
\end{align}
where $d\eps $ is the integration measure on the Lie algebra induced by the Haar measure $dv$ on $G$.
The $O(\eta^2)$ term arises from integrating the $O(\eps^2)$ terms.
To see this, note that for an order $p$ function 
$\phi(\epsilon)$, for $|\epsilon| < \eta$, $|\phi(\epsilon)| < \eta^p C$ (for some constant $C$).
Substituting this bound into the integral over the kernel $\delta_\eta$ we get

\begin{align}
    &\left|\int d\epsilon \delta_\eta (I+\epsilon^i L_i) \phi(\epsilon)\right| \leq \int \left| d\epsilon \delta_\eta (I+\epsilon^i L_i) \phi(\epsilon)\right| \cr 
    &< \int \left| d\epsilon \delta_\eta (I+\epsilon^i L_i)\eta^p C\right| \leq \eta^p C \int \left| d\epsilon \delta_\eta (I+\epsilon^i L_i)\right| \leq \eta^p C
    \label{eq:O-eta-p}
\end{align}

% where we used the fact that the Haar measure $dv$ induces an integration measure on the Lie algebra, which we denoted by $d\eps$.
In matrix representations, $g L_i\cdot {df\over dg} = [g L_i]_\alpha^\beta {df\over dg_\alpha^\beta} = \Tr{[g L_i]^T {df\over dg} }$.
% $g L_i\cdot {df\over dg} = \sum_{a,b}[g L_i]_{ab} {df\over dg_{ab}} = \Tr{[g L_i]^T {df\over dg} }$.
Note that in $g(I+\eps^iL_i)\vx_0$, the $gL_i\vx_0 = \hat{L}_i{(g)}\vx$ come from the pushforward $\hat{L}_i{(g)}=gL_ig^{-1} \in T_gG$.
%of $L_i\in T_IG$ from the tangent space at $I$ to the tangent space at $g$, meaning $gL_ig^{-1} \in T_gG$. 
% This is seen from the fact that $gv_\eps$ is moving from $g$ by $\eps L_i$, hence in $T_gG$. 
% We will show this explicitly in following examples. 
Here 
\begin{align}
    W^0 &= %\int_G dv \kappa_0(v) = 
    c \int d\eps \delta_\eta (I+\eps^iL_i) \in \R^{m'} \otimes \R^m, &
    \ba{\eps}^i &= \frac{\int d\eps \delta_\eta (I+\eps^iL_i) \eps^i}{\int d\eps \delta_\eta (I+\eps^iL_i)} \in \R^{m} \otimes \R^m
\end{align}
with $\|\ba{\eps}\|<\eta $.
When $\delta_\eta$ is normalized, meaning $\int_G \delta_\eta(g)dg=1 $, we have $W^0 = c$ and 
\begin{align}
    \ba{\eps}^i &= \int d\eps \delta_\eta (I+\eps^iL_i) \eps^i \nonumber
\end{align}
Note that with $f(g)\in \R^m$, each $\eps^i \in\R^{m} \otimes \R^m $ is a matrix. 
With indices, $f(gv_\eps) $ is given by 
\begin{align}
    [f(gv_\eps)]^a &= \sum_b f^b(g(\delta^a_b + [\eps^i]^a_b L_i)) 
\end{align}
Similarly, the integration measure $d\eps$, which is induced by the Haar measure $dv_\eps \equiv d\mu(v_\eps)$, is a product $\int d\eps = \int |J| \prod d [\eps^i]^a_b$, with $J=\ro v_\eps /\ro \eps $ being the Jacobian. 

\Eqref{eq:L-conv-basic} is the core of the architecture we are proposing, the Lie algebra convolution or \textbf{L-conv}. 

\textbf{L-conv Layer} 
In general, we define Lie algebra convolution (L-conv) as follows \begin{align}
    Q[f](g)%&= \br{W^0 + W^i g L_i\cdot {d\over dg} } f(g) 
    &= W^0 \br{I + \ba{\eps}^i g L_i\cdot {d\over dg} }f(g) \cr 
    &= [W^0]_b f^a\pa{g\pa{\delta^b_a+ [\ba{\eps}^i]_a^b L_i}} +O(\ba{\eps}^2)
    \label{eq:L-conv-def1}
\end{align}
% where for $f(g)\in \R^m $, $W^0 $ is $m'\times m$ and $\ba{\eps}^i = \tilde{W}_0^{-1}W^i $ is $m\times m$, with eigenvectors $\|\ba{\eps}\|<\eta $, and found using the Moore-Penrose inverse $\tilde{W}_0^{-1} = (W^{0T}W^0)^{-1}W^0$.
\paragraph{Extended equivariance for L-conv}
From \eqref{eq:L-conv-def1} we see that $W^0$ acts on the output feature indices. 
Notice that the equivariance of L-conv is due to the way $gv_\eps = g(I+\ba{\eps}^iL_i)$ appears in the argument, since for $u\in G$ 
\begin{align}
    u\cdot Q[f](g) = W^0 f(u^{-1}gv_\eps) = W^0 [u\cdot f](gv_\eps)
\end{align}
Because of this, replacing $W^0$ with a general neural network which acts on the feature indices separately will not affect equivariance. 
For instance, if we pass L-conv through a neural network to obtain a generalized L-conv $Q_\sigma$, we have 
\begin{align}
    Q_\sigma[f](g) &= \sigma( W f(gv_\eps) + b) \cr 
    u \cdot Q_\sigma[f](g) &= Q_\sigma[f](u^{-1}g)= \sigma( W f(u^{-1}gv_\eps) + b)\cr 
    &= \sigma( W [u\cdot f](gv_\eps) + b) = Q_\sigma[u\cdot f](g)
\end{align}
Thus, L-conv can be followed by any nonlinear neural network as long as it only acts on the feature indices (i.e. $a$ in $f^a(g)$) and not on the spatial indices $g$ in $f(g)$. 

\out{
\subsection{Approximating G-conv using L-conv \label{ap:G-conv2L-conv} }
We now show that G-conv \eqref{eq:G-conv} can be approximated by composing L-conv layers.

\textbf{Universal approximation for kernels} 
Using the same argument used for neural networks \citep{hornik1989multilayer,cybenko1989approximation}, we may approximate any kernel $\kappa(v)$ as the sum of a number of kernels $\kappa_k$ with support only on a small $\eta$ neighborhood of $u_k \in G $ to arbitrary accuracy. 
The local kernels can be written as $\kappa_k (v) = c_k \delta_\eta ( u_k^{-1}v)$, with $\delta_\eta(u)$ as in \eqref{eq:L-conv-basic}  
% where $\delta_\eta(u) \in \R$ has support near identity ($\delta_\eta(I+\eps^iL_i) \to 0 $ if $|\eps|>\eta$) 
and constants $c_k\in \R^{m'} \otimes \R^{m}$. 
Using this, G-conv \eqref{eq:G-conv} becomes
\begin{align}
    [\kappa \star f](g)& = \sum_k [\kappa_k \star f](g) = \sum_k c_k \int dv \delta_\eta(u_k^{-1} v) f(gv) \cr
    &= \sum_k c_k \int dv \delta_\eta(v ) f(gu_k v) = \sum_k c_k [\delta_\eta \star f] (gu_k).
\end{align}
As we showed in \eqref{eq:L-conv-basic}, $[\delta_\eta \star f](g)$ is the definition of L-conv. 
Next, we need to show that $[\delta_\eta \star f] (gu_k)$ can also be approximated with $[\delta_\eta \star f] (g)$ and hence L-conv. 
For this we use $u_k =v_k (I+\eps_k^iL_i) $ to find $ v_k \in G$ which are closer to $I$ than $u_k$.
Taylor expanding $F_\eta = \delta_\eta\star f$ in $\eps$ we obtain
\begin{align}
    F_\eta (gu_k)& = F_\eta \pa{gv_k (I+\eps_k^iL_i) } = F_\eta (gv_k) + \eps_k^i u L_i \cdot {dF_\eta (u)\over du}\bigg|_{u\to gv_k} + O(\eps^2) 
    % \label{eq:L-conv-uk}
    \cr
    [\kappa \star f](g)& = \sum_k c_k F_\eta (gu_k) = \sum_k \br{c_k + c_k \eps_k^i u L_i \cdot {d\over du} }F_\eta (u) \bigg|_{u\to gv_k} %+ O(\eps^2)
    \cr
    &=\sum_k \br{W_k^0 + W_k^i u L_i\cdot {d\over du} }F_\eta (u) \bigg|_{u\to gv_k}
    =\sum_k Q_k[F_\eta ](gv_k) 
    \label{eq:L-conv-uk}
\end{align} 
% \eqref{eq:L-conv-uk} 
%Here $Q[f](g) = f(g(I+\eps^iL_i))$ 
% $Q_k[F_\eta](gu_k)$
Using \eqref{eq:L-conv-uk} we can progressively remove the $u_k$ as $F_\eta(gu_k) \approx Q^n_k[\dots [Q^1_k[F_\eta]]](g)$, i.e. an $n$ layer L-conv.  
Thus, we conclude that any G-conv \eqref{eq:G-conv} can be approximated by multilayer L-conv. 
}%%%%
\subsection{Approximating G-conv using L-conv \label{ap:G-conv2L-conv} }
\out{
{\bf\thref{thm:Gconv2Lconv} }(G-conv from L-convs). 
{\it 
% \begin{theorem}[G-conv from L-convs]
    % \label{thm:Gconv2Lconv}
    G-conv \eqref{eq:G-conv} can be approximated using L-conv layers.
    }
% \end{theorem}

\begin{proof}
    The procedure involves two steps, as illustrated in Fig. \ref{fig:G-conv2L-conv}: 1) approximate the kernel using localized kernels as the $\delta_\eta$ in L-conv; 2) move the kernels towards identity using multiple L-conv layers. 
    The following lemma outline the details.
\end{proof}
}%%%%

% To do so, we first note that any function $\kappa(v)$ can be approximated to arbitrary accuracy using a set of local kernels $\delta_\eta $ (using the universal approximation theorem of neural networks \citep{hornik1989multilayer,cybenko1989approximation}).
{ \bf%\thref{lem:kernel-approx} 
Lemma 1 
}(Approximating the kernel).
{\it 
% \begin{lemma}[Approximating the kernel] 
    % \nd{Robin, please check:}
    Let the kernel $\kappa:G\to \mathcal{F}'\otimes \mathcal{F}$ with $\int_G \|\kappa(g)\|^2 dg < \infty$ be continuously differentiable with $\|d\kappa(g)/dg\|^2<\xi^2 $, and with compact support over $G_0\subset G$.  
    Let $\kappa_k(g)=c_k\delta_\eta(u_k^{-1}g) $ be a set of kernels with support on an $\eta$ neighborhood of $u_k\in G$.  
    Then, $\exists c_k \in  \mathcal{F}'\otimes \mathcal{F}, u_k \in G$ such that $\tilde{\kappa}$ approximates $\kappa$, meaning
    $\int_G \|\kappa(g)-\tilde{\kappa}(g)\|^2dg < \zeta^2 $ for arbitrary small $\zeta\in \R_+$.
    }%%%
% \end{lemma}
% \textbf{Approximating the kernel}

\begin{proof}
The intuition is similar to the universal approximation  theorem for neural networks \citep{hornik1989multilayer,cybenko1989approximation}, only generalized to  a group manifold instead of $\R$. 
Let $B_0 $ be the set of $v_\eps = I+\eps^iL_i\in \mathfrak{g}$, with $\|\eps\|^2<\eta^2$.
Choose a finite set of $u_k\in G$ such that the neighborhoods $B_k = u_k B_0 \subset G$, 
% Consider a set of non-overlapping neighborhoods $B_k \subset G_0$ of size $\eta \ll 1$, meaning $\int_{B_k}dg =\eta$, 
and such that
$\bigcup_k B_k = G_0$.
\out{
$\bigcup_k B_k = G_0/\phi$ where $\phi$ is a measure zero set (i.e. $B_k$ make a dense covering of the support of $\kappa$) \nd{precise?}.
}%%
We can show that, for small enough $\eta$, the kernel does not change more than $\xi$ over each $B_k$, allowing us to replace it with a constant localized kernel $\kappa_k$ with support only on $B_k$. 
To see this, consider $g\in B_k$ and $v_\eps= I+\eps^iL_i \in\mathfrak{g} $, such that $v_\eps g \in B_k$.
Let $\gamma$ be a path connecting $g$ to $v_\eps g$, via $u(s) = (I+s\eps^iL_i)g$, with $s\in [0,1]$. 
Using the triangle inequality we have 
\begin{align}
    \|\kappa(g)-\kappa(v_\eps g)\|^2& = \norm{\int_\gamma ds {d\kappa(u(s))\over ds} }^2 \leq
    \int_\gamma ds\norm{ {d\kappa(u(s))\over ds} }^2 \cr &
    < \int_0^1 ds \norm{dg\over ds}^2 \xi^2 = \|\eps\|^2 \xi^2 
\end{align}
where $\|\eps\|^2 \leq \eta^2$ because $v_\eps g\in B_k$. 
This means that if we replace $\kappa(g)$  with $\kappa(u_k)$ for any $u_k\in B_k$, our error in approximating $\kappa$ over $B_k$ is less than $\eta^2\xi^2 $. 
Setting $\kappa_k(g) =\kappa(u_k) \delta_\eta(u_k^{-1}g)$ with the localized kernels   $\delta_\eta(g)$ being any continuously differentiable distribution such that $\int_{B_k} dg \delta_\eta(u_k^{-1}g)=1$ on all $B_k$, we get 
% $=1/|B_k|$ if $g\in B_k$ and $0$ otherwise, we get 
\begin{align}
    \int_G dg \|\kappa(g)-\tilde{\kappa}(g)\|^2 = \sum_k \int_{B_k} dg \|\kappa(g)-\kappa_k(g)\|^2 < \sum_{k} |B_k|  \eta^2 \xi^2 = |G_0|\eta^2 \xi^2
\end{align}
where $|G_0|$ is the volume of the support of $\kappa$
% where $|G_0|$ is the size of the support $G_0\subset G$ of $\kappa$.
\out{
This simply follows from the universal approximation theorem for neural networks \citep{hornik1989multilayer,cybenko1989approximation} and the same approximation errors apply. 
$\delta_\eta(g)$ can be written as the sum of two sigmoid functions 
$\delta_\eta(g) =\mathrm{sig}(v_\eta g)+\mathrm{sig}(v_\eta^{-1}g) $. 
}%%%%
\end{proof}

Thus, we can approximate a large class of kernels as $\kappa(g) \approx \sum_k \kappa_k(g)$ where the local kernels $\kappa_k (g) = c_k \delta_\eta (  u_k^{-1}g)$ have
support only on an $\eta$ neighborhood of $u_k \in G $. 
Here $c_k\in \R^{m'} \otimes \R^{m}$ are constants and $\delta_\eta(u)$ is as in \eqref{eq:L-conv}.  
Using this, G-conv \eqref{eq:G-conv} becomes
\begin{align}
    [\kappa \star f](g)
    % & = \sum_k [\kappa_k \star f](g) 
    &= \sum_k c_k \int dv \delta_\eta(u_k^{-1} v) f(gv) 
    % &= \sum_k c_k \int dv \delta_\eta(v ) f(gu_k v) 
    = \sum_k c_k [\delta_\eta \star f] (gu_k).
    \label{eq:c-k-delta0}
\end{align}
% As we showed in \eqref{eq:L-conv-basic}, $[\delta_\eta \star f](g)$ is the definition of L-conv. 
% \ry{put this as a theorem, show as an approximation error}
The kernels $\kappa_k$ are localized around $u_k$, whereas in L-conv the kernel is around identity. 
We can compose L-conv layers to move $\kappa_k$ from $u_k$ to identity. 

\out{
Can we write the movement of a $\delta$ on the group explicitly as the action of the exp map on a $\delta$ near identity? 
For a group element it is clear, but for a function over the group? 
Should we use the fact that we have convolution, i.e. integral over the group? 
With $v = P\exp[\int_\gamma dt^i L_i]$ we have 
\begin{align}
    f(g) &= \int_G du f(u)\delta(u^{-1}g) \cr 
    f(vg) &= \int_G du f(u)\delta(u^{-1}vg) = \int_G du' f(v^{-1}u')\delta({u'}^{-1}g) 
\end{align}
Can we use the integral 
}

\textbf{Lemma 2 }(Moving kernels to identity).
{\it
% \begin{lemma}[Moving kernels to identity]
    The local kernel $\kappa_k$ can be moved near identity using a multilayer L-conv.  
% \end{lemma}
}%%%%

\begin{proof}
In \eqref{eq:c-k-delta0}, write $u_k = v_\eps u'_k$, with $v_\eps = I+ \eps^i L_i \in \mathfrak{g}$. 
We have  
\begin{align}
    \delta_\eta(u_k^{-1} v) &= \left.\br{I+ \eps^i g L_i \cdot {d\over dg} }\delta_\eta(g)\right|_{g\to {u'}_k^{-1} v} + O(\eps^2) \approx Q_\eps[\delta_\eta]({u'}_k^{-1} v)
\end{align}
where $Q_{\eps} = I-\eps^i \hat{L}_i$ is an L-conv layer with $W^0=I$. 
This means that applying one L-conv layer with the parameters above moves the localized kernel along $v_\eps $ on $G$.   
Iterating this further, write $u_k$ as the product of a set of small group elements $u_k = \prod_{a=1}^p v_a$, with $v_a = I+ \eps_a^i L_i \in \mathfrak{g}$. 
Defining L-conv layers $Q_a = I-\eps_a^i \hat{L}_i $,
we can write 
\begin{align}
    \kappa_k (g) &\approx c_k Q_p \circ \cdots \circ Q_1 \circ \delta_\eta (g)  
\end{align}
meaning $\kappa_k$ localized around $u_k$ can be written as a $p$ layer L-conv acting on a kernel  $\delta_\eta(g)$, localized around the identity of the group. 
\out{
Thus, by applying multiple L-conv layers we can find $Q_p \circ \dots Q_1\circ \kappa_k\approx c_k\delta_\eta $. 
This can always be done using a set of $v_{\eps_a} = (I+\ba{\eps}_a^iL_i)$ such that $u_k \approx \prod_{a=1}^p v_{\eps_a}$. 
}%%%%
With $\|\eps_a\|<\eta$, the error in $u_k$ is $O(\eta^{p+1})$.
\end{proof}

Thus, we conclude that any G-conv \eqref{eq:G-conv} can be approximated by multilayer L-conv. 
We can take this result even further, following the main theorem in \citet{kondor2018generalization}, and show that any feedforward equivariant neural network can be approximated using multilayer L-conv with nonlinearities.

\textbf{Equivariance of nonlinearity}
Pointwise nonlinearities give equivariant maps between scalar feature maps. %\nd{ref instead?} 
To see this, 
let $\sigma:\R \to \R $.  We extend $\sigma: \gF \to \gF$ by applying $\sigma$ component-wise. Let $f:\mathcal{S}\to \mathcal{F}$ be a scalar feature map (i.e., $g\cdot f(\vx) = f(g^{-1}\vx)$).  Then 
\[
g \cdot (\sigma \circ (f))(\vx) = 
\sigma \circ (f)( g^{-1} \vx) = 
\sigma \circ (g \cdot f)(\vx).
\]
Since the composition of equivariant maps is equivariant, given equivariant linear mapping $Q:\mathcal{F}^\gS\to \mathcal{F}^{\prime\gS}$ (i.e. $g\cdot Q[f]= Q[g\cdot f]$), the layer $f \mapsto \sigma \circ Q[f]$ is equivariant.
% Then, $\sigma(Q[f](\vx))$ is still a scalar and for $g\in G$ we have 
% \begin{align}
%     % g\cdot \sigma\circ f^a(\vx)& = \sigma\circ f^a(g^{-1}\vx) = \sigma\circ g\cdot f^a(\vx) \cr
%     g\cdot \sigma\circ Q[f](\vx)& = \sigma\circ Q[f](g^{-1}\vx) = \sigma\circ g\cdot Q[f](\vx) 
% \end{align}
% meaning, point-wise nonlinearity $\sigma$ is equivariant for scalar $Q[f]$. 
Hence we have the corollary:

\begin{corollary}%[Feedforward NN]
    %Multilayer $G$-conv with pointwise non-linearity can be approximated by multilayer $L$-conv with pointwise non-linearity.
    Assume $G$ is compact and acts on $\gS$ transitively.  Then any equivariant feedforward neural network (FNN) can be approximated using multilayer L-conv with point-wise nonlinearities.  Without the compactness and transitivity hypothesis, multilayer L-conv with pointwise non-linearities can approximate multilayer G-conv with pointwise non-linearities.  
\end{corollary}
\begin{proof}
%This follows from Theorem 1 in  \citet{kondor2018generalization} and the our Theorem \ref{thm:Gconv2Lconv}. 
A FNN is defined as $\sigma_p \circ F_p[\cdots [ \sigma_1\circ  F_1  [f]](\vx)$ where $F_k$ are linear and $\sigma_k$ are point-wise nonlinearities.
By Theorem 1 of \citet{kondor2018generalization}, any linear equivariant layer is a G-conv, which by Theorem \ref{thm:Gconv2Lconv} can be approximated by multilayer L-conv.
Therefore, multilayer L-conv with nonlinearity can approximate any equivariant FNN.
\end{proof}

% As a final note, we should point out 
Finally, note 
that to our knowledge it is not known whether \textit{every equivariant function} can be approximated by equivariant FNN for a Lie group $G$. 
Hence, the corollary above is \textit{not} a universal approximation theorem for equivariant scalar functions in terms of L-conv.  However, it does show that multilayer $L$-conv is equally expressive as other equivariant networks. 

\out{
We conducted small controlled experiments to verify how multilayer L-conv approximates G-conv (SI \ref{ap:experiments}). 
We briefly discuss them here. 

\paragraph{Learning symmetries using L-conv}
\citet{rao1999learning} introduced a basic version of L-conv and showed that it can learn 1D translation and 2D rotation. 
% They showed that t
The learned 
% infinitesimal generator 
$L_i$ for 1D translation reproduced finite translation well using $(I+\eps L)^N$, which is $N$ recursive L-conv layers. 
Hence, their results prove that L-conv can be used to approximate CNN, as well as $SO(2)$ G-conv. 
We discuss the details of L-conv approximating CNN below in sec. \ref{sec:approx}.
We also conducted more complex experiments using recursive L-conv to learn large rotation angle between two images 
% \nd{revise if experiments moved to SI}
(SI \ref{ap:experiments}).
% These experiments required L-conv to approximate rotations by large angles, which it did successfully. 
% The learned $L$ also has the right structure (Fig. \ref{fig:L-so2-combined}).
\out{
% We can use L-conv for learning the Lie algebra as well. 
% In fact, the architecture used in \citet{rao1999learning} is a basic version of L-conv with $W^0 = 1$. 
They show that with a small fixed $\ba{\eps^i}$ they could learn learn the single $L_i$ for continuous 1D translations and for 2D rotations.
Indeed, the architecture used in \citet{rao1999learning} is a special case of L-conv with $W^0 = 1$ and $\ba{\eps}^i\in \R$. 
We conducted a more advanced experiment with L-conv learning rotation angles in pairs of random images (SI \ref{ap:exp-L-multi})
}%
% experiments for with $G=SO(2)$.
% In the first test, we used fixed small rotation angle $\pi/10$ and used 
Figure \ref{fig:L-so2-combined} shows the learned $L$. 
Left shows $L\in so(2)$ learned using L-conv in $3$ recursive layers to learn rotation angles between a pair of $7\times 7$ random images $\vf$ and $R(\theta) \vf$ with $\theta \in [0,\pi/3)$. 
Middle and right of Fig. \ref{fig:L-so2-combined} are experiments with fixed small rotation angle $\theta = \pi/10$ (SI \ref{ap:exp-L-small}).
Middle is the $L$ learned using L-conv and right is using the exact solution $R = (YX^T)(X^TX)^{-1}$. 
While the middle $L$ is less noisy, it does not capture weights beyond first neighbors of each pixel. 
% Right shows the generator calculated using the exact solution to the linear regression problem with $\theta = \pi/10$. 
% The SGD solutions using L-conv are less noisy and capture more details.
(also see SI \ref{ap:experiments} for a discussion on symmetry discovery literature.)

L-conv can potentially replace other equivariant layers in an architectures.
We conducted limited experiments for this on small image datasets (SI \ref{ap:exp-image}). 
L-conv allows one to look for potential symmetries in data which may have been scrambled or harbors hidden symmetries. 
% Next, we will provide some examples of explicit forms of L-conv.
Since many datasets such as images deal with discretized spaces, we first need to derive how L-conv acts on such data, discussed next. 
% Next, we discuss the form of L-conv on discrete data. 

\begin{figure}
    \centering
    \includegraphics[width=.17\linewidth]{figs2/L-conv-recur-3.pdf}
    \includegraphics[width=.82\linewidth]{figs2/L-so2-combined.pdf}
    \caption{
    \textbf{Learning the infinitesimal generator of $SO(2)$} 
    Left shows the architecture for learning rotation angles between pairs of images (SI \ref{ap:exp-L-multi}). 
    Next to it is the $L$ learned using recursive L-conv in this experiment. %to learn rotation angle between pairs of images.  
    %in $3$ recursive layers to learn rotation angles between a pair of $7\times 7$ random images $\vf$ and $R(\theta) \vf$ with $\theta \in [0,\pi/3)$. 
    Middle $L$ is learned using a fixed small rotation angle $\theta = \pi/10$, and
    % While this $L$ is less noisy, it does not capture weights beyond first neighbors of each pixel. 
    right shows $L$ found using the numeric solution from the data. 
    % The SGD solutions using L-conv are less noisy and capture more details.
    }
    \label{fig:L-so2-combined}
\end{figure}
}%%%%

Since many datasets such as images deal with discretized spaces, we first need to derive how L-conv acts on such data, discussed next.

\subsection{Example of continuous L-conv \label{ap:examples-continuous}}

% \subsection{Interpretation and Examples \label{sec:interpret-continuous} }
% \paragraph{General case} 
The $gL_i\cdot df/dg$ in \eqref{eq:L-conv} can be written in terms of partial derivatives $\ro_\alpha f(\vx) = \ro f/\ro \vx^\alpha$. 
In general, using $\vx^\rho = g^\rho_\sigma \vx_0^\sigma$, we have
% ${df(g\vx_0)\over dg^\alpha_\beta} =\vx_0^\beta \ro_\alpha f(\vx) $, and so
% \ry{add a cartoon}
\begin{align}
    % \vx^\mu &= g^\rho_\sigma \vx_0^\sigma &
    {df(g\vx_0)\over dg^\alpha_\beta}
    &= {d(g^\rho_\sigma \vx_0^\sigma) \over dg^\alpha_\beta}\ro_\rho f(\vx) = \vx_0^\beta \ro_\alpha f(\vx)
    \label{eq:dgx0-dg}
    \\
    gL_i\cdot {df\over dg} &= [gL_i]^\alpha_\beta \vx_0^\beta \ro_\alpha f(\vx) = [gL_i\vx_0]\cdot \del f(\vx) = \hat{L}_if(\vx)
    \label{eq:dfdg-general0}
\end{align}
Hence, for each $L_i$, the pushforward $gL_i$ generates a flow on $\mathcal{S}$ through the vector field $\hat{L}_i\equiv gL_i\cdot d/dg = [gL_i\vx_0]^\alpha \ro_\alpha$ (Fig. \ref{fig:Lie-group-S}). 
% $\hat{L}_i$ is related to the Maurer-Cartan form $\omega = g^{-1} \ro_\alpha g d\vx^\alpha$, which encodes the pushforward.
% Write $f(gv_\eps) = f(g+ \eta(\eps)^\alpha \ro_\alpha g)$. 
% We have $ \eta^\alpha \ro_\alpha g = \eps^i g L_i$, so $ \eps^i L_i \eta^\alpha g^{-1} \ro_\alpha g = \eta \cdot \omega $.
% The Maurer-Cartan form defines a way to parallel transport (pushforward) and define a basis frame on  
% Vector fields are sections of the tangent bundle $T\mathcal{S}$.
% We will encounter $gL_i\vx_0$ again on a discretized $T\mathcal{S}$ again below. 
Being a vector field $\hat{L}_i\in T\mathcal{S}$ (i.e. 1-tensor), $\hat{L}_i$ is basis independent, meaning 
for $v \in G$, $\hat{L}_i(v\vx) = \hat{L}_i$. 
Its components transform as $[\hat{L}_i(v \vx)]^\alpha =[vgL_i\vx_0]^\alpha = v^\alpha_\beta \hat{L}_i(\vx)^\beta $, while the partial transforms as $\ro / \ro[v\vx]^\alpha = [v^{-1}]^{\gamma}_\alpha \ro_\gamma$. 
% \out
{
Using this relation and Taylor expanding \eqref{eq:L-conv-equiv}, we obtain a second form for the group action on L-conv.
For $w\in G$, with $\vy= w^{-1}\vx$ we have
\begin{align}
    Q[f](w^{-1}g\vx_0)&= W^0\br{I+ 
    \ba{\eps}^i[\hat{L}_i]^\alpha[w^{-1}]^\beta_\alpha  {\ro \over \ro \vy^\beta}}f(\vy)\big|_{\vy\to w^{-1}\vx}
    \label{eq:L-conv-hat-L-equiv}
\end{align}
}%%%
% We will derive a similar equation for the equivariance of L-conv in tensor notation in sec. \ref{sec:tensor}.

\textbf{1D Translation:}
Let $G=T_1 = (\R,+)$. 
A matrix representation for $G$ is found by encoding $x$ as a a 2D vector $(x,1)$. %, adding a dummy dimension. 
The lift is given by $\vx_0 =(0,1) $ as the origin and $g = \begin{pmatrix}1&x\\0&1\end{pmatrix}$. 
The Lie algebra basis is $L = \begin{pmatrix}0&1\\0&0\end{pmatrix}$. 
It is easy to check that $gg'\vx_0 = (x+x',1)$. 
We also find $gL = L $, meaning $L$ looks the same in all $T_gG$. 
Close to identity $I=0$, $v_\eps = I+\eps L = \eps $. 
We have $gv_\eps \vx_0 = (g+ \eps gL)x_0 = (x + \eps,1)$.
Thus, $f(g(I+\eps L)\vx_0) \approx f(\vx)+ \eps df(\vx)/d \vx $. 
This readily generalizes to $n$D translations $T_n$ (SI \ref{ap:example-Tn}), yielding $f(\vx) + \eps^\alpha \ro_\alpha f(\vx)$. 

\paragraph{2D Rotation:} Let $G=SO(2)$. 
The space which $SO(2)$ can lift is not the full $\R^2$, but a circle of fixed radius $r=\sqrt{x^2+y^2}$.
Hence we choose $\mathcal{S}=S^1$ embedded in $\R^2$, with $x=r\cos\theta$ and $y=r\sin\theta$. 
For the lift, we use the standard 2D representation. 
We have $\vx_0 = (r,0)$ and (see SI \ref{ap:example-so2})
\begin{align}
    L&=\mat{0&-1\\1&0},&
    g&=\exp[\theta L]= {1\over r}\mat{x &-y \\ y & x},&
    % gL& = \begin{pmatrix}- y & - x\\x & - y\end{pmatrix} & 
    gL \cdot {df\over dg} &= 
    \pa{x \ro_y 
    - y \ro_x } f.
    \label{eq:L-so20}
\end{align}
Physicists will recognize $\hat{L} \equiv \pa{x \ro_y - y \ro_x } = \ro_\theta$ as the angular momentum operator in quantum mechanics and field theories, which generates rotations around the $z$ axis. 

\paragraph{Rotation and scaling}
Let $G= SO(2)\times \R^+$, where the $\R^+=[0,\infty)$ is scaling. 
The infinitesimal generator for scaling is identity $L_2 = I$. 
This group is also Abelian, meaning $[L_2,L]=0$ ($L\in so(2)$ \eqref{eq:L-so20}). 
$\R^2/{0}$ can be lifted to $G$ by choosing $\vx_0 = (1,0)$ in polar coordinates and $\vx = g\vx_0 = rL_2 \exp[\theta L]\vx_0$. %, using $L$ in \eqref{eq:L-so2}.
We again have $gL\cdot df/dg = \ro_\theta f $. 
We also have $gL_2 = g$, so $gL_2\vx_0 = (x,y)$ and from \eqref{eq:dfdg-general0}, $gL_2 \cdot df/dg = (x\ro_x+y\ro_y)f = r\ro_r f $, which is the scaling operation.

\subsubsection{Rotation $SO(2)$ \label{ap:example-so2} }
With  $\vx_0 = (r,0)$
\begin{align}
    % x_0 &= \begin{pmatrix}r\\0\end{pmatrix} & 
    g&
    =\begin{pmatrix}\cos \theta &-\sin\theta \\\sin\theta &\cos\theta \end{pmatrix}, 
    ={1\over r}\begin{pmatrix}x &-y \\ y & x \end{pmatrix}, & L&=\begin{pmatrix}0&-1\\1&0\end{pmatrix} & gL& = {1\over r}\begin{pmatrix}- y & - x\\x & - y\end{pmatrix} \\
    gL\vx_0 &= \mat{-y\\x} = \mat{-\sin\theta \\ \cos\theta} 
    \label{eq:gLx0-so2}
\end{align}
To calculate $df/dg$ we note that even after the lift, the function $f$ was defined on $\mathcal{S}$. 
So we must include the $\vx_0$ in $f(g\vx_0)$. 
Using \eqref{eq:dfdg-general}, we have
\begin{align}
    {df(g\vx_0)\over dg} &= {1\over r} \vx_0^T 
    \mat{\ro_xf&-\ro_yf\\ \ro_yf&\ro_xf } = \mat{\ro_xf&-\ro_yf\\ 0&0}\cr
gL \cdot {df\over dg} &= \mathrm{Tr}
    \begin{pmatrix}x \ro_y f
    - y \ro_x f
    & x \ro_x f
    + y \ro_y f 
    \\ 0&0
    % \\- x \ro_x f
    % - y \ro_y f
    % & x \ro_y f
    % - y \ro_x f
    \end{pmatrix} = 
    \pa{x \ro_y f
    - y \ro_x f}
\end{align}

\subsubsection{Translations $T_n$
\label{ap:example-Tn} }
Generalizing the $T_1$ case, we add a dummy dimension $0$ and $\vx_0=(1,0,\dots,0)$. 
The generators are $[L_i]_\mu^\nu = \delta_{i\mu}\delta_0^\nu$ and $g = I+ x^i L_i$. 
Again, $gL_i = L_i + x^j L_j L_i = L_i$ as $L_jL_i = 0$ for all $i,j$. 
Hence, $[\hat{L}_i]^\alpha = [gL_i\vx_0]^\alpha =\delta^\alpha_i $. 

\out{
\subsection{Haar measure}
\nd{Take out}
To calculate $\int_G dg$ we need to calculate the Haar measure $dg = d\mu(g) $. 
For $G= \mathrm{GL}_d(\R)$, the measure is 
\begin{align}
    dg = {1\over det(g)^d} \bigwedge_{i,j} dg_{ij}.
\end{align}
which has $d^2$ different $dg_{ij}$
This can be seen in terms of the generators of $\mathrm{GL}_d(\R)$, which are all one-hot matrices $\mE_{ij}$. 
For subgroups $G\subset \mathrm{GL}_d(\R)$ we can find independent components by writing 
$g^{-1}dg =dt^i g^{-1}L_ig $, which is Maurer-Cartan form.  
When the Lie algebra basis has $n=|\mathfrak{g}|$ elements, we have $n$ independent variables $t^i$. 

\subsubsection{$SO(2)\times \R_+$ Haar measure}
When the group is Abelian, meaning $[L_i,L_j]=0$, the Haar measure computation simplifies. 
For $SO(2)\times \R_+$ with $g=r\exp[\theta L_\theta]$ we have
\begin{align}
    g &= r \mat{\cos\theta & -\sin \theta \\
    \sin\theta & \cos\theta },\qquad  g^{-1} = {1\over r} \mat{\cos\theta & \sin \theta \\
    -\sin\theta & \cos\theta }\cr
    g^{-1}dg &= g^{-1}\br{{1\over r}dr g+r d\theta \mat{-\sin\theta & -\cos\theta \\ \cos\theta & -\sin \theta } } = {dr\over r} I + d\theta L_\theta.   
\end{align}
To build the volume form from this we can start from the Euclidean volume form $dx\wedge dy$, which is simply a different basis, and express in terms of $d\theta\wedge dr$. 
The Jacobian for this change of variables is 
\begin{align}
    J= \mat{{\ro \theta\over \ro x} & {\ro \theta\over \ro y}\\
    {\ro r\over \ro x} & {\ro r\over \ro y}} &= \mat{{-1\over  y} & {1\over  x}\\ {x\over  r} & {y\over  r}},& |\det(J)| = {2\over r} 
\end{align}
And so we recover the polar integration measure $d^2 x =|\det(J)|^{-1} dr d\theta = r dr d\theta $

}%%%%

\subsection{Group invariant loss \label{ap:invariant-loss} }
\out{
$G$ being a symmetry of the system means that $f$ and transformed features $g\cdot f$ have similar probability of occurring. 
Hence, the loss function needs to be \textit{group invariant} \nd{cite?}. 
One way to construct group invariant functions is to integrate over $G$ (global pooling \citep{bronstein2021geometric}). 
A function $ I = \int_G dg F(g)$ is $G$-invariant as for $w\in G$   
\begin{align}
    w\cdot  I &=\int_G w\cdot F(g)dg = \int F(w^{-1}g) dg = \int_G F(g') d(w g') = \int_G F(g') dg'
    \label{eq:loss-invariance}
\end{align}
where we used the invariance of the Haar measure $d(wg') = dg'$.
}%%%

% \ry{add the example of cycle-consistent loss}
% $G$ being a symmetry of the system means that $f$ and transformed features $g\cdot f$ have similar probability of occurring.
% Hence, 
Because $G$ is the symmetry group, $f$ and $g\cdot f$ should result in the same optimal parameters. 
Hence, the minima of the loss function need to be \textit{group invariant}.
One way to satisfy this is for the loss itself to be group invariant, which can be constructed 
% group invariant functions is to 
by integrating over $G$ (global pooling \citep{bronstein2021geometric}). 
A function $ I = \int_G dg F(g)$ is $G$-invariant as for $w\in G$   
\begin{align}
    w\cdot  I &=\int_G w\cdot F(g)dg = \int F(w^{-1}g) dg = \int_G F(g') d(w g') = \int_G F(g') dg'
    \label{eq:loss-invariance}
\end{align}
where we used the invariance of the Haar measure $d(wg') = dg'$.
We can change the integration to $\int_\mathcal{S} d^nx $ by change of variable $dg/d\vx$.
Since we need $\mathcal{S}$ to be lifted to $G$, the lift$:\mathcal{S}\to G$ is injective, the a map $ G\to\mathcal{S}$ need not be. 
% This means that 
$\mathcal{S}$ is homeomorphic to $G/H$, where $H\subset G$ is the stabilizer of the origin, i.e. $h\vx_0 = \vx_0,\forall h\in H$.
Since $F(g\vx_0) = F(gh\vx_0)$, we have 
\begin{align}
     I& = \int_G F(g) dg = \int_H dh \int_{G/H} dg' F(g') = V_H\int_{G/H} dg' F(g')
\end{align}
Since $G/H\sim \mathcal{S}$, the volume forms $dg'=V_H d^nx$ can be matched for some parametrization.  

\subsubsection{MSE Loss \label{ap:Loss-MSE}}
\out{
In \eqref{eq:loss-invariance} $F(g)$ is any scalar function of the features $f$ with a defined $G$ action. 
For example $F(g)$ can be mean square loss (MSE) $F(g)= \sum_n\|Q[f_n](g)\|^2$, where $f_n$ are data samples and $Q[f]$ is L-conv or another $G$-equivariant function. 
In supervised learning the input is a pair $f_n,y_n$. 
The labels $y_n$ have their own $G$ action, including being $G$-invariant. 
We can concatenate $\phi_n [f_n|y_n]$ to define a merged feature which has a well-defined $G$ action.
% Hence, in the analysis below we will assume the input is a set of $f_n$ and won't distinguish between $f_n$ and $y_n$. 
Using \eqref{eq:L-conv-def0} and  \eqref{eq:dfdg-general} defining an MSE loss yields
\begin{align}
     I[W] &= \sum_n \int_G dg \left\|  W^0 \br{I + \ba{\eps}^i g L_i\cdot {d\over dg} }\phi_n(g) \right\|^2 \cr
    &= \sum_n \int_G dg \br{ 
    \|W^0\phi_n\|^2 + \left\|W^i g L_i\cdot {d\phi_n\over dg}\right\|^2 +2\phi_n^T  W^{0T}W^i g L_i\cdot {d\over dg}\phi_n  
    }
    \label{eq:loss-MSE0}
\end{align}
where $W^i= W^0\ba{\eps}^i$. 
First we simplify the first two terms in \eqref{eq:loss-MSE0}. 
The first term is $f_n^TMfn$ where $M=W^{0T}W^0$. 
For the second term, let $U_i(g) = gL_i\vx_0$. 
Using \eqref{eq:dfdg-general} we have 
\begin{align}
     I[W] &= \sum_n \int_G dg \left\|  W^0 \br{I + \ba{\eps}^i U_i^\alpha \ro_\alpha }f_n(g) \right\|^2 +  I_\ro\cr
    &= \sum_n \int_G dg \br{  \|W^0f_n\|^2 +\|W^iU_i^\alpha \ro_\alpha f_n\|^2   } +  I_\ro \cr
     &= \sum_n \int_G dg \br{  f_n^TMf_n +\ro_\alpha f_n^T\mathbf{g}^{\alpha\beta}  \ro_\beta f_n } +  I_\ro \cr
     &= \sum_n \int_G dg \br{ M_{ab}f_n^a f_n^b +\mathbf{g}^{\alpha\beta}_{ab}  \ro_\beta f_n^b \ro_\alpha f_n^a} +  I_\ro 
\end{align}
Where $\mathbf{g}^{\alpha\beta}_{ab} = \sum_c [W^i]_{ac}U_i^\alpha [W^j]_{bc}U_j^\beta $ acts as a Riemannian metric $\mathbf{g}^{\alpha\beta}$ on $\mathcal{S}$ and a metric on the feature space $\mathcal{F}$ as $\mathbf{g}_{ab}$.

\paragraph{MSE loss}

}%%%%
% In \eqref{eq:loss-invariance} $F(g)\in \R$
% is any scalar function of the features $f$ with a defined $G$ action. 
% For example 
% $F(g)$ 
% can be 
% Using \eqref{eq:loss-invariance},
The MSE is given by $I = \sum_n\int _G dg\|Q[f_n](g)\|^2$, where $f_n$ are data samples and $Q[f]$ is L-conv or another $G$-equivariant function. 
In supervised learning the input is a pair $f_n,y_n$. 
$G$ can also act on the labels $y_n$. 
We assme that $y_n$ are either also scalar features $y_n:\mathcal{S}\to \R^{m_y}$ with a group action $g\cdot y_n(\vx)=y_n(g^{-1}\vx)$ (e.g. $f_n$ and $y_n$ are both images), or that $y_n$ are categorical.
In the latter case $g\cdot y_n = y_n$ because the only representations of a continuous $G$ on a discrete set are constant. 
% via some representation $\pi:G\to \mathrm{GL}_c(\R)$.
%have their own $G$ action, 
% including being 
% though in classification $y_n$ is usually $G$-invariant. 
We can concatenate the inputs to 
% \ry{make sure the notation f is consistent} 
$\phi_n \equiv [f_n|y_n]$ % to define a merged feature which has 
with a well-defined $G$ action $g\cdot \phi_n = [g\cdot f_n| g\cdot y_n]$.
% Therefore, in the analysis below we use the combined features $\phi_n$.
% assume the input is a set of $f_n$ and won't distinguish between $f_n$ and $y_n$. 
The collection of combined inputs $\Phi = (\phi_1,\dots, \phi_N)^T$ is an $(m+m_y)\times N$ matrix. 
Using %the combined features $\Phi$, 
equations \ref{eq:L-conv} and  \ref{eq:dfdg-general}, the MSE loss with parameters $W = \{W^0,\ba{\eps}\}$ becomes
% \ry{include a 2d function} 
% (SI\ref{ap:Loss-MSE})
\begin{align}
    I[\Phi;W] &= \int_G dg \L[\Phi;W] = \int_G dg \left\|  W^0 \br{I + \ba{\eps}^i  [\hat{L}_i]^\alpha \ro_\alpha}\Phi(g) \right\|^2 
    \cr
    &= 2\int_G dg \br{ 
    \|W^0\Phi\|^2 + \left\|W^i [\hat{L}_i]^\alpha \ro_\alpha \Phi \right\|^2 +2\Phi^T  W^{0T}W^i [\hat{L}_i]^\alpha \ro_\alpha \Phi}
    \label{eq:loss-MSE-expand}\\
    % &=\sum_n \int_G dg \br{ M_{ab}\phi_n^a \phi_n^b +\mathbf{h}^{\alpha\beta}_{ab}  \ro_\beta \phi_n^b \ro_\alpha \phi_n^a + [\hat{L}_i]^\alpha \ro_\alpha \pa{\phi_n^T V^i \phi_n}}
    % \label{eq:Loss-MSE-simplified0}\\
    &= \int_\mathcal{S} {d^nx\over\left|\ro x\over \ro g\right|} \br{\Phi^T\mathbf{m}_2\Phi + \ro_\alpha \Phi^T \mathbf{h}^{\alpha\beta} \ro_\beta \Phi +[\hat{L}_i]^\alpha \ro_\alpha \pa{\Phi^T \mathbf{v}^i \Phi} }
    \label{eq:Loss-MSE0}
    % \mbox{Invariance: } & u\in G: \quad u^T\mathbf{g}_{ab} u = \mathbf{g}_{ab}
 \end{align}
\out{
\begin{align}
    I[\Phi;W] &= \int_G dg \L[\Phi;W] = \int_G dg \left\|  W^0 \br{I + \ba{\eps}^i g L_i\cdot {d\over dg} }\Phi(g) \right\|^2 
    \cr
    &= \int_G dg \br{ 
    \|W^0\Phi\|^2 + \left\|W^i [\hat{L}_i]^\alpha \ro_\alpha \Phi \right\|^2 +2\Phi^T  W^{0T}W^i [\hat{L}_i]^\alpha \ro_\alpha \Phi}
    \label{eq:loss-MSE-expand}\\
    % &=\sum_n \int_G dg \br{ M_{ab}\phi_n^a \phi_n^b +\mathbf{h}^{\alpha\beta}_{ab}  \ro_\beta \phi_n^b \ro_\alpha \phi_n^a + [\hat{L}_i]^\alpha \ro_\alpha \pa{\phi_n^T V^i \phi_n}}
    % \label{eq:Loss-MSE-simplified}\\
    &= \int_\mathcal{S} {d^nx\over\left|\ro x\over \ro g\right|} \br{\Phi^T\mathbf{m}_2\Phi + \ro_\alpha \Phi^T \mathbf{h}^{\alpha\beta} \ro_\beta \Phi +[\hat{L}_i]^\alpha \ro_\alpha \pa{\Phi^T \mathbf{v}^i \Phi} }
    \label{eq:Loss-MSE}
    % \mbox{Invariance: } & u\in G: \quad u^T\mathbf{g}_{ab} u = \mathbf{g}_{ab}
 \end{align}
 }%%%
% \nd{Discuss stabilizers}
where $\left|\ro x\over \ro g\right|$ is the determinant of the Jacobian, $W^i = W^0 \ba{\eps}^i$ and
\begin{align}
    % \mathbf{m}_2 &=W^{0T}W^0 &
    % \mathbf{h}^{\alpha\beta}(\vx) &= W^{iT}W^j [\hat{L}_i]^\alpha [\hat{L}_i]^\beta &
    % \mathbf{v}^i &= W^{0T}W^i. \\
    \mathbf{m}_2 &=W^{0T}W^0, &
    \mathbf{h}^{\alpha\beta}(\vx) &= \ba{\eps}^{iT} \mathbf{m}_2 \ba{\eps}^j [\hat{L}_i]^\alpha [\hat{L}_j]^\beta, &
    \mathbf{v}^i &= \mathbf{m}_2\ba{\eps}^{i} . 
    \label{eq:MSE-params0}
\end{align}
From \eqref{eq:loss-MSE-expand} to \ref{eq:Loss-MSE0} we used the fact that $W^0$ and $W^i$ do not depend on $\vx$ (or $g$) to write 
\begin{align}
    2\Phi^T  W^{0T}W^i [\hat{L}_i]^\alpha \ro_\alpha \Phi & = [\hat{L}_i]^\alpha \ro_\alpha \pa{\Phi^T  W^{0T}W^i \Phi } = [\hat{L}_i]^\alpha \ro_\alpha \pa{\Phi^T \mathbf{m}_2 \ba{\eps}^i \Phi }
\end{align}
Note that $\mathbf{h}$ has feature space indices via $[\ba{\eps}^{iT} \mathbf{m}_2 \ba{\eps}^j]_{ab}$, with index symmetry
$\mathbf{h}^{\alpha\beta}_{ab}=\mathbf{h}^{\beta\alpha}_{ba} $.
When $\mathcal{F}= \R$ (i.e. $f$ is a 1D scalar), $\mathbf{h}^{\alpha\beta}$ becomes a
a Riemannian metric 
% $\mathbf{h}^{\alpha\beta}$ 
for $\mathcal{S}$.
In general $\mathbf{h}$ combines a 2-tensor $\mathbf{h}_{ab}=\mathbf{h}_{ab}^{\alpha\beta}\ro_\alpha \ro_\beta \in T\mathcal{S} \otimes T\mathcal{S}$ with an inner product $h^T\mathbf{h}^{\alpha\beta}f $ on the feature space $\mathcal{F}$. 
Hence $\mathbf{h}\in T\mathcal{S}\otimes T\mathcal{S}\otimes \mathcal{F}^*\otimes \mathcal{F}^*$ is a $(2,2)$-tensor, with $\mathcal{F}^*$ being the dual space of $\mathcal{F}$.   
% \nd{Is it symmetric?}
% Note that $\mathbf{h}^{\alpha\beta}(\vx)$ is a function of $\vx$ because $\hat{L}_i(\vx)$ depends on $\vx$.
%
\paragraph{Loss invariant metric transformation} 
% We can show that the $(2,2)$-tensor $\mathbf{h}$ is invariant under $G$ in the same sense as a Riemannian metric.
The metric
$\mathbf{h}$ transforms equivariantly as a 2-tensor. % under the $G$ action. 
As discussed under \eqref{eq:dfdg-general}, 
$[\hat{L}_i(v \vx)]^\alpha = v^\alpha_\beta \hat{L}_i(\vx)^\beta $ and 
\begin{align}
    v\cdot \mathbf{h}^{\alpha\beta}&= \mathbf{h}^{\alpha\beta}(v^{-1}\vx) =[v^{-1}]^\alpha_\rho [v^{-1}]^\beta_\gamma
    \mathbf{h}^{\rho\gamma}(\vx), &
    (v&\in G).
    \label{eq:h-metric-covariance}
\end{align}
Note that $v\cdot \mathbf{m}_2=\mathbf{m}_2$ since $f_n$ and $y_n$ are scalars.
For example, let $G=SO(2)$ and $R(\xi)  \in SO(2)$ be rotation by angle $\xi$.  
Since there is only one $L_i=L$, the metric factorizes to
\begin{align}
    \mathbf{h}^{\alpha\beta}_{ab} = [\ba{\eps}^T\mathbf{m}_2\ba{\eps}]_{ab} \otimes [\hat{L}\hat{L}^T]^{\alpha\beta}
\end{align}
To find $R(\xi)\cdot \mathbf{h}$ we only need to calculate $R(\xi)^{-1} \hat{L}$. 
With $g= R(\theta)$, we have $\hat{L}(\vx) = R(\theta) L \vx_0 = (-y,x) = r(-\sin\theta,\cos\theta)$ from \eqref{eq:gLx0-so2}. 
Therefore, $R(\xi)^{-1}\hat{L} =R(\theta -\xi) = \hat{L}(R(\xi^{-1} \vx)$. 
Using \eqref{eq:MSE-params} in \eqref{eq:h-metric-covariance}, the transformed metric becomes 
\begin{align}
    % R(\xi) &= \mat{\cos\xi&-\sin\xi\\ \sin\xi &\cos\xi}, \quad 
    % R(\xi)^{-1}g &= R(\xi)^{-1}R(\theta) = R(\theta - \xi) \cr 
    R(\xi)\cdot \mathbf{h}^{\alpha\beta}(R(\theta)\vx_0) = \ba{\eps}^T\mathbf{m}_2 \ba{\eps} \otimes [R(-\xi)\hat{L}]^\alpha [R(-\xi)\hat{L}]^\beta = \mathbf{h}^{\alpha\beta}(R(\theta-\xi)\vx_0) ,
\end{align}

\subsubsection{Third term as a boundary term}
% \nd{For SI}
Since terms in \eqref{eq:Loss-MSE} are scalars, they can be evaluated in any basis.
If $\mathcal{S}$ can be lifted to multiple Lie groups, either group can be used to evaluate \eqref{eq:Loss-MSE}. 
For example $\R^n/0$ can be lifted to both $T_n$ and $SO(n)\times \R_+$. 
% The last term in \eqref{eq:Loss-MSE} is straightforward 
For the translation group $G=T_n$ we have $gL_i = L_i$ and $[\hat{L}_i]^\alpha
%=[gL_i\vx_0]^\alpha 
= \delta^\alpha_i$ (SI \ref{ap:example-Tn}) and $|\ro g/\ro x| = 1$ and $dg = d^n x$. 
Thus, the last term in \eqref{eq:Loss-MSE} simplifies to a complete divergence 
%$[L_i\vx_0]^\alpha \ro_\alpha (\Phi^T \mathbf{v}^i \Phi) = $
$\int d^nx \ro_i (\Phi^T \mathbf{v}^i \Phi)$. %, with $V^i = W^{0T}W^i$. 
% Therefore, the last term is a complete divergence 
Using the generalized Stoke's theorem $\int_\mathcal{S} dw = \int_{\ro \mathcal{S}} w $, the last term in \eqref{eq:Loss-MSE} becomes a boundary term.
When $\mathcal{S}$ is non-compact, the last term is 
$ I_\ro= \int_{\ro \mathcal{S}} d\Sigma_i \Phi^T \mathbf{v}^i \Phi$, where $d\Sigma_i$ is the normal times the volume form of the $(n-1)$D boundary $\ro \mathcal{S}$ and is in the radial direction (e.g. for $\mathcal{S}=\R^n$ the  boundary is a hyper-sphere $\ro \mathcal{S}= S^{n-1}$).
Generally we expect the features $\phi_n$ to be concentrated in a finite region of the space and that they go to zero as $r\to \infty$ (if they don't the loss term $\Phi^T\mathrm{m}_2\Phi$ will diverge).
Thus, the last term in \eqref{eq:Loss-MSE} generally becomes a vanishing boundary term and does not matter.

% Note that $[\hat{L}_i(\vx)]^\alpha\in \R$ is a number and so is $\bk{\mathbf{v}^i}\equiv\Phi^T \mathbf{v}^i\Phi \in \R $. 

\out{
\nd{Show $SO(2)\times \R_+$ and $T_n$ examples.}
If $\ro_\alpha \pa{\left|\ro g\over \ro x\right|[\hat{L}_i]^\alpha}=0$, the last term in \eqref{ap:Loss-MSE} becomes a total derivative $\ro_\alpha (\left|\ro g\over \ro x\right|[\hat{L}_i]^\alpha \Phi^T\mathbf{v}^i\Phi)$. 
This happens for instance for the translation group, where the Jacobian is identity and  

\nd{
shouldn't $\ro^\alpha [\hat{L}_i]_\alpha$ be zero usually? I know it doesn't hold for scaling. 
But it holds for $SO(2)$ and translations. 
Is there a rule for compact groups? 
$\hat{L}_i$ is related to the Maurer-Cartan form $\omega = g^{-1} \ro_\alpha g d\vx^\alpha$, which encodes the pushforward.
Write $f(gv_\eps) = f(g+ \eta(\eps)^\alpha \ro_\alpha g)$. 
We have $ \eta^\alpha \ro_\alpha g = \eps^i g L_i$, so $ \eps^i L_i \eta^\alpha g^{-1} \ro_\alpha g = \eta \cdot \omega $.
If $ gL_i=c_i^\alpha g \omega_\alpha = dg\cdot c^i $, since $dg$ is an exact form we have $d^2g =0 $, but $\ro_\alpha (|dg/dx|\ro^\alpha g) = d * dg$ and that's not necessarily zero.    
}
% Let $H\subset G$ be a compact subgroup of $G$. 
% For $L^H_i\in T_IH \subset T_IG$, 

}%%

\subsubsection{MSE Loss for translation group $T_n$ }
% In \eqref{eq:Loss-MSE}, the first term is simply $\phi^T \mathbf{m}_2 \phi$ where $M= {W^0}^TW^0$. 
% For the second term, from \eqref{eq:dfdg-general}, $gL_i\cdot df/dg = \hat{L}_i f= [gL_i\vx_0]^\alpha \ro_\alpha f(\vx)$. 
% We need to multiply this term by its transpose. 
% Let's first consider the case where $f(\vx)\in\R $ has only one component, so that $W^0\in \R$ and $\ba{\eps}^i\in \R$.
% Thus, the second term is $\hat{L}_i(\phi)^T W^{iT}W^j\hat{L}_j(\phi)$.
% As an example, let us work out $T_n$ ($n$D translation). 
% \nd{For SI}
We have $gL_i = L_i$ and $[\hat{L}_i]^\alpha 
%=[gL_i\vx_0]^\alpha 
= \delta^\alpha_i$ (SI \ref{ap:example-Tn}),
the last term in \eqref{eq:Loss-MSE} becomes a complete divergence 
%$[L_i\vx_0]^\alpha \ro_\alpha (\Phi^T \mathbf{v}^i \Phi) = $
$I _\ro = \int d^nx \ro_i (\Phi^T \mathbf{v}^i \Phi)$. %, with $V^i = W^{0T}W^i$. 
% Therefore, the last term is a complete divergence 
Using the generalized Stoke's theorem $\int_\mathcal{S} dw = \int_{\ro \mathcal{S}} w $, 
%the last term in \eqref{eq:Loss-MSE} becomes a boundary term.
when $\mathcal{S}$ is non-compact, %the last term is 
$I _\ro= \int_{\ro \mathcal{S}} d\Sigma_i \Phi^T \mathbf{v}^i \Phi$.
Here $d\Sigma_i$ is the normal times the volume form of the $(n-1)$D boundary $\ro \mathcal{S}$ (e.g. for $\mathcal{S}=\R^n$ the  boundary is a hyper-sphere $\ro \mathcal{S}= S^{n-1}$).
Generally we expect the features $\phi_n$ to be concentrated in a finite region of the space and that they go to zero as $r\to \infty$ (if they don't the loss term $\Phi^T\mathrm{m}_2\Phi$ will diverge).
Thus, the last term in \eqref{eq:Loss-MSE} generally becomes a vanishing boundary term and does not matter.  

Next, the second term in \eqref{eq:Loss-MSE} can be worked out as
% Using this in the $\|W^igL_i \cdot d\phi/dg\|^2$ term we obtain  
\begin{align}
    \hat{L}_i^\alpha \ro_\alpha \phi^T \ba{\eps}^{iT} \mathbf{m}_2 \ba{\eps}^j\hat{L}_j^\beta \ro_\beta \phi
    % W^i[\hat{L}_i]^\alpha \ro_\alpha \phi {W^j}^T[\hat{L}_j]^\beta \ro_\beta \phi^T 
    &=  \ro_j \phi^T\ba{\eps}^{iT} \mathbf{m}_2 \ba{\eps}^j \ro_i \phi = \ro_j \phi^T \mathbf{h}^{ji} \ro_i \phi
\end{align}
where $\mathbf{h}^{ji}=\ba{\eps}^{iT} \mathbf{m}_2 \ba{\eps}^j  $ is a general, space-independent metric compatible with translation symmetry. 
When the weights $[W^i]^a_b \sim \mathcal{N}(0,1)$ are random Gaussian, we have $W^{jT} W^i \approx m^2 \delta^{ij}$ and we recover the Euclidean metric. 
With the lst term vanishing, the loss function \eqref{eq:Loss-MSE} has a striking resemblance to a Lagrangian used in physics, as we discuss next.
% Next, we will discuss this relation to physics and the consequences of that. 

\subsubsection{Boundary term with spherical symmetry}
When $\mathcal{S}\sim \R^n$ and  $G=T_n$, the third term becomes a boundary term. 
But we can also have $G= SO(n)\times \R_+$ (spherial symmetry and scaling). 
The boundary $\ro \mathcal{S}\sim S^{n-1}$, which has an $SO(n)$ symmetry. 
% Does this mean that $d\Sigma_\alpha [gL_i\vx_0]^\alpha =0 $? 
% When we fix $\vx_0=(1,0,\dots)$, only $n-1$ out of the $n(n-1)/2$ of the $L_i\in so(3)$ yield $L_i\vx_0\ne0$. 
% These are the $L_i$ acting on the first dimension. 
% We can index these as $L_\alpha = E_{0\alpha}-E_{\alpha0}$ for $\alpha\in \{1\to n-1\}$.
% We have $[L_\alpha \vx_0]^\beta = \delta_\alpha^\beta $
The normal 
$d\Sigma(\vx)$ is a vector pointing in the radial direction and $g$ is the lift for $\vx$. 
Since $g\in SO(n)$, we have 
\begin{align}
    d\Sigma_\beta [gL_\alpha \vx_0]^\beta &= d\Sigma^T g L_\alpha \vx_0 = [g^T d\Sigma]^T L_\alpha \vx_0 
\end{align}
Since $g\in SO(n)$, $g^T = g^{-1}$ and $g^Td\Sigma(g\vx_0) = d\Sigma(\vx_0) =V_{n-1} \vx_0$, meaning the normal vector is rotated back toward $\vx_0$. 
Here $V_{n-1}$ is the volume of the boundary $S^{n-1}$. 
Hence we have
\begin{align}
    d\Sigma^T g L_i \vx_0 = \vx_0^T L_i \vx_0 = 0 
\end{align}
for all generators $L_i \in so(n)$ because $L_i= -L_i^T$ and hence diagonal entries like $\vx_0^T L_i \vx_0$ are zero. 
Only the scaling generator $L_0=I$ we have $\vx_0^T L_0 \vx_0 = 1$.
This means that the last term in \eqref{eq:Loss-MSE} can be nonzero at the boundary only if $\Phi\mathbf{v}^i\Phi$ is in the radial direction, meaning $\ba{\eps}^0 = 0$, and $\Phi$ does not vanish at the boundary. 
However, a non-vanishing $\Phi$ at the boundary results in diverging loss unless the mass matrix $\mathbf{m}_2$ has eigenvalues equal to zero.
This is what happens relativistic theories where light rays can have nonzero $\Phi$ at infinity because they are massless.

\out{

\paragraph{Boundary term and conserved quantity}
Since terms in \eqref{eq:Loss-MSE} are scalars, they can be evaluated in any basis.
If $\mathcal{S}$ can be lifted to multiple Lie groups, either group can be used to evaluate \eqref{eq:Loss-MSE}. 
For example $\R^n/0$ can be lifted to both $T_n$ and $SO(n)\times \R_+$. 
% The last term in \eqref{eq:Loss-MSE} is straightforward 
For the translation group $G=T_n$ we have $gL_i = L_i$ and $[\hat{L}_i]^\alpha
%=[gL_i\vx_0]^\alpha 
= \delta^\alpha_i$ (SI \ref{ap:example-Tn}) and $|\ro g/\ro x| = 1$ and $dg = d^n x$. 
Thus, the last term in \eqref{eq:Loss-MSE} simplifies to a complete divergence 
%$[L_i\vx_0]^\alpha \ro_\alpha (\Phi^T \mathbf{v}^i \Phi) = $
$\int d^nx \ro_i (\Phi^T \mathbf{v}^i \Phi)$. %, with $V^i = W^{0T}W^i$. 
% Therefore, the last term is a complete divergence 
Using the generalized Stoke's theorem $\int_\mathcal{S} dw = \int_{\ro \mathcal{S}} w $, the last term in \eqref{eq:Loss-MSE} becomes a boundary term.
When $\mathcal{S}$ is non-compact, the last term is 
$ I_\ro= \int_{\ro \mathcal{S}} d\Sigma_i \Phi^T \mathbf{v}^i \Phi$, where $d\Sigma_i$ is the normal times the volume form of the $(n-1)$D boundary $\ro \mathcal{S}$ and is in the radial direction (e.g. for $\mathcal{S}=\R^n$ the  boundary is a hyper-sphere $\ro \mathcal{S}= S^{n-1}$).
Generally we expect the features $\phi_n$ to be concentrated in a finite region of the space and that they go to zero as $r\to \infty$ (if they don't the loss term $\Phi^T\mathrm{m}_2\Phi$ will diverge).
Thus, the last term in \eqref{eq:Loss-MSE} generally becomes a vanishing boundary term and does not matter.

% Note that $[\hat{L}_i(\vx)]^\alpha\in \R$ is a number and so is $\bk{\mathbf{v}^i}\equiv\Phi^T \mathbf{v}^i\Phi \in \R $. 

\nd{Show $SO(2)\times \R_+$ and $T_n$ examples.}
If $\ro_\alpha \pa{\left|\ro g\over \ro x\right|[\hat{L}_i]^\alpha}=0$, the last term in \eqref{ap:Loss-MSE} becomes a total derivative $\ro_\alpha (\left|\ro g\over \ro x\right|[\hat{L}_i]^\alpha \Phi^T\mathbf{v}^i\Phi)$. 
This happens for instance for the translation group, where the Jacobian is identity and  
\nd{
shouldn't $\ro^\alpha [\hat{L}_i]_\alpha$ be zero usually? I know it doesn't hold for scaling. 
But it holds for $SO(2)$ and translations. 
Is there a rule for compact groups? 
$\hat{L}_i$ is related to the Maurer-Cartan form $\omega = g^{-1} \ro_\alpha g d\vx^\alpha$, which encodes the pushforward.
Write $f(gv_\eps) = f(g+ \eta(\eps)^\alpha \ro_\alpha g)$. 
We have $ \eta^\alpha \ro_\alpha g = \eps^i g L_i$, so $ \eps^i L_i \eta^\alpha g^{-1} \ro_\alpha g = \eta \cdot \omega $.
If $ gL_i=c_i^\alpha g \omega_\alpha = dg\cdot c^i $, since $dg$ is an exact form we have $d^2g =0 $, but $\ro_\alpha (|dg/dx|\ro^\alpha g) = d * dg$ and that's not necessarily zero.    
}
% Let $H\subset G$ be a compact subgroup of $G$. 
% For $L^H_i\in T_IH \subset T_IG$, 

}%%%%%
\out{
\nd{revise!!!}
The last term can be written as $gL_i\cdot {d\over dg} (f_n^T W^{0T}W^i f_n)$. 
Separating out $\ba{\eps}^i$ and defining  $f'_n \equiv W^{0}f_n$, the last term reads $[\ba{\eps}^i]_{ab} gL_i\cdot {d\over dg}[{f'_n}^a{f'_n}^b]$.
Next, we note this is a complete derivative, as $\ba{\eps}^i gL_i\cdot df/dg = \delta \vx^\alpha \ro_\alpha f(\vx)$ was the first-order Taylor expansion of $f(\vx+\delta \vx ) = f(g(I+\ba{\eps}^iL_i)\vx_0)$. 
\nd{Not always... if $\delta\vx$ is space dependent then it doesn't become a surface integral}
Using $\delta \vx$ and changing $\int_G dg$ to $\int_\mathcal{S} d\vx $, and Stoke's theorem ($\int_\mathcal{S} d f = \int_{\ro \mathcal{S}} f$ ) 
we find that the last term $ I_\ro$ is a total derivative and yields a boundary term. 
\begin{align}
     I_\ro =&\sum_n\int_G 2f_n^T W^{0T} W^i g L_i\cdot {df_n\over dg} dg  = \sum_n\int_G dg [\ba{\eps}^i]_{ab} gL_i\cdot {d\over dg}[{f'_n}^a{f'_n}^b]\cr
    =& \sum_n\int_\mathcal{S} d\vx  [\delta\vx^\alpha]_{ab} \ro_\alpha [{f'_n}^a{f'_n}^b] 
    = \left. \Sigma_\alpha(\vx) [\delta\vx^\alpha]_{ab} \br{\sum_n {f'_n}^a{f'_n}^b} \right|_{\vx\to \ro \mathcal{S}}
\end{align}
where $\sigma_\alpha (\vx)$ is the normal vector of a hyper-surface defining the boundary $\ro\mathcal{S}$. 
If $ \mathcal{S}$ is compact, its boundary $\ro \mathcal{S} = \emptyset$ so boundary terms vanish. 
When $\mathcal{S}$ is non-compact, the boundary term may capture important things such as conserved quantities. 
% generally learnable features have finite support, so $f(g)\to 0 $ when $g \to \ro \mathcal{S}$. 
\nd{
but if it is non-zero it can be a conserved quantity. 
Check its Noether's Theorem. 
}

\nd{
\paragraph{Equivariance of parameters}
Derive what $w\cdot \phi$ does to $\mathbf{h}$ etc.
$w\cdot Q[f](g) = W^0f(w^{-1}g(I+\ba{\eps}^iL_i)) = $

\paragraph{Conservation laws and Noether's theorem}
Derive Noether's theorem for MSE. 
The version for $\delta \phi$ should be the familiar one. Is there a version for $W$? 
}
}%%%%

\subsection{Robustness to random noise
\label{ap:generalization}
}
Equivariant neural networks are hoped to be more robust than others. 
% Equivariance should improve generalization. 
One way to check this is to see how the network would perform for an input $\phi'= \phi + \delta \phi$ which adds a small perturbation $\delta \phi$ to a real data point $\phi$. 
Robustness to such perturbation would mean that, for optimal parameters $W^*$, the loss function would not change,
i.e. $I [\phi';W^*]=I [\phi;W^*]$. 
This can be cast as a variational equation, requiring $I $ to be minimized around real data points $\phi$. 
Writing $I[\phi;W]= \int d^nx \L[\phi;W]$, we have 
\begin{align}
    % I[\phi;W]&= \int d^nx \L[\phi;W], &
    \delta I [\phi;W^*] &=\int_\mathcal{S} d^nx\br{
    {\ro \L \over \ro \phi^a }\delta \phi^a + {\ro \L \over \ro (\ro_\alpha \phi^a) } \ro_\alpha (\delta \phi^a) 
    }
\end{align}
Doing a partial integration on the second term, we get 
\begin{align}
    \delta I [\phi;W^*] &=\int_\mathcal{S} d^nx\br{
    {\ro \L \over \ro \phi^b } - \ro_\alpha {\ro \L \over \ro (\ro_\alpha \phi^b) }  
    } \delta \phi^b + \int_\mathcal{S} d^nx\ro_\alpha \br{
    {\ro \L \over \ro (\ro_\alpha \phi^b) } \delta \phi^b 
    } \cr
    &= \int_\mathcal{S} d^nx\br{
    {\ro \L \over \ro \phi^b } - \ro_\alpha {\ro \L \over \ro (\ro_\alpha \phi^b) }  
    } \delta \phi^b + \int_{\ro \mathcal{S}} d^{n-1}\Sigma_\alpha \br{
    {\ro \L \over \ro (\ro_\alpha \phi^b) } \delta \phi^b 
    } 
    \label{eq:loss-Euler-Lagrange-derivation}
\end{align}

If we want equivariant networks to be robust, then both terms  in \eqref{eq:loss-Euler-Lagrange-derivation} need to be zero.  
We show that the first term is the classic Euler-Lagrange (EL) equation, and the second term is related to conservation laws. 

We use the Stoke's theorem  to change the second term to a boundary integral. Since features $\phi$ have finite support, $\phi(\vx)\to0$ as $|\vx|\to \infty $, and the boundary term vanishes.
The first term in \eqref{eq:loss-Euler-Lagrange-derivation} is the classic Euler-Lagrange (EL) equation. 
Thus, requiring robustness, i.e. $\delta I [\phi;W^*]/\delta \phi=0$ means for optimal parameters $W^*$, the  data $\phi$ satisfies the EL equations
\begin{align}
    \mbox{Robustness to random noise} \Longleftrightarrow \mbox{EL: }
    \quad 
    {\ro \L \over \ro \phi^b } - \ro_\alpha {\ro \L \over \ro (\ro_\alpha \phi^b) } = 0 
    \label{eq:loss-Euler-Lagrange0}
\end{align}

Applying this to the MSE loss \eqref{eq:Loss-MSE}, \eqref{eq:loss-Euler-Lagrange0} becomes
\begin{align}
    % \mbox{EL:} \quad 
    \mathbf{m}_2 \phi - \ro_\alpha \pa{|J| \mathbf{h}^{\alpha\beta} \ro_\beta \phi } - \ro_\alpha\pa{|J|\mathbf{v}^i [\hat{L}_i]^\alpha }\phi =0 
    \label{eq:loss-EL-MSE0}
\end{align}
where $|J| = |\ro g/\ro x|$ is the determinant of the Jacobian.
For the translation group, \eqref{eq:loss-EL-MSE0} becomes a Helmholtz equation
\begin{align}
    \mathbf{h}^{ij}\ro_i\ro_j\phi= \ba{\eps}^i\mathbf{m}_2 \ba{\eps}^j \ro_i\ro_j\phi = \mathbf{m}_2 \phi
\end{align}
where $\mathbf{h}^{ij}\ro_i\ro_j = \del^2$ is the Laplace-Beltrami operator with $\mathbf{h}$ as the metric. 

\subsection{Conservation laws \label{ap:noether} }
The equivariance condition \eqref{eq:equivairance-general} can be written for the integrand of the loss $\L[\phi,W]$. 
% If we write the equivariance equation for infinitesimal $v_\eps $, we obtain a vector field which is divergence free. 
% For $\delta I [\phi;W^*]/\delta \phi=0$, beside the EL equations, the variations at the boundary should be orthogonal to the normal to the boundary $d^{n-1}\Sigma$. 
Since $G$ is the symmetry of the system, transforming an input $\phi\to w\cdot \phi$ by $w\in G$  the integrand changes equivariantly as $\L[w\cdot \phi]= w\cdot \L[\phi] $. 
% We will now show that 
Now, let $w$ be an infinitesimal $w\approx I+\eta^i L_i$. 
The action $w\cdot \phi $ can be written as a Taylor expansion, similar to the one in L-conv, yielding
\begin{align}
    w\cdot \phi(\vx)& = \phi(w^{-1}\vx) = \phi((I-\eta^i L_i)\vx) = \phi(\vx) - \eta^i [L_i \vx]^\alpha \ro_\alpha \phi(\vx) \cr
    & = \phi(\vx) + \delta \vx^\alpha \ro_\alpha \phi(\vx) = \phi(\vx) + \delta \phi(\vx) 
    \label{eq:delta-phi}
\end{align}
with $\delta \vx^\alpha = - \eta^i [L_i \vx]^\alpha $ and $\delta \phi = \delta \vx^\alpha \ro_\alpha \phi$. 
Similarly, we have $w\cdot \L =\L + \delta \vx^\alpha \ro_\alpha \L$. 
Next, we can use the chain rule to calculate $\L[w\cdot \phi]$.
\begin{align}
    \L[w\cdot \phi]& = \L[\phi(\vx) + \delta \phi(\vx) ]= \L[\phi] + {\ro \L \over \ro \phi^b } \delta \phi^b + {\ro \L \over \ro (\ro_\alpha \phi^b) } \delta \ro_\alpha \phi^b \cr
    &= \L[\phi] + \br{{\ro \L \over \ro \phi^b }  -\ro_\alpha {\ro \L \over \ro (\ro_\alpha \phi^b) }}\delta \phi^b + \ro_\alpha \pa{ {\ro \L \over \ro (\ro_\alpha \phi^b) }\delta \phi^b }
    \label{eq:Noether-expand}
\end{align}

where we used the fact that $\delta \vx = \eta^i L_i \vx $ can vary independently from $\vx $ (because of $\eta^i$), and so $\delta \ro_\alpha \phi^b =  \ro_\alpha \delta\phi^b $.
The same way, $\delta \vx^\alpha \ro_\alpha \L = \ro_\alpha (\delta \vx^\alpha \L)$. 
Now, if $\phi$ are the real data and the parameters in $\L$ minimize generalization error, then $\L$ satisfies \eqref{eq:loss-EL-MSE0}. 
This means that the first term in \eqref{eq:Noether-expand} vanishes. 
Setting the second term equal to $w\cdot \L$ we get 
\begin{align}
    \L[w\cdot \phi] - w\cdot \L[\phi]& = \ro_\alpha \br{{\ro \L \over \ro (\ro_\alpha \phi^b) }\delta \phi^b - \delta \vx^\alpha \L } =0
\end{align}
Thus, the terms in the brackets are divergence free. 
These terms are called a Noether conserved current $J^\alpha$. 
In summary
% When generalization error is minimized as in \eqref{eq:loss-Euler-Lagrange}, an infinitesimal $w\approx I+\eta^i L_i$, with $\delta \phi = \eps^i \hat{L}_i \phi$, results in a conserved current (SI \ref{ap:generalization})
% the correct $L_i \in \mathfrak{g}$ appear in $I[\phi;W^*]$.
% Let's see how the Lagrangian $\L$ (integrand of the loss) changes when 
% When the generalization error is minimized (i.e. EL equations are satisfied) the term that produced the vanishing boundary term can
\begin{align}
    \mbox{Noether current: } J^\alpha &= {\ro \L \over \ro (\ro_\alpha \phi^b) } \delta \phi^b - {\ro \L \over \ro \vx^\alpha } \delta \vx^\alpha , & \delta I [\phi;W^*]=0 \quad \Rightarrow \ro_\alpha J^\alpha &= 0
\end{align}
% This is the integrand of the boundary term 
% This $J$ is called the Noether current in physics.
$J$ captures the change of the Lagrangian $\L$ along symmetry direction $\hat{L}_i$.
Plugging $\delta \phi = \delta \vx^\alpha \ro_\alpha \phi$from \eqref{eq:delta-phi} we find
\begin{align}
    &\ro_\alpha \br{{\ro \L \over \ro (\ro_\alpha \phi^b) }\delta \phi^b - \delta \vx^\alpha \L } =\delta \vx^\beta \ro_\alpha \br{{\ro \L \over \ro (\ro_\alpha \phi^b) }\ro_\beta \phi^b - \delta^\alpha_\beta  \L } = \delta \vx^\beta \ro_\alpha T^\alpha_\beta.
\end{align}
$T^\alpha_\beta$ is known as the stress-energy tensor in physics \citep{landau2013classical}. 
It is the Noether current associated with space (or space-time) variations $\delta \vx$. 
It appears here because $G$ acts on the space, as opposed to acting on feature dimensions. 
For the MSE loss we have
\begin{align}
    T^\alpha_\beta \equiv& {\ro \L \over \ro (\ro_\alpha \phi^b) }\ro_\beta \phi^b - \delta^\alpha_\beta  \L = \ro_\rho \phi^T\pa{\delta^\lambda_\beta  \mathbf{h}^{\alpha\rho}- \delta^\alpha_\beta \mathbf{h}^{\rho\lambda} }\ro_\lambda \phi - \phi^T \mathbf{m}_2 \phi
\end{align}
\out{
\nd{
Does the loss not change along $T^\alpha_\beta$? 
We are putting real data into the optimized network (i.e. optimal weights). then we are asking how the loss function integrand (Lagrangian) changes if we move infinitesimally over space. 
The answer is that it changes such that $T^\alpha_\beta$ remains divergence free. 
It's like saying as time progresses, the momentum doesn't change. 
Except, here we don't have time and it's about directions in space. 
% Along the directions of $T^\alpha_\beta$ 
% $T$ is a tensor, with $T^\alpha_\beta$ determining
}

}%%%%
% If we write the equivariance equation for infinitesimal $v_\eps $, we obtain a vector field which is divergence free. 
% and the requirement for the boundary term to vanish is the statement that the Noether current be conserved (i.e. divergence free) or 
% \nd{Becomes Euler-Lagrange equations}
It would be interesting to see if the conserved currents can be used in practice as an alternative way for identifying or discovering symmetries.
% We conclude by remarking on similarity between ML and physics problems. 

\section{Tensor notation details \label{ap:tensor-long} }

\begin{figure}
    \centering
    \includegraphics[width=.8\linewidth]{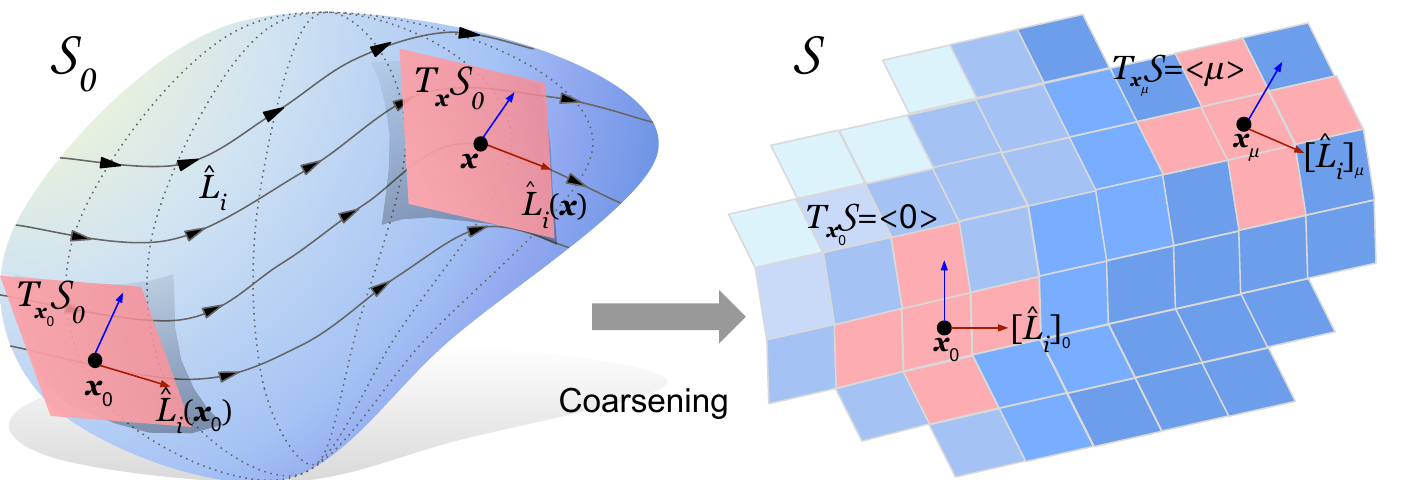}
    \caption{\textbf{
    Manifold vs. discretized Space}
    While real systems can have a continuous manifold  
    $\mathcal{S}_0$ as their base space, often the data collected from is a discrete array $\mathcal{S}$. 
    The discretization (coarsening) will induce some of the topology of $\mathcal{S}_0$ on $\mathcal{S}$ as a graph. 
    Graph neighborhoods $<\mu>$ on the discrete $\mathcal{S}$ represent tangent spaces $T_{\vx_\mu}\mathcal{S}$ and approximate $T_\vx \mathcal{S}_0$.  
    % The coarsening (discretization) produces a space $\mathcal{S}$.  The Lie group $G$ acts on $\mathcal{S}$ and closely approximates $G_0$ ($G\simeq G_0$, e.g. image rotations approximating real rotations).
    The lift takes $\vx\in \mathcal{S}_0$ to $g\in G$, and maps the tangent spaces $T_\vx \mathcal{S}_0\to T_gG$. 
    Each Lie algebra basis $L_{i}\in \mathfrak{g}=T_IG$ generates a vector field on the tangent bundle $TG$ via the pushforward as $L^{(g)} = gL_{i}g^{-1}$. 
    Due to the lift, each $L_i$ also generates a vector field $\hat{L}_i^\alpha(\vx)\ro_\alpha = [gL_i\vx_0]^\alpha \ro_\alpha$, with $\vx=g\vx_0$. 
    Analogously, on $\mathcal{S}$ we get 
    % Since $\mathcal{S}_0$ is lifted to $G$, $L_{0i}$ also generate a flow on $\mathcal{S}_0$, captured by the vector field $\hat{L}_i^\alpha(\vx)\ro_\alpha$. 
    % Similarly, the Lie algebra basis $L_i$ of $G\simeq G_0$ generates a flow on $TG$ and 
    a vector field $[\hat{L}_i]_\mu= g_\mu L_i\vx_0$ on $T\mathcal{S}$, with $\vx_\mu=g_\mu\vx_0$. 
    % Note that tangent spaces on $\mathcal{S}$ are $T_{\vx_\mu}\mathcal{S}=<\mu>$, the neighborhoods of each $\vx_\mu$, and vector fields are weighted graphs, modifying neighbor weights in the graph $\mA$ encoding the topology of $\mathcal{S}$.
    % tensors flowing from every $\vx_\mu$ within the neighborhood $<\mu>$. 
    Note that depicting $\mathcal{S}$ and $\mathcal{S}_0$ as 2D is only for convenience. 
    They may have any dimensions. 
    }
    % \label{fig:Lie-group-3-spaces}
    \label{fig:S0S}
\end{figure}

If the dataset being analyzed is in the form of $f(\vx)$ for some sample of points $\vx$, together with derivatives $\del f(\vx) $, we can use the L-conv formulation above. 
However, in many datasets, such as images, $f(\vx)$ is given as a finite dimensional array or tensor, with $\vx$ taking values over a grid. 
% In this case all the discussions above still hold, as discuss now. 
% In particular, e
Even though the space $\mathcal{S}$ is now discrete, the group which acts on it can still be continuous (e.g. image rotations). 
Let $\mathcal{S}= \{\vx_0,\dots \vx_{d-1}\}$ contain $d$ points. 
Each $\vx_\mu$ represents a coordinate in higher dimensional grid.
For instance, on a $10\times 10$ image, $\vx_0$ is $(x,y)=(0,0)$ point and $ \vx_{99}$ is $(x,y) =(9,9)$.

\paragraph{Feature maps}
To define features $f(\vx_\mu)\in \R^m$ for $\vx_\mu \in \mathcal{S}$, we embed $\vx_\mu \in \R^d$ and 
encode them as the canonical basis (one-hot) vectors with components $[\vx_\mu]^\nu = \delta_\mu^\nu$ (Kronecker delta), e.g. $\vx_0 = (1,0,\dots, 0)$.
The feature space becomes $\mathcal{F}= \R^d \otimes \R^m $, meaning feature maps $\vf \in \mathcal{F}$ are $d\times m$ tensors, with $f(\vx_\mu) = \vx_\mu^T \vf = \vf_\mu $.

\paragraph{Group action}
% \nd{Change to $\vx_\mu^T g^T$ for less confusion.}
Any subgroup $G\subseteq \mathrm{GL}_d(\R)$ of the general linear group (invertible $d\times d$ matrices) acts on $\R^d$ and $\mathcal{F}$. 
Since $\vx_\mu \in \R^d$, $g \in G$ also naturally act on $\vx_\mu$. 
The resulting $\vy = g \vx_\mu $ is a linear combination $\vy = c^\nu \vx_\nu$ of elements of the discrete $\mathcal{S}$, not a single element.
The action of $G$ on $\vf$ and $\vx$, can be defined in multiple equivalent ways.
% The most elegant one is if w
We define $f(g \cdot \vx_\mu) = \vx_\mu^T g^T\vf, \forall g\in G$.
% consistent with the lift. 
% This makes the group action on $\vf$ the natural action of $G$, meaning 
For $w\in G$ we have 
\begin{align}
    w\cdot f(\vx_\mu)&= f(w^{-1}\cdot \vx_\mu) = \vx_\mu^T w^{-1T}\vf = %\vx_\mu^T [w^T]^{-1} \vf = 
    [w^{-1}\vx]^T \vf
    \label{eq:G-action-tensor-T0}
\end{align}
Dropping the position $\vx_\mu$, the transformed features are matrix product $w\cdot \vf= w^{-1T}\vf$.

\paragraph{G-conv and L-conv in tensor notation} 
Writing G-conv \eqref{eq:G-conv} in the tensor notation we have 
\begin{align}
    [\kappa\star f](g\vx_0) = \int_G \kappa(v)f(gv\vx_0) dv = \vx_0^T \int_G v^{T} g^{T} \vf \kappa^T(v) dv 
    \equiv \vx_0^T [\vf \star \kappa](g) 
    \label{eq:G-conv-tensor0}
\end{align}
where we moved $\kappa^T(v) \in \R^m \otimes \R^{m'}$ to the right of $\vf$ because it acts as a matrix on the output index of $\vf$.
The equivariance of \eqref{eq:G-conv-tensor0} is readily checked with $w\in G$
\begin{align}
    w\cdot [\vf \star \kappa](g) & = [\vf \star \kappa](w^{-1}g) 
    % = \int_G v^{T} [w^{-1}g]^{T} \vf \kappa^T(v) dv %\cr &
    = \int_G v^{T} g^T w^{-1T} \vf \kappa^T(v) dv %\cr &
    %= \int_G v^{-1} g^{-1} w\vf \kappa^T(v) dv 
    % = [w^{-1T} \vf \star \kappa](g)
    = [(w\cdot\vf) \star \kappa](g)
\end{align}
where we used $[w^{-1}g]^{T}\vf = g^{T} w^{-1T}\vf $.
Similarly, we can rewrite L-conv \eqref{eq:L-conv} in the tensor notation. 
Defining $v_\eps =I+ \ba{\eps}^i L_i $ 
\begin{align}
    Q[\vf](g) 
    &= W^0 f\pa{g\pa{I+ \ba{\eps}^i L_i}} 
    = \vx_0^T \pa{I+ \ba{\eps}^i L_i}^{T} g^{T} \vf W^{0T} \cr
    % &= \vx_0^T \pa{I+ \ba{\eps}^i L_i^T} g^{T} \vf W^{0T} \cr 
    &= \pa{\vx_0+ \ba{\eps}^i [gL_i\vx_0]}^T \vf W^{0T}.
    % = \br{\vf_\mu + \ba{\eps}^i [gL_i \vf]_\mu }W^{0T} 
    % \cr
    \label{eq:L-conv-tensor0}
\end{align}
Here, $\hat{L}_i=gL_i\vx_0$ is eactly the tensor analogue of pushforward vector field $\hat{L}_i$ in \eqref{eq:dfdg-general}. 
We will make this analogy more precise below.
The equivariance of L-conv in tensor notation is again evident from the $g^T \vf$, resulting in 
\begin{align}
    Q[w\cdot \vf](g) &= \vx_0^Tv_\eps^T g^T w^{-1T} \vf W^{0T}= Q[\vf](w^{-1}g) = w\cdot Q[\vf](g)
    \label{eq:L-conv-equiv-tensor0}
\end{align}

Next, we will discuss how to implement \eqref{eq:L-conv-tensor0} in practice and how to learn symmetries with L-conv.
% means and how it impacts the learning of symmetries using L-conv. 
We will also discuss the relation between L-conv and other neural architectures.

\subsection{Constraints from topology on tensor L-conv \label{ap:L-conv-tensor-interpret}
}
To implement \eqref{eq:L-conv-tensor0} we need to specify the lift and the form of $L_i$. 
We will now discuss the mathematical details leading to and easy to implement form of \eqref{eq:L-conv-tensor0}. 

\paragraph{Topology}
Although he discrete space $\mathcal{S}$ is a set of points, in many cases it has a topology.
For instance, $\mathcal{S}$ can be a discretization of a manifold $\mathcal{S}_0$, or vertices on a lattice or a general graph. 
% If the points in $\mathcal{S}$ is a graph, a lattice, or the discretization of a manifold $\mathcal{S}_0$, it has a topology. 
We encode this topology in an undirected graph (i.e. 1-simplex) with vertex set $\mathcal{S}$ edge set $\mathcal{E}$.
Instead of the commonly used graph adjacency matrix $\mA$, we will use the incidence matrix $\mB:\mathcal{S}\times  \mathcal{E}\to \{0,1,-1\}$.
$\mB^\mu_\alpha=1$ or $-1$ if edge $\alpha$ starts or ends at node $\mu$, respectively, and $\mB^\mu_\alpha=0$ otherwise (undirected graphs have pairs of incoming and outgoing edges ).
Similar to the continuous case we will denote the topological space $(\mathcal{S},\mathcal{E},\mB)$ simply by $\mathcal{S}$. 
\out{
The Laplacian $\mL = \mD-\mA$ becomes $\mL = \mB\mB^T$.
A weighted graph can be encoded using a diagonal weight matrix $\mW_{ab}$ for the edges, with the weighted Laplacian $\mL = \mB \mW\mB^T$.
}%

Figure \ref{fig:S0S} summarizes some of the aspects of the discretization as well as analogies between $\mathcal{S}_0$ and $\mathcal{S}$. 
Technically, the group $G_0$ acting on $\mathcal{S}_0$ and $G$ acting on $\mathcal{S}$ are different.
But we can find a group $G$ which closely approximates $G_0$ (see SI \ref{ap:approx}).
% For example, \citet{rao1999learning} used the Shannon-Whittaker Interpolation \citep{whitaker1915functions} to translate discrete 1D signals by arbitrary, continuous amounts as $f'_\mu = g(z)_\mu^\nu f_\nu $.
% Here $g(z)_\mu^\nu = {1\over d} \sum_{p=-d/2}^{d/2} \cos \pa{{2\pi p\over d}(z+\mu -\nu) } $ approximates the shift operator for continuous $z$.
% The $g(z)$ form a group because $g(w)g(z)=q(w+z)$, which is a representation for periodic 1D shifts. 
For instance, \citet{rao1999learning} used Shannon-Whittaker interpolation theorem \citep{whitaker1915functions} to define continuous 1D translation and 2D rotation groups on discrete data. 
We return to this when expressing CNN as L-conv in \ref{sec:approx}.

\paragraph{Neighborhoods as discrete tangent bundle}
$\mB$ is useful for extending differential geometry to graphs \citep{schaub2020random}. 
% The action $[\mB]_\alpha \vf = \vf_\mu - \vf_\nu$ is a discrete version of $\ro_\alpha \vf $, where $\mu$ and $\nu$ are the endpoints of edge $\alpha$.
% The graph Laplacian $\mL = \mB\mB^T$ generalizes the continuous Euclidean Laplace operator $\del^2 = \ro^\alpha \ro_\alpha $. 
% A discrete version of tangent spaces can be defined using neighborhoods of nodes.
Define the neighborhood $<\mu>=\left\{ \alpha \in \mathcal{E} \big|\mB_\alpha^\mu =-1 \right\}$ of $\vx_\mu$ as the set of outgoing edges. 
\out{
\nd{Should we try to identify $<\mu>$ with a set of vectors instead? Like $\vx_\nu -\vx_\mu$? How can we talk about $L_i\vx_0$ being in $<0>$? 
Or should we think of it as an operator? 
Which acts on a vector and yields a number? 
That is true of the transpose, as $\vx_0^T L_i^T \vf \in \mathcal{F}$. We say that $C \in <\mu>$ if $C\vf = \sum_{\alpha\in <\mu>} c^\alpha \mB_\alpha^\nu \vf_\nu  $
}
}%%%%
$<\mu>$ can be identified with $T_{\vx_\mu} \mathcal{S}$ as for $\alpha \in <\mu>$, $\mB_\alpha \vf = \vf_\mu - \vf_\nu \sim \ro_\alpha \vf $, where $\mu$ and $\nu$ are the endpoints of edge $\alpha$.
This relation becomes exact when $\mathcal{S}$ is an $n$D square lattice with infinitesimal lattice spacing.  
The set of all neighborhoods is $\mB$ itself and encodes the approximate tangent bundle $T\mathcal{S}$.
For some operator $C:\mathcal{S}\otimes\mathcal{F} \to \mathcal{F} $ acting on $\vf $ we will say $C \in <\mu>$ if its action remains within vertices connected to $\mu$, meaning 
\begin{align}
    C \in <\mu>: \qquad C\vf = \sum_{\alpha\in <\mu>} \tilde{C}^\alpha \mB_\alpha^\nu \vf_\nu 
\end{align}
% \nd{A bit of abuse of notation... but consistent with the vector field notation $C = C^\alpha \ro_\alpha$}

\paragraph{Lift and group action}
% Define the lift via $\vx_\mu = g_\mu \vx_0$. 
% Since we chose $\vx_\mu$ to be the canonical basis (one-hot), $\vx_\nu^T\vx_\mu = \delta_\mu^\nu $. 
% Thus, we can pick $g_\mu$ such that $g_\mu^T= g_\mu^{-1}$, meaning orthogonal matrices. 
Lie algebra elements $L_i$ by definition take the origin $\vx_0$ to points close to it.
Thus, for small enough $\eta$, 
$(I+\eta L_i) \vx_0 \in <0> $ and so $[L_i  \vx_0]^T\vf=[\hat{L}_i]_0^\rho \vf_\rho = \sum_{\alpha \in <0>} 
% \Gamma_{i 0 }^\rho 
[\hat{\ell}_i]_0^\alpha \mB^\rho_\alpha \vf_\rho $. 
%for a set of 
The coefficients $[\hat{\ell}_i]_0^\alpha \in \R$ are in fact the discrete version of $\hat{L}_i$ components from \eqref{eq:dfdg-general}.  
% Define the lift via $\vx_\mu = g_\mu \vx_0$. 
For the pushforward $gL_ig^{-1}$, we define the lift via $\vx_\mu = g_\mu \vx_0$.
We require the $G$-action to preserve the topology of $\mathcal{S}$, meaning points which are close remain close after the $G$-action. 
As a result, $<\mu>$ can be reached by pushing forward elements in $<0>$.
Thus, for each $i$, $\exists \eta \ll 1 $ such that $g_\mu (I+\eta L_i) \vx_0 \in<\mu>$, meaning for a set of coefficients $ [\hat{L}_i]_\mu^\nu \in \R$ 
%$ \Gamma_{i \mu }^\rho \in \R$ 
we have
\begin{align}
    [g_\mu (I+\eta L_i) \vx_0]^T\vf & = \vf_\mu+ \eta \sum_{\alpha\in <\mu>}  
    [\hat{\ell}_i]_\mu^\alpha \mB_\alpha ^\nu\vf_\nu 
    %= g_\mu \pa{\vx_0+ \eps \sum_{\rho\in <0>} \mA_{0\rho} \Gamma_{i 0 }^\rho \vx_\rho}
    % \cr
    % &&
    % \Gamma_{i \mu }^\nu &= \sum_{\rho\in <0>} \Gamma_{i 0 }^\rho \vx_\nu^T g_\mu\vx_\rho = [g_\mu]^\nu_\rho \Gamma_{i 0 }^\rho
\end{align}
where $\vf_\mu = \vx_\mu^T\vf $.
% Multiplying from the left by $\vx_\nu^T$ to obtain $\vx^T_\nu g_\mu L_i\vx_0 = [g_\mu L_i\vx_0]^\nu $ and 
Acting with $[g_\mu L_i \vx_0]^T\vx_\nu $ and 
inserting $I= \sum_\rho \vx_\rho \vx_\rho^T$ we have 
\begin{align}
    [\hat{L}_i]_\mu^\nu &= 
    % \Gamma_{i \mu }^\nu 
    [\hat{\ell}_i]_\mu^\alpha \mB_\alpha^\nu  
    = [g_\mu L_i\vx_0]^\nu =\sum_{\rho} [g_\mu\vx_\rho \vx^T_\rho L_i \vx_0]^T\vx_\nu
    \cr & = \sum_{\rho\in <0>}  [L_i]_0^\rho [\vx_\nu^T g_\mu\vx_\rho]^T
    = [L_i]_0^\rho [g_\mu]^\nu_\rho = [\hat{\ell}_i]_0^\alpha \mB_\alpha^\rho [g_\mu]^\nu_\rho
    \label{eq:hat-L-discrete0}
\end{align}
This $\hat{L}_i \equiv \hat{\ell}_i^\alpha \mB_\alpha $ is the discrete $\mathcal{S}$ version of the vector field $\hat{L}_i(\vx) = [g L_i \vx_0]^\alpha \ro_\alpha $ in \eqref{eq:dfdg-general}.

\out{
\paragraph{Discrete vector field and tangent bundle}
We can show (SI \ref{ap:L-conv-tensor-interpret}) that when $\mathcal{S}$ has a topology $\hat{L}_i= g L_i\vx_0 $ is indeed the discrete version of the $\hat{L}_i$ in \eqref{eq:dfdg-general}. 
The topology, to lowest order, using a graph (1-simplex) is encoded as an
% Instead of the commonly used graph adjacency matrix $\mA$, we will use 
incidence matrix $\mB:\mathcal{S}\times  \mathcal{E}\to \{0,1,-1\}$
with vertices $\mathcal{S}$ and edge set $\mathcal{E}$. 
$\mB^\mu_\alpha=1$ or $-1$ if edge $\alpha$ starts or ends at node $\mu$, respectively, and $\mB^\mu_\alpha=0$ otherwise. % (undirected graphs have pairs of incoming and outgoing edges ).
% Similar to the continuous case we will denote the topological space $(\mathcal{S},\mathcal{E},\mB)$ simply by $\mathcal{S}$.   
$\mB$ is used to extend differential geometry to graphs \citep{schaub2020random}, since for $\alpha $ an outgoing edge from vertex $\mu$, we have 
$\mB_\alpha \vf = \vf_\mu - \vf_\nu \sim \ro_\alpha \vf $. 
We show that for $G$ to preserve the topology, components of $\hat{L}_i$ should act on the local neighborhood of each node and related to each other via the pushforward
\begin{align}
    [\hat{L}_i]_\mu^\nu &= [g_\mu L_i\vx_0]^\nu = [\hat{\ell}_i]_\mu^\alpha \mB_\alpha^\nu 
    = [\hat{\ell}_i]_0^\alpha \mB_\alpha^\rho [g_\mu]^\nu_\rho.
    \label{eq:hat-L-discrete}
\end{align}
$\hat{L}_i \equiv \hat{\ell}_i^\alpha \mB_\alpha $ is the discrete $\mathcal{S}$ version of the vector field $\hat{L}_i(\vx) = [g L_i \vx_0]^\alpha \ro_\alpha $ in \eqref{eq:dfdg-general}.
Here $ [\hat{\ell}_i]_0^\alpha \in \R $ are weightings for edges $\alpha $ connected to $\vx_0$ and capture the action of $L_i$. 
This weighting is similar to Gauge Equivariant Mesh (GEM) CNN \citep{cohen2019gauge}. 
The main difference is that in GEM-CNN the basis of the tangent spaces $T_{\vx_\mu}\mathcal{S}$ are not fixed and are chosen by a gauge. 
In L-conv the lift $\vx_\mu = g_\mu\vx_0$ fixes the gauge by mapping neighbors of $\vx_0$ to neighbors of $\vx_\mu$. 
% , meaning it determines how the basis at $<0>$ maps to the basis at $<\mu>$. 
Changing how the discrete $\mathcal{S}$ samples an underlying continuous space 
% $\mathcal{S}_0$ 
% is unchanged but the points sampled from it to create $\mathcal{S}$ are moved a bit, 
changes $g_\mu$ and hence the gauge, similar to GEM-CNN. 
% have the form $[\hat{L}_i]_\mu^\nu = [\hat{\ell}_i]_\mu^\alpha \mB_\alpha^\nu $. 
% Additionally, we show that the lift $g_\mu$ relates components of $\hat{L}_i$ via the pushforward
% $[\hat{\ell}_i]_\mu^\alpha \mB_\alpha^\nu = [\hat{\ell}_i]_0^\alpha \mB_\alpha^\rho [g_\mu]^\nu_\rho$. 
% Thus, to implement 
% means and how it impacts the learning of symmetries using L-conv. 
}%%%%%

\subsection{Approximating a symmetry and discretization error \label{ap:approx}}
\paragraph{Discretization error}
While systems such as crystalline solids are discrete in nature, many other datasets such as images result from discretization of continuous data. 
The discretization (or ``coarsening'' \citep{bronstein2021geometric}) will modify the groups that can act on the space. 
For example, first rotating a shape by $SO(2)$ then taking a picture is different from first taking a picture then rotating the picture (i.e. group action and discretization do not commute). 
% the picture of a rotated object, is different from rotating the
% rotating an object by $SO(2)$, then taking a picture is different from first taking a picture and then rotating the image by an image rotation group $G$. 
Nevertheless, in most cases in physics and machine learning the symmetry group $G_0$ of the space before discretization has a small Lie algebra dimension $n$, (e.g. $SO(3), SE(3), SO(3,1)$ etc). 
Usually the resolution of the discretization is $ d \gg n$. 
In this case, there always exist some $G\subseteq \mathrm{GL}_d(\R)$ which approximates $G_0$ reasonably well. 
The approximation means $\forall g_0 \in G_0, \exists g\in G $ such that the error $\mathcal{L}_G=\|g_0\cdot f(\vx_\mu) -\vx_\mu^T g \vf\|^2<\eta^2 $ where $\eta$ depends on the resolution of the discretization.
Minimizing the error $\mathcal{L}_G$ can be the process of identifying the $G$ which best approximates $G_0$. 
We will denote this approximate similarity as $G \simeq G_0$.
For example, \citet{rao1999learning} used the Shannon-Whittaker Interpolation theorem \citep{whitaker1915functions} to translate discrete 1D signals (features) by arbitrary, continuous amounts. 
In this case the transformed features are $\vf'_\mu = g(z)_\mu^\nu \vf_\nu $, where $g(z)_\mu^\nu = {1\over d} \sum_{p=-d/2}^{d/2} \cos \pa{{2\pi p\over d}(z+\mu -\nu) } $ approximates the shift operator for continuous $z$.
The $g(z)$ form a group because $g(w)g(z)=g(w+z)$, which is a representation for periodic 1D shifts. 
\citet{rao1999learning} also use a 2D version of the interpolation theorem to approximate $SO(2)$. 
% \nd{Rao's examples of groups approximating shift and $SO(2)$}
% We show below how the Lie algebra of $G$ and $G_0$ are related. 
In practice, we can assume the true symmetry to be $G$, as we only have access to the discretized data and can't measure $G_0$ directly. 
% When learning symmetries, since we only have access to the data via the discretized dataset, we assume the symmetry group to be $G\subseteq \mathrm{GL}_d(\R)$.

% \ry{how do you compute this integral?}
% Thus, to learn the parameters $\ba{\eps}_i$  we can apply the operators $\ba{L}= \br{I, L_1, \cdots L_{n_L}}$ to the input $h$ and apply a fully connected layer to the $ 1+ n_L$ channels of the output to learn the shared parameters. 
% \end{proof}
% This leads to a simple formulation for Lie algebra convolution akin to graph convolutional networks \citep{kipf2016semi}.

% \section{Relation to other architectures}
% Symmetries induce constraints on weights \cite{ravanbakhsh2017equivariance}.

\out{

\paragraph{Discretization error}
While systems such as crystalline solids are discrete in nature, many other datasets such as images result from discretizion of continuous data. 
The discretization (or ``coarsening'' \citep{bronstein2021geometric}) will affect the group action. 
For example, rotating an object by $SO(2)$, then taking a picture is different from first taking a picture and then rotating the image by an image rotation group $G$. 
Nevertheless, we expect that there exist $G\subseteq \mathrm{GL}_d(\R)$ which closely approximates rotations, or any group $G_0$ that acts on the continuous space. 
The approximation means $\forall g_0 \in G_0, \exists g\in G $ such that, $\|g_0\cdot f(\vx_\mu) -\vx_\mu^T g \vf\|<\eta $ where $\eta$ depends on the resolution of the discretization (the group action $\vx_\mu^T g \vf$ is explained below).
We will denote this approximate similarity as $G \simeq G_0$.
For example, \citet{rao1999learning} used the Shannon-Whittaker Interpolation theorem \citep{whitaker1915functions} to translate discrete 1D signals (features) by arbitrary, continuous amounts. 
In this case the transformed features are $f'_\mu = q(z)_\mu^\nu f_\nu $, where $q(z)_\mu^\nu = {1\over d} \sum_{p=-d/2}^{d/2} \cos \pa{{2\pi p\over d}(z+\mu -\nu) } $ approximates the shift operator for continuous $z$.
The $q(z)$ form a group because $q(w)q(z)=q(w+z)$, which is a representation for periodic 1D shifts. 
\citet{rao1999learning} also use a 2D version of the interpolation theorem to approximate $SO(2)$. 
% \nd{Rao's examples of groups approximating shift and $SO(2)$}
We show below how the Lie algebra of $G$ and $G_0$ are related. 
When learning symmetries, since we only have access to the data via the discretized dataset, we assume the symmetry group to be $G\subseteq \mathrm{GL}_d(\R)$.
}

\out{
\paragraph{Lift}
We want the lift $\vx_\mu = g_\mu \vx_0$ to preserve the topology $(\mathcal{S},\mA)$.
% , it is important to note that $g_\mu \in G$ restricts the form of $g_\mu$. 
% For instance, 
while it is trivial to find permutations taking $\vx_0$ to $\vx_\mu$, most of them won't preserve all neighborhoods $<\mu>$.
% be an element of a given $G$. 
For instance, if $G$ is approximating 1D translations the lift must move adjacent points together. 
Adjacency, captured in $\mA$ is also found by moving points in $S$ using infinitesimal group elements $v_\eps = I+\eps^iL_i$. 
Since 1D shifts have a single Lie algebra basis $L$, 
there's only one way to define $g_\mu$, namely shifting all points by index $\mu$. 
This can be achieved using circulant matrices.
Hence, the lift for periodic 1D translations $G\simeq S^1$ is 
% \nd{add higher windings}
\begin{align}
    % \mbox{for 1D Translation}&&
    g_\mu & = \sum_{\nu = 0}^{d } \vx_{\mu + \nu } \vx_{\nu}^T \quad (\mu + \nu\  \mathrm{mod}\  d) 
    % \label{eq:gmu-circulant}
    &
    [g_\mu]^\rho_\lambda &=\sum_{\nu} \delta_{\mu+\nu, \lambda} \delta^\rho_\nu %= \delta_{\mu + \rho, \lambda}
    = \delta^\rho_{\lambda-\mu}
    \label{eq:gmu-circ-comp0}
\end{align}
With $ g_\mu^{-1}=g_\mu^T$. 
The uniqueness of $g_\mu$ here was because there was a single $L_i$, which generates a unique flow on $\mathcal{S}$. 
For groups with multiple $L_i$, using the path-ordered notation $g_\mu = P\exp[\int_\gamma dt^i L_i]$ we see that at every step along a path $\gamma $ a different $L_i$ step is taken. 
For non-Abelian groups where $[L_i,L_j]\ne 0$, the order of these steps matters and
% What this means is that 
the lift is not uniquely defined.
% for non-Abelian groups. 
% Note that in the tensor notation, $\mathcal{S}$ represents a whole manifold and $g_\mu$ is not moving just one point, but entire neighborhoods.
% Moreover, $g_\mu \in G$ can act on any $\vx_\nu$ and it should again preserve the topology $(\mathcal{S},\mA)$. 
% The Lie algebra basis $L_i$ only moves points $\vx_\nu$ within $<\nu>$. 
In fact, as we see later, every choice of the lift defines a ``gauge'' on the tangent bundle $T\mathcal{S}$. 
Gauges have been used in Gauge Equivariant Mesh (GEM) CNN \citet{cohen2019gauge} (see \citet{bronstein2021geometric} for a review) and
% The continuous version, called gauge fields, is 
used extensively in physics
\citep{polyakov2018gauge}.
% The set of lifts $g_\mu $ also define the pushforward $g_\mu L_i$.

\nd{Equivariance of L-conv}

Since in connected Lie groups, larger group elements can be constructed as $u=\prod_\alpha \exp[t_\alpha \cdot L]$ \citep{hall2015lie} from elements near identity. It follows that any G-conv layer can be constructed from multiple L-conv layers .

}%%%%
\out{
\subsection{Fully-connected as G-conv and L-conv \label{ap:FC2L-conv} }
\nd{remove}

Consider the case where the topology of $\mathcal{S}$ is a complete graph. 
In this case, the neighborhood of each $\vx_\mu$ includes all other vertices. 
Therefore, the topology of $\mathcal{S}$ puts no constraint on the Lie algebra of $G$ and any linear transformation in $G=\mathrm{GL}_d(\R)$ preserves the topology of $\mathcal{S}$. 
The Lie algebra $L_i\mathrm{gl}_d(\R)$ has $d^2$ elements, one for each matrix entry. 
Thus, we have also have $d^2$ weights $W^i = W^0\ba{\eps}^i$. 
The L-conv output \eqref{eq:L-conv-tensor} $Q[\vf] %v_\eps\cdot\vf W^0
=\vf W^0 + \hat{L}_i\vf W^i$ has $d$ spatial dimensions $\mu$ and $d^2\times m'$ filter dimensions $i,a$. 
Pooling over the spatial index $\mu$, we get $\sum_\mu (\vf_\mu^a+[\ba{\eps}^i]^a_b [\hat{L}_i]_\mu^\nu \vf^b_\nu) = V^{\nu a}_b \vf_\nu^b $.
Here $V$ is a general linear transformation on $\vf$, similar to one linear perceptron. 
Multiple L-conv layers, both parallel and in series with residuals, can emulate wider fully-connected networks. 
These are equivariant under $w\in \mathrm{GL}_d(\R) $, as we elaborate now. 
The idea is that in FC layers $ w\cdot [V \vf] = Vw^{-1T}\vf = V'\vf$ captures their equivariance. 
Recall that $[\hat{L}_i]_\mu^\nu = [g_\mu L_i \vx_0]^\nu$.
For simplicity, let L-conv act only on the $\mu$, meaning $[\ba{\eps}_i]^a_b = \eps_i \delta^a_b $, and $W^0=I$
The action \eqref{eq:L-conv-tensor} becomes
\begin{align}
    Q[\vf]_\mu = Q[\vf](g_\mu) = \vx_0^T v_\eps^Tg_\mu^T \vf
= [(I+\ba{\eps}^i\hat{L}_i)\vf]_\mu .
\end{align}
% $Q[\vf] = v_\eps \cdot \vf W^0= v_\eps^T \vf W^0 
% = (I+\ba{\eps}^i\hat{L}_i)\vf W^0 $. 
% $[v_\eps \cdot \vf]_\mu^a= [v_\eps^T]^a_b \vf^b 
% = (I+[\ba{\eps}^i]^a_b[\hat{L}_i]^\nu_\mu)\vf^b_\nu $. 
Its equivariance under $w\in \mathrm{GL}_d(\R)$ follows \eqref{eq:L-conv-equiv-tensor} as 
\begin{align}
    Q[w\cdot \vf](g_\mu) & = \vx_0^T v_\eps^Tg_\mu^T w^{-1T}\vf = Q[ \vf](w^{-1} g_\mu)\cr
    &= [(I+\ba{\eps}^i \hat{L}_i)w^{-1T}\vf]_\mu .
\end{align}
Combining multiple layers of L-conv, we get $v\vf =v_{\eps_n}\dots v_{\eps_1}\vf \approx \exp[t^i \hat{L}_i]\vf $, where $v\in \mathrm{GL}_d(\R)$.

When there is no restriction on the weights, meaning $G=\mathrm{GL}_d(\R)$, L-conv becomes a fully-connected layer, as shown in the following proposition. 
% Additionally, we note that if we do not  using all generators of the full $GL(d,\R)$ is equivalent to a fully-connected layer. 
% Since there is no constraint on elements of $\mathrm{GL}_d(\R)$ other than being invertible, a basis for its Lie algebra $\mathfrak{gl}_d(\R)$ is all $d^2$ one-hot matrices with single 1 entry and zero elsewhere.

\out{
\begin{proposition}\thlabel{prop:FC}
    A fully-connected neural network layer can be written as an L-conv layer using $\mathrm{GL}_d(\R)$ generators, followed by a sum pooling and nonlinear activation. 
\end{proposition}
\begin{proof}
The generators of $\mathrm{GL}_d(\R)$ are one-hot $\mE \in \R^{d\times d}$ matrices $L_i = \mE_{(\alpha, \beta)}$ which are non-zero only at index $i=(\alpha, \beta)$ \footnote{We may also label them by single index like $i = \alpha + \beta d$, but two indices is more convenient.}
%$(\alpha, \beta)$ as $L_{(\alpha, \beta)} $ 
with elements written using Kronecker deltas
\begin{align}
    \mathrm{GL}_d(\R)\mbox{ generators}: L_{i,\mu}^\nu = [\mE_{(\alpha, \beta)}]_\mu^\nu &= \delta_{\mu\alpha} \delta^{\nu}_\beta
    \label{eq:GL-generators}
\end{align}
Now, consider the weight matrix $w\in \R^{m \times d} $ and bias $b \in \R^m$ of a fully connected layer acting on $h\in \R^d$ as $F(h) = \sigma(w\cdot h+b)$. 
The matrix element can be written as 
\begin{align}
    w_b^{\nu} & = \sum_\mu  \sum_{\alpha,\beta} w_b^{\alpha}\mathbf{1}_\beta [\mE_{(\alpha,\beta)}]^\nu_\mu \cr 
    &= \sum_\mu  \sum_{\alpha,\beta} W_{(\alpha,\beta)}^{b,1} [\mE_{(\alpha,\beta)}]^\nu_\mu 
    = \sum_\mu W^{b,1} \cdot [L]^\nu_\mu 
    % \cr w_{(\alpha,\beta)}^{a,1}
\end{align}
L-conv with weights $W_{(\alpha,\beta)}^{b,1} = w_b^\alpha \mathbf{1}_\beta$ (1 input channel, and $\mathbf{1}$ being a vector of ones) %using generators of $\mathrm{GL}_d(\R)$. 
followed by pooling over $\mu$ is the same as a fully connected layer with weights $w$.
\end{proof}

% \nd{Can we understand what this means? We say if the layer is allowed to incorporate the \textbf{highest amount of symmetry}, we recover FC, whereas restricted symmetries lead to G-conv. This sound a bit counterintuitive at first, but I think it is correct.}
% The takeaway of Proposition \ref{prop:FC} is that 
% To reiterate, on the one extreme where there is no restriction on the symmetry group and hence $G= GL(d,\R)$, L-conv becomes a fully connected layer after aggregating over the spatial output index. 
% Thus, interestingly, more restricted groups, rather than large, meaning groups with fewer $L_i$, lead to more parameter sharing. 
}%%%%%%

}%%%%%%

\subsection{Comparison with other symmetry discovery methods \label{ap:symm-disc-lit} }

\paragraph{Meta-learning Symmetries by Reparameterization}
Recently \citet{zhou2020meta} also introduced an architecture which can learn equivariances from data. 
We would like to highlight the differences between their approach and ours, specifically Proposition 1 in \citet{zhou2020meta}.  
Assuming a discrete group $G=\{g_1,\dots, g_n\}$, they decompose the weights $W\in \R^{s\times s}$ of a fully-connected layer, acting on $\vx \in \R^s$ as $\mathrm{vec}(W) = U^Gv$ where $U^G\in \R^{s\times s}$ are the ``symmetry matrices'' and $v\in \R^s$ are the ``filter weights''. 
Then they use meta-learning to learn $U^G$ and during the main training keep $U^G$ fixed and only learn $v$.
We may compare MSR to our approach by setting $d=s$. 
First, note that although the dimensionality of $U\in \R^{nd\times d}$ seems similar to our $L \in \R^{n\times d\times d}$, the $L_i$ are $n$ matrices of shape $d\times d$, whereas $U$ has shape $(nd) \times d$ with many more parameters than $L$. 
Also, the weights of L-conv $W\in \R^{n\times m_l \times m_{l-1}}$, with $m_l$ being the number of channels, are generally much fewer than MSR filters $v\in \R^d$. 
Finally, the way in which $Uv$ acts on data is different from L-conv, as the dimensions reveal. 
The prohibitively high dimensionality of $U$ requires MSR to adopt a sparse-coding scheme, mainly Kronecker decomposition.
Though not necessary, we too choose to use a sparse format for $L_i$, finding that very low-rank $L_i$ often perform best. 
A Kronecker decomposition may bias the structure of $U^G$ as it introduces a block structure into it.

\paragraph{Augerino}
In a concurrent work,
\citep{benton2020learning} propose Augerino, a method to learn equivariance with neural networks, but restricted to
% Augerino uses data augmentation to transform the input data, which means it is restricting the group to be  
a subgroup of the augmentation transformations. 
Augerino learns which subset of the augmentations improved the prediction. 
This is done by writing 
The data augmentation is written as $g_\eps = \exp\pa{\sum_i \eps_i \theta_i L_i} $ (equation (9) in \cite{benton2020learning}), with randomly sampled $\eps_i\in [-1,1]$. 
$\theta_i$ are trainable weights which determine which $L_i$ helped with the learning task. 
Furthermore, 
Their Lie algebra is fixed to affine transformations in 2D (translations, rotations, scaling and shearing). 
% In contrast, 
Our approach is more general.  
We learn the $L_i$ directly without restricting to known symmetries. % them to be a known set of generators.  
Additionally, we do not use the exponential map or matrix logarithm, hence, our method is easy to implement.  
Lastly, Augerino uses sampling to effectively cover the space of group transformations. 
Since we work with the Lie algebra rather the group itself, we do not require sampling.

\out{
\nd{Add our own experiments here.}
\paragraph{Symmetry Discovery Literature}
In addition to simplifying the construction of equivariant architectures, our method can also learn the symmetry generators from data. 
Learning symmetries from data has been studied before, but mostly in restricted settings.
Examples include commutative Lie groups as in \citet{cohen2014learning}, 2D rotations and translations in 
\citet{rao1999learning}, \citet{sohl2010unsupervised} or permutations \citep{anselmi2019symmetry}. 
% Perhaps the closest to our work, in spirit, is 
\citet{zhou2020meta} uses meta-learning to automatically learn symmetries in the data. 
% \citet{zhou2020meta} is about defining a weight-sharing scheme to decompose the weights of a fully-connected layer and not group convolution as in our work. 
Yet their weight-sharing scheme and the encoding of the symmetry generators is very different from ours. % (see appendix \ref{ap:compare} for detailed discussion). 
% \ry{do we compare with them?} 
% Also their use of Kronecker factorization to sparsify the symmetry matrix restricts the symmetries to be local and, thus, introduces bias.
% Assuming a discrete group $G=\{g_1,\dots, g_n\}$, they decompose the weights $W\in \R^{s\times s}$ of a fully-connected layer, acting on $\vx \in \R^s$ as $\mathrm{vec}(W) = U^Gv$ where $U^G\in \R^{s\times s}$ are the ``symmetry matrices'' and $v\in \R^s$ are the ``filter weights''. 
% Then they use meta-learning to learn $U^G$ and during the main training keep $U^G$ fixed and only learn $v$.
% We may compare MSR to our approach by setting $d=s$. 
% First, note that although the dimensionality of $U\in \R^{nd\times d}$ seems similar to our $L \in \R^{n\times d\times d}$, the $L_i$ are $n$ matrices of shape $d\times d$, whereas $U$ has shape $(nd) \times d$ with many more parameters than $L$. 
% Also, the weights of L-conv $W\in \R^{n\times m_l \times m_{l-1}}$, with $m_l$ being the number of channels, are generally much fewer than MSR filters $v\in \R^d$. 
% Finally, the way in which $Uv$ acts on data is different from L-conv, as the dimensions reveal.
% The prohibitively high dimensionality of $U$ requires MSR to adopt a sparse-coding scheme, mainly Kronecker decomposition.
% Though not necessary, we too choose to use a sparse format for $L_i$, finding that very low-rank $L_i$ often perform best. 
% A Kronecker decomposition may bias the structure of $U^G$ as it introduces a block structure into it.
% In a concurrent work,
\citep{benton2020learning} propose Augerino, a method to learn equivariance with neural networks, but restricted to
% Augerino uses data augmentation to transform the input data, which means it is restricting the group to be  
a subgroup of the augmentation transformations. 
% Augerino learns which subset of the augmentations improved the prediction. 
% This is done by writing 
% The data augmentation is written as $g_\eps = \exp\pa{\sum_i \eps_i \theta_i L_i} $ (equation (9) in \cite{benton2020learning}), with randomly sampled $\eps_i\in [-1,1]$. 
% $\theta_i$ are trainable weights which determine which $L_i$ helped with the learning task. 
% Furthermore, 
Their Lie algebra is fixed to affine transformations in 2D (translations, rotations, scaling and shearing). 
% In contrast, 
Our approach is more general.  
We learn the $L_i$ directly without restricting to known symmetries. % them to be a known set of generators.  
Additionally, we do not use the exponential map or matrix logarithm, hence, our method is easy to implement.  
Lastly, Augerino uses sampling to effectively cover the space of group transformations. 
Since we work with the Lie algebra rather the group itself, we do not require sampling.
%and only need to learn a basis for the Lie algebra, which generally a small number of Lie algebra elements.
% The main differences with our approach 
% , which mean they are assuming the group to be a subgroup of their augmentation transformations. 
% Specifically, they assume the group to be affine transformations and learn which sub-group of it is used in the dataset. 
% In eq. 9, the $G_i$ (equivalent to our $L_i$) are the generators of affine transformations and the state below eq. 9 the $G_1,...G_6$ are predefined translations, rotations, scaling and shearing.  
% In contrast, we are learning $L_i$ and do not restrict them to a known set of generators. 

Next, we will provide some examples.
% implementation details and relations with physics.
% implementation details of L-conv and how it relates to other architectures such as CNN \citep{lecun1989backpropagation} and graph convolutional networks (GCN) \citep{kipf2016semi}.  
}
\out{
\subsection{Interpretation and Examples \label{sec:interpret-continuous} }
% \paragraph{General case} 
The $gL_i\cdot df/dg$ in \eqref{eq:L-conv} can be written in terms of partial derivatives $\ro_\alpha f(\vx) = \ro f/\ro \vx^\alpha$. 
In general, using $\vx^\rho = g^\rho_\sigma \vx_0^\sigma$, we have ${df(g\vx_0)\over dg^\alpha_\beta} =\vx_0^\beta \ro_\alpha f(\vx) $, and so
% \ry{add a cartoon}
\begin{align}
    % \vx^\mu &= g^\rho_\sigma \vx_0^\sigma &
    % {df(g\vx_0)\over dg^\alpha_\beta}
    % &= {d(g^\rho_\sigma \vx_0^\sigma) \over dg^\alpha_\beta}\ro_\rho f(\vx) = \vx_0^\beta \ro_\alpha f(\vx)
    % \label{eq:dgx0-dg}
    % \\
    gL_i\cdot {df\over dg} &= [gL_i]^\alpha_\beta \vx_0^\beta \ro_\alpha f(\vx) = [gL_i\vx_0]\cdot \del f(\vx) = \hat{L}_if(\vx)
    \label{eq:dfdg-general}
\end{align}
Hence, for each $L_i$, the pushforward $gL_i$ generates a flow on $\mathcal{S}$ through the vector field $\hat{L}_i\equiv gL_i\cdot d/dg = [gL_i\vx_0]^\alpha \ro_\alpha$ (Fig. \ref{fig:Lie-group-S}). 
% $\hat{L}_i$ is related to the Maurer-Cartan form $\omega = g^{-1} \ro_\alpha g d\vx^\alpha$, which encodes the pushforward.
% Write $f(gv_\eps) = f(g+ \eta(\eps)^\alpha \ro_\alpha g)$. 
% We have $ \eta^\alpha \ro_\alpha g = \eps^i g L_i$, so $ \eps^i L_i \eta^\alpha g^{-1} \ro_\alpha g = \eta \cdot \omega $.
% The Maurer-Cartan form defines a way to parallel transport (pushforward) and define a basis frame on  
% Vector fields are sections of the tangent bundle $T\mathcal{S}$.
% We will encounter $gL_i\vx_0$ again on a discretized $T\mathcal{S}$ again below. 
Being a vector field $\hat{L}_i\in T\mathcal{S}$ (i.e. 1-tensor), $\hat{L}_i$ is basis independent, meaning 
for $v \in G$, $\hat{L}_i(v\vx) = \hat{L}_i$. 
Its components transform as $[\hat{L}_i(v \vx)]^\alpha =[vgL_i\vx_0]^\alpha = v^\alpha_\beta \hat{L}_i(\vx)^\beta $, while the partial transforms as $\ro / \ro[v\vx]^\alpha = [v^{-1}]^{\gamma}_\alpha \ro_\gamma$. 

\paragraph{Explicit examples}
Using \eqref{eq:dfdg-general} we can calculate the form of L-conv for specific groups (details in SI \ref{ap:examples-continuous}). 
For \textbf{translations} $G=T_n = (\R^n,+)$ we find the generators become simple partial derivatives $\hat{L}_i = \ro_i$ (SI \ref{ap:example-Tn}), yielding $f(\vx) + \eps^\alpha \ro_\alpha f(\vx)$. 
For \textbf{2D rotations} (SI \ref{ap:example-so2}) 
we find $\hat{L} \equiv \pa{x \ro_y - y \ro_x } = \ro_\theta$, which in physics is called the angular momentum operator about the z-axis in quantum mechanics and field theories. %, which generates rotations around the $z$ axis. 
For rotatons with scaling, 
\textbf{Rotation and scaling} $G= SO(2)\times \R^+$, 
we have two $L_i$, one $\hat{L}_\theta=\ro_\theta $ from $so(2)$ and a scaling with $L_r = I$, yielding
$\hat{L}_r = x\ro_x+y\ro_y= r\ro_r $. % (SI \ref{ap:examples-continuous}) 
}%%%%
% Next, we discuss the form of L-conv on discrete data. 

\section{Experiments \label{ap:experiments}}

We conduct a set of experiments to see how well L-conv can extract infinitesimal 
generators.

\paragraph{Nonlinear activation}
% Regarding nonlinear activation functions, as discussed in \nd{cite prior work}, it depends on whether they commute with the group action. 
% When $G$ is not a Lie group, many point-wise nonlinearities can preserve it. 
% As many nonlinear activation functions are not compatible with equivariance. 
% Note that, similar to discussion in
As noted in \cite{weiler20183d}, an arbitrary nonlinear activation $\sigma $ may not keep the architecture equivariant under $G$. %, but linear activation remains equivariant.
However, as we showed in SI \ref{ap:theory-extended} (\textbf{Extended equivariance for L-conv}), the feature dimensions can pass through any nonlinear neural network without affecting the equivariance of L-conv. 
This means that the weights of the nonlinear layer should act only on $\mathcal{F}$ and not $\mathcal{S}$. 

\out{
In particular, for a fixed $i$ and $\eps\ll 1$, we have 
\begin{align}
    \vx_i &= (I+\eps L_i) \vx_\mu = \vx_\mu+ \eps \sum_{\nu\ne \mu} \mA_{\mu\nu} \Gamma_{i \mu }^\nu \vx_\nu \in <\mu> \\
    [L_i]^\nu_\mu &= \vx_\nu^T L_i \vx_\mu = {1\over \eps }\pa{\vx_\nu^T\vx_i- \delta^\nu_\mu}= \mA_{\mu\nu}\Gamma_{i\mu}^\nu
    % \vx_\nu^T\vx_i &= \delta^\nu_\mu + \eps [L_i]^\nu_\mu = \delta^\nu_\mu + \eps \mA_{\mu\nu}[l_i]^\nu 
\end{align}
Next, we note that writing $g_\mu = P\exp[\int_\gamma dt^iL_i]$ we can use the Lie algebra $[L_i,L_j={c_{ij}}^kL_k$ to pass $L_i$ through

Let $(I+\eps L_i) \vx_0 \in <0> $ for small $\eps \ll 1$. 
The continuity of $G$ action means that for each $i$, $\exists \eps \ll 1 $ such that $g_\mu (I+\eps L_i) \vx_0 \in<\mu>$. 
Therefore
\begin{align}
    (I+\eps L_i) \vx_0& = \vx_0+ \eps \sum_{\rho\ne 0} \mA_{0\rho} \Gamma_{i 0 }^\rho \vx_\rho \cr
    g_\mu (I+\eps L_i) \vx_0& = \vx_\mu+ \eps \sum_{\nu\ne \mu} \mA_{\mu\nu} \Gamma_{i \mu }^\nu \vx_\nu = g_\mu \pa{\vx_0+ \eps \sum_{\rho\ne 0} \mA_{0\rho} \Gamma_{i 0 }^\rho \vx_\rho}\cr
    \mA_{\mu\nu} \Gamma_{i \mu }^\nu   \sum_{\rho\ne 0} \mA_{0\rho} \Gamma_{i 0 }^\rho \vx_\nu^T g_\mu\vx_\rho
\end{align}

This means that any small $v_\eps = I+\eps^iL_i \in T_IG$ must move points $\vx_\mu$ within their neighborhood, meaning for $v_\eps = I+\eps^iL_i \in G$ close to identity and $\vx_\mu \in \mathcal{S}$ we have
\begin{align}
    % \forall v_\eps &= I+\eps^iL_i \in G, \quad \forall\vx_\mu \in \mathcal{S},\quad 
    &\exists c\in \R^d, & v_\eps \vx_\mu &= \sum_\nu c^\nu \mA_{0\nu}\vx_\nu \in <\mu >. 
    \label{eq:G-neigh-preserve}
\end{align}
}%%%

% Next, we establish relations between L-conv and other commonly used architectures. 

\paragraph{Implementation}
The basic way to implement L-conv is as multiple parallel GCN units with aggregation function $f(A)$ being (propagation rule) being $\hat{L}_i$. 
We do not use Deep Graph Library (DGL) or other libraries, as we want to make $\hat{L}_i$ learnable for discovering symmetries. 
A more detailed way to implement L-conv is to encode
$\hat{L}_i$ in the form of
% \eqref{eq:hat-L-discrete}
\eqref{eq:hat-L-discrete0} to ensure that there is an underlying shared generator $\hat{\ell}_i$ for all $\mu$ which is pushed forward using $g_\mu$, shared for all $i$.  
To implement L-conv this way, we need the lift $g_\mu $ and the edge weights $w_i^\alpha = [\hat{\ell}_i]_0^\alpha $. 
% When the graph for the topology of $\mathcal{S}$ is known, L-conv can be implemented starting from a graph convolutional network (GCN) \citep{kipf2016semi}, using
% $\mB$ for the edge list.
% Next, we add edge features $[\hat{\ell}_i]^\alpha_\mu$, which differs from GCN. 
With known symmetries, the learnable parameters are $W^0$ and $\ba{\eps}^i$. 
When the topology and hence $\mB$ is known, we can encode it into the geometry and only learn the edge weights $[\hat{\ell}_i]^\alpha_\mu$, similar to edge features in message passing neural networks (MPNN) \citep{gilmer2017neural}. In general each $\hat{\ell}_i$ has $ |\mathcal{E}|$ (i.e. number of edges) components. 
We can further reduce these using \eqref{eq:hat-L-discrete0}, where instead of $n$ matrices $\hat{\ell}_i$, we learn one $g_\mu$ shared for all, and a small set of elements $[\hat{\ell}_i]_0^\alpha $. 
This is easiest when the graph is a regular lattice and each vertex has the same number of neighbors. 
When the topology of the underlying space is not known (e.g. point cloud or scrambled coordinates), we can learn $\hat{L}_i$ as $d\times d$ matrices. 
We do this for the scrambled image tests, where we encode $\hat{L}_i$ as low-rank matrices.

\paragraph{Symmetry Discovery Literature}
In addition to simplifying the construction of equivariant architectures, our method can also learn the symmetry generators from data. 
Learning symmetries from data has been studied before, but mostly in restricted settings.
Examples include commutative Lie groups as in \citet{cohen2014learning}, 2D rotations and translations in 
\citet{rao1999learning}, \citet{sohl2010unsupervised} or permutations \citep{anselmi2019symmetry}. 
% Perhaps the closest to our work, in spirit, is 
\citet{zhou2020meta} uses meta-learning to automatically learn symmetries in the data. 
% \citet{zhou2020meta} is about defining a weight-sharing scheme to decompose the weights of a fully-connected layer and not group convolution as in our work. 
Yet their weight-sharing scheme and the encoding of the symmetry generators is very different from ours. % (see appendix \ref{ap:compare} for detailed discussion). 
% \ry{do we compare with them?} 
% Also their use of Kronecker factorization to sparsify the symmetry matrix restricts the symmetries to be local and, thus, introduces bias.
% Assuming a discrete group $G=\{g_1,\dots, g_n\}$, they decompose the weights $W\in \R^{s\times s}$ of a fully-connected layer, acting on $\vx \in \R^s$ as $\mathrm{vec}(W) = U^Gv$ where $U^G\in \R^{s\times s}$ are the ``symmetry matrices'' and $v\in \R^s$ are the ``filter weights''. 
% Then they use meta-learning to learn $U^G$ and during the main training keep $U^G$ fixed and only learn $v$.
% We may compare MSR to our approach by setting $d=s$. 
% First, note that although the dimensionality of $U\in \R^{nd\times d}$ seems similar to our $L \in \R^{n\times d\times d}$, the $L_i$ are $n$ matrices of shape $d\times d$, whereas $U$ has shape $(nd) \times d$ with many more parameters than $L$. 
% Also, the weights of L-conv $W\in \R^{n\times m_l \times m_{l-1}}$, with $m_l$ being the number of channels, are generally much fewer than MSR filters $v\in \R^d$. 
% Finally, the way in which $Uv$ acts on data is different from L-conv, as the dimensions reveal.
% The prohibitively high dimensionality of $U$ requires MSR to adopt a sparse-coding scheme, mainly Kronecker decomposition.
% Though not necessary, we too choose to use a sparse format for $L_i$, finding that very low-rank $L_i$ often perform best. 
% A Kronecker decomposition may bias the structure of $U^G$ as it introduces a block structure into it.
% In a concurrent work,
\citep{benton2020learning} propose Augerino, a method to learn equivariance with neural networks, but restricted to
% Augerino uses data augmentation to transform the input data, which means it is restricting the group to be  
a subgroup of the augmentation transformations. 
% Augerino learns which subset of the augmentations improved the prediction. 
% This is done by writing 
% The data augmentation is written as $g_\eps = \exp\pa{\sum_i \eps_i \theta_i L_i} $ (equation (9) in \cite{benton2020learning}), with randomly sampled $\eps_i\in [-1,1]$. 
% $\theta_i$ are trainable weights which determine which $L_i$ helped with the learning task. 
% Furthermore, 
Their Lie algebra is fixed to affine transformations in 2D (translations, rotations, scaling and shearing). 
% In contrast, 
Our approach is more general.  
We learn the $L_i$ directly without restricting to known symmetries. % them to be a known set of generators.  
Additionally, we do not use the exponential map or matrix logarithm, hence, our method is easy to implement.  
Lastly, Augerino uses sampling to effectively cover the space of group transformations. 
Since we work with the Lie algebra rather the group itself, we do not require sampling.
%and only need to learn a basis for the Lie algebra, which generally a small number of Lie algebra elements.
% The main differences with our approach 
% , which mean they are assuming the group to be a subgroup of their augmentation transformations. 
% Specifically, they assume the group to be affine transformations and learn which sub-group of it is used in the dataset. 
% In eq. 9, the $G_i$ (equivalent to our $L_i$) are the generators of affine transformations and the state below eq. 9 the $G_1,...G_6$ are predefined translations, rotations, scaling and shearing.  
% In contrast, we are learning $L_i$ and do not restrict them to a known set of generators. 

% \subsection{Implementation of tensor L-conv 
% \label{sec:implememnt}
% }

\subsection{Approximating 1D CNN}
\begin{figure}
    \centering
    \includegraphics[width=\linewidth]{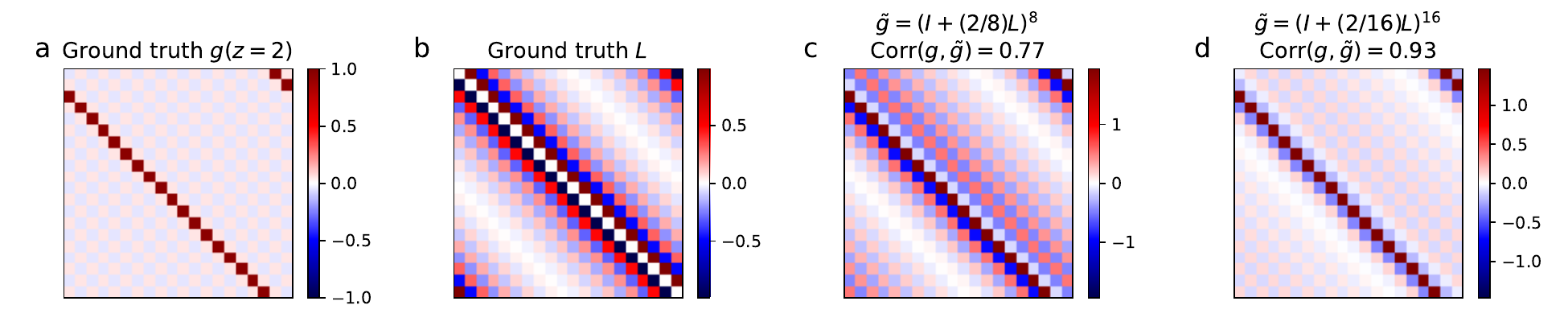}
    \caption{{1D Translation:} 
    Using the Shannon-Whittaker Interpolation (SWI) one can generate continuous shifts on discrete data. 
    These include integer shifts (a, ground truth).
    (SWI) also yields an infinitesimal generator $L$ for shifts (b). 
    This $L$ can be used to approximate finite shifts using $\tilde{g}_n(z)= (I+z/n L)^n$, with $n\to\infty$ yielding $\exp[zL]$.  
    (c) and (d) show the approximation of a shift by two pixels using $n=8$ and $n=16$. 
    % The correlation with ground truth $g(z)$ is calculated using $Corr(g,\tilde{g}) = \Tr{g^T g}/(\|g\|\|\tilde{g}\|)$ 
    }
    \label{fig:1D-shift}
\end{figure}
% While it is trivial to find permutations taking $\vx_0$ to $\vx_\mu$, most of them won't preserve the topology as \eqref{eq:lift-neigh}. 
% be an element of a given $G$. 
% For instance, 
As discussed in the text, 
% This is a special case of expressing G-conv as L-conv when the group is continuous 1D translations. 
% The arguments here generalize trivially to higher dimensions. 
\citet[sec. 4]{rao1999learning} used the Shannon-Whittaker Interpolation (SWI) \citep{whitaker1915functions} to define continuous translation on periodic 1D arrays as $\vf'_\rho = g(z)_\rho^\nu \vf_\nu $.
Here $g(z)_\rho^\nu = {1\over d} \sum_{p=-d/2}^{d/2} \cos \pa{{2\pi p\over d}(z+\rho -\nu)} $ approximates the shift operator for continuous $z$.
These $g(z)$ form a 1D translation group $G$ as $g(w)g(z)=g(w+z)$ with $g(0)^\nu_\rho = \delta^\nu_\rho$.
For any $z=\mu \in \Z$,
$g_\mu=g(z=\mu)$ are circulant matrices that shift by $\mu$ as $ [g_\mu]^\rho_\nu=\delta^\rho_{\nu-\mu}$.
\out{
Thus, a 1D CNN with kernel size $k$ can be written suing $g_\mu$ as 
\begin{align}
    F(\vf)_{\nu}^a & = \sigma\pa{\sum_{\mu=0}^k  \vf_{\nu-\mu}^c [W^\mu ]^a_{c}  +b^a} = \sigma\pa{\sum_{\mu=0}^k  [g_\mu\vf]_\nu^c [W^\mu ]^a_{c}  +b^a}  
    \label{eq:CNN-1D}
\end{align}
where $W,b$ are the filter weights and biases. 
}%%%%
$g_\mu$ can be approximated using the Lie algebra and written as multi-layer L-conv as in sec. \ref{sec:G-conv2L-conv}. 
Using $g(0)^\nu_\rho \approx \delta(\rho-\nu)$, the single Lie algebra basis $[\hat{L}]_0=\ro_z g(z)|_{z\to0}$, acts as $\hat{L}f(z) \approx -\ro_z f(z)$ (because $\int \ro_z \delta (z-\nu)f(z) =-\ro_\nu f(\nu)$). 
% The Lie algebra has a single basis $[\hat{L}]_0=\ro_z g(z)|_{z\to0}$. 
% Since $g(0)^\nu_\rho \approx \delta(\rho-\nu)$, we have $\hat{L}f(z) \approx -\ro_z f(z)$. 
% , meaning $\hat{L}$ is a Fourier series for $\ro_z$. 
% We have
% \begin{align}
%     [\hat{L}]^\nu_\rho&=\hat{L}(\rho-\nu) = \sum_p {2\pi p \over d^2} \sin \pa{{2\pi p\over d}(\rho -\nu)},&
%     [\hat{L}\vf]_\rho &= \sum_\nu \hat{L}(\rho-\nu)\vf_\nu = [\hat{L}\star \vf]_\nu
% \end{align}
Its components are
$\hat{L}^\nu_\rho={L}(\rho-\nu) = \sum_p {2\pi p \over d^2} \sin \pa{{2\pi p\over d}(\rho -\nu)} $, which are also circulant due to the $(\rho-\nu)$ dependence.
% are approximately 
% ${L}(z)\approx {d\over \pi} \sin(\pi z) \br{z^{-2}+(d-z)^{-2}}$ on $z\in\Z$. 
% \nd{%From sinc form of $g(z)$. 
% What's the exact form?}
Hence, $[\hat{L}\vf]_\rho = \sum_\nu {L}(\rho-\nu)\vf_\nu = [{L}\star \vf]_\nu$ is a convolution. 
\citet{rao1999learning} already showed that this $\hat{L}$ can reproduce finite discrete shifts $g_\mu$ used in CNN.
They used a primitive version of L-conv with $g_\mu = (I+\eps \hat{L})^N$. 
Thus, L-conv can approximate 1D CNN. 
This result generalizes easily to higher dimensions. 
\out{
% $\hat{L}$ is a circulant matrix. 
$\hat{L}^\nu_\rho$ becomes concentrated around small $\rho-\nu$ when $d\gg1$  
% Thus, for large $d$ 
and we can approximate $L$ using $k\ll d$ nearest neighbors of each node.
Here $\mathcal{S}$ is a periodic 1D lattice with 
% $k$ nearest neighbor 
% graph encoding the topology is a circulant matrix
and adjacency matrix
$\mA_{\mu\nu} = \sum_{-k<\nu<k} \delta_{\mu,\mu+i}$ being a circulant matrix.
}%%%%

Figure \ref{fig:1D-shift} shows how this approximation works. 
(b) shows the analytical form of $\hat{L}$. 
(c) and (d) show two approximations of $g_2 = g(z=2)$, shift by two pixels, using $\tilde{g}(z)= (I+z/n \hat{L})^n$ with $n=8$ and $n=16$. 
We can evaluate the quality of these approximations using their cosine correlation defined as $\mathrm{Corr}(g,\tilde{g}) = \Tr{g^T \tilde{g}}/(\|g\|\|\tilde{g}\|)$
where $\|A\| = \sqrt{\sum_{i,j}(A^i_j)^2}$ is the Frobenius $L_2$ norm. 
(c) shows $0.77$ correlation and (d) has $0.93$.

\subsection{Extracting 2D rotation generator for fixed small rotations 
\label{ap:exp-L-small}
}
\paragraph{Ground truth}
We can use the same SWI 1D translation generators discussed above for CNN as $\ro_x$ and $\ro_y$ to construct the rotation generator $L_\theta = x\ro_y-y\ro_x$ (Fig. \ref{fig-ap:L-so2-combined}, a). 
We will use cosine correlation with this  $L_\theta $ to evaluate the quality of the learned $\hat{L}$. 
As we will find below, the best outcome is from $\hat{L}$ learned using a recursive L-conv learning the angle of rotation between a pair of images (Fig. \ref{fig-ap:L-so2-combined}, b) with $0.70$ correlation. 

\out{
\paragraph{Learning symmetries using L-conv}
We can use L-conv for learning the Lie algebra as well. 
In fact, the architecture used in \citet{rao1999learning} is a basic version of L-conv with $W^0 = 1$. 
They show that with a small fixed $\ba{\eps}^i$ they could learn learn the single $L_i$ for continuous 1D translations and for 2D rotations.
Indeed, the architecture used in \citet{rao1999learning} is a special case of L-conv with $W^0 = 1$ and $\ba{\eps}^i\in \R$. 
We conducted a more advanced experiment with L-conv learning rotation angles in pairs of random images. % (SI \ref{ap:exp-L-multi})
% experiments for with $G=SO(2)$.
% In the first test, we used fixed small rotation angle $\pi/10$ and used 
Figure \ref{fig-ap:L-so2-combined} (c,d) shows the learned $L$. 
(c) shows $L\in so(2)$ learned using L-conv in $3$ recursive layers to learn rotation angles between a pair of $7\times 7$ random images $\vf$ and $R(\theta) \vf$ with $\theta \in [0,\pi/3)$. 
Middle and right of Fig. \ref{fig-ap:L-so2-combined} are experiments with fixed small rotation angle $\theta = \pi/10$ (SI \ref{ap:exp-L-small}).
Middle is the $L$ learned using L-conv and right is using the exact solution $R = (YX^T)(X^TX)^{-1}$. 
While this $L$ is less noisy, it does not capture weights beyond first neighbors of each pixel. 
Right shows the generator calculated using the exact solution to the linear regression problem with $\theta = \pi/10$. 
The SGD solutions using L-conv are less noisy and capture more details.
}%%%%

\begin{figure}
    \centering
    \includegraphics[width=.35\linewidth]{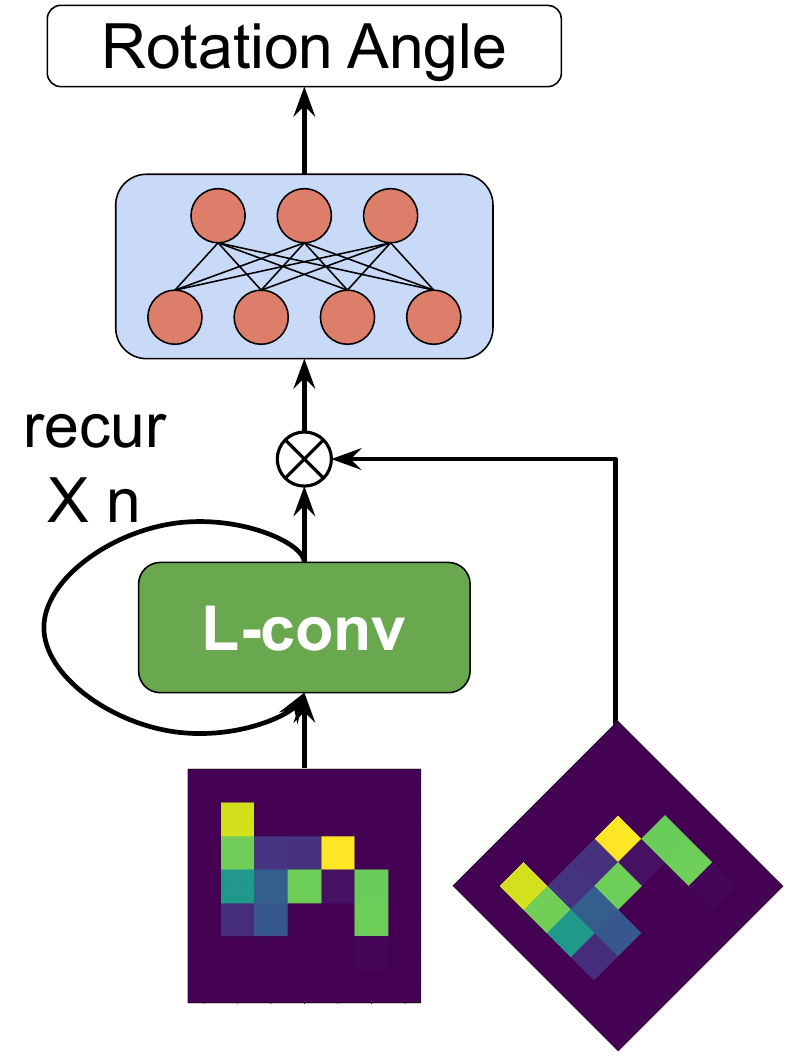}
    \includegraphics[width=.64\linewidth]{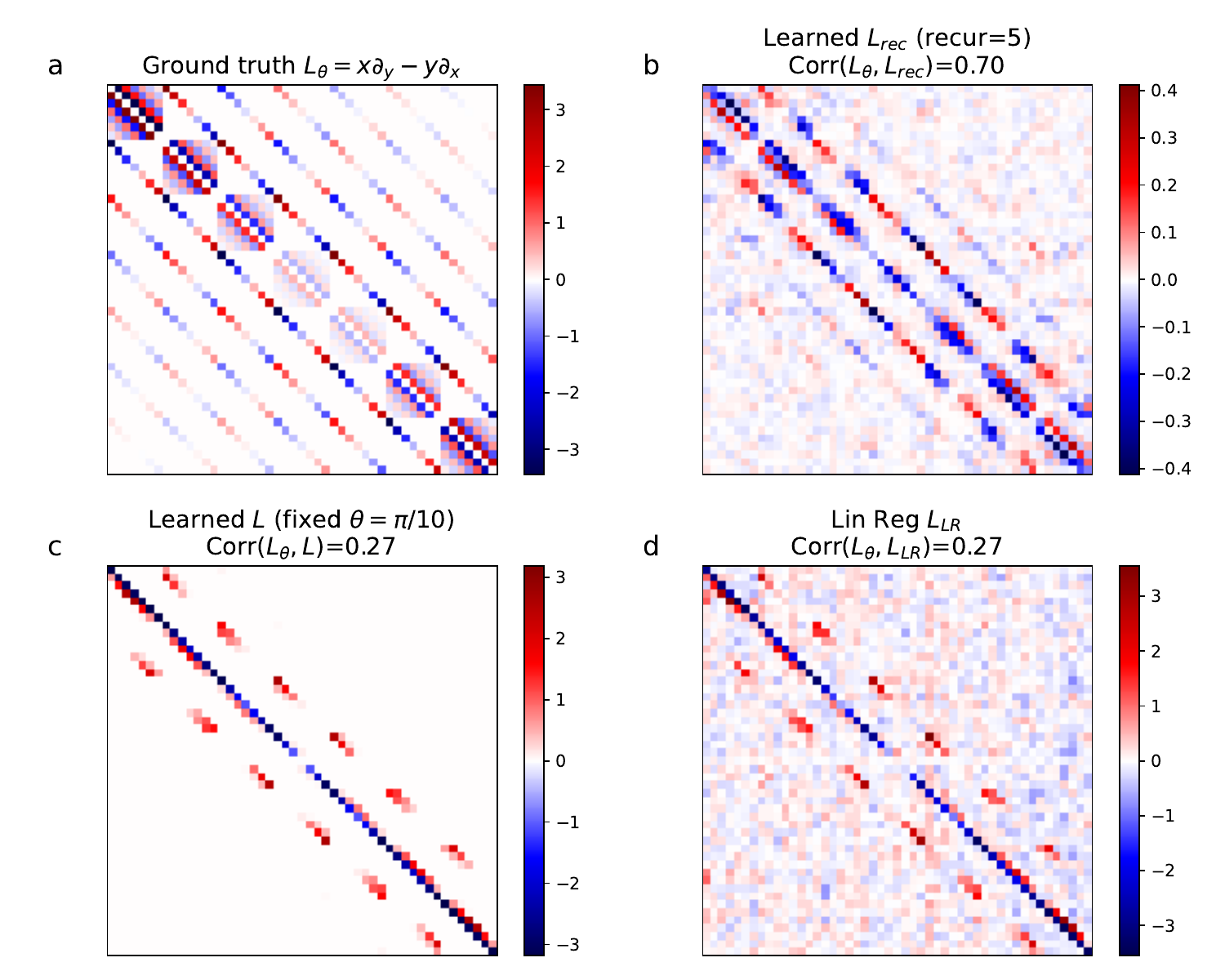}
    \caption{
    \textbf{Learning the infinitesimal generator of $SO(2)$} 
    Left shows the architecture for learning rotation angles between pairs of images. % (SI \ref{ap:exp-L-multi}). 
    (a) shows the rotation generator calculated analytically using Shannon-Whittaker interpolation. 
    (b) is the $\hat{L}$ learned using recursive L-conv learning rotation angle between a pair of images. %to learn rotation angle between pairs of images.  
    %in $3$ recursive layers to learn rotation angles between a pair of $7\times 7$ random images $\vf$ and $R(\theta) \vf$ with $\theta \in [0,\pi/3)$. 
    (c) is an $L$ learned using a fixed small rotation angle $\theta = \pi/10$, and
    % While this $L$ is less noisy, it does not capture weights beyond first neighbors of each pixel. 
    (d) shows $\hat{L}$ found using the numeric linear regression solution from the fixed angle data.
    (b) has the highest cosine correlation (0.70) with the ground truth, compared to 0.27 for $\hat{L}$ extracted using small angles. 
    % The SGD solutions using L-conv are less noisy and capture more details.
    }
    \label{fig-ap:L-so2-combined}
\end{figure}

\out{
\begin{figure}
    \centering
    \includegraphics[width=.17\linewidth]{figs2/L-conv-recur-3.pdf}
    \includegraphics[width=.82\linewidth]{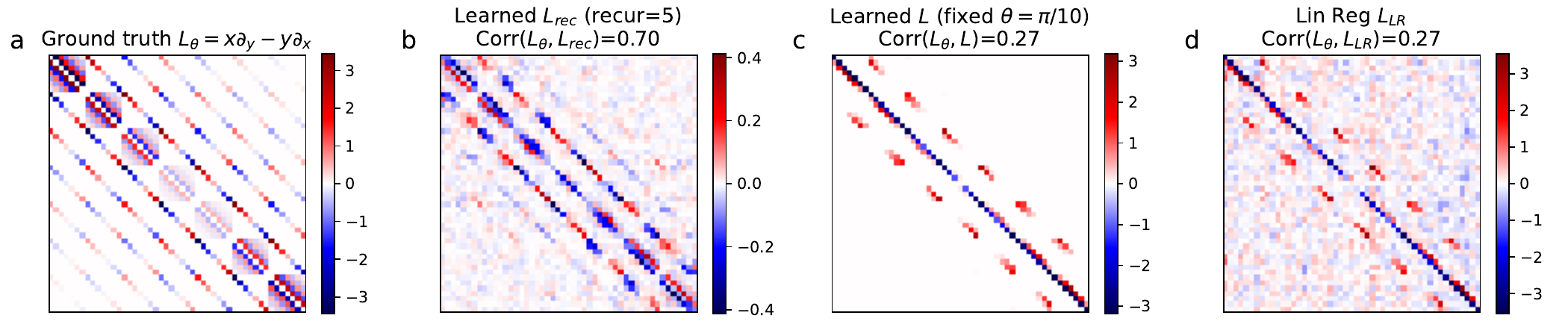}
    \caption{
    \textbf{Learning the infinitesimal generator of $SO(2)$} 
    Left shows the architecture for learning rotation angles between pairs of images. % (SI \ref{ap:exp-L-multi}). 
    Next to it is the $L$ learned using recursive L-conv in this experiment. %to learn rotation angle between pairs of images.  
    %in $3$ recursive layers to learn rotation angles between a pair of $7\times 7$ random images $\vf$ and $R(\theta) \vf$ with $\theta \in [0,\pi/3)$. 
    Middle $L$ is learned using a fixed small rotation angle $\theta = \pi/10$, and
    % While this $L$ is less noisy, it does not capture weights beyond first neighbors of each pixel. 
    right shows $L$ found using the numeric solution from the data. 
    % The SGD solutions using L-conv are less noisy and capture more details.
    }
    \label{fig-ap:L-so2-combined}
\end{figure}
}%%%%

% \nd{Add our own experiments here.}
\paragraph{Using fixed small angle}
In the first experiment we try to learn a small rotation with angle $\theta=\pi/10$ using a single layer L-conv (Fig. \ref{fig-ap:L-so2-combined},c).
This experiment was already done in \citep{rao1999learning}. 
The input is a random $7\times 7$ image $\vf$ with pixels chosen in $[-.5,0.5)$. 
The output $\vf'$ is the same image rotated by $\theta$ using pytorch affine transform.
Our training set contains 50,000 images., the test set was 10,000 images, 
batch size was 64. 
The code was implemented in pytorch and we used the Adam optimizer with learning rate $10^{-2}$. 
The experiments were run for 20 epochs. 
This problem is simply a linear regression with $\vf' = R\vf = (I+\eps \hat{L})\vf $. 
L-conv solves it using SGD and finds $\eps \hat{L}$. 
This problem can also be solved exactly using the solution to linear regression. 
Let $X=(\vf_1,\dots \vf_N)$ and $Y=(\vf'_1,\dots \vf'_N)$ be the matrix of all inputs and outputs, respectively. 
The rotation equation is $Y^T = RX^T$.
Thus, the rotation matrix is given by $R = (Y^TX)(X^TX)^{-1}$. 
Figure \ref{fig-ap:L-so2-combined} (c, d) shows the results of this experiment. 
The $L$ found using L-conv with SGD is much cleaner than the numerical linear regression solution $L_{LR} = (R-I)/\theta$. 
The loss becomes extremely small both on training and test data. 

\out{
\begin{figure}
    \centering
    \includegraphics[width=1\linewidth]{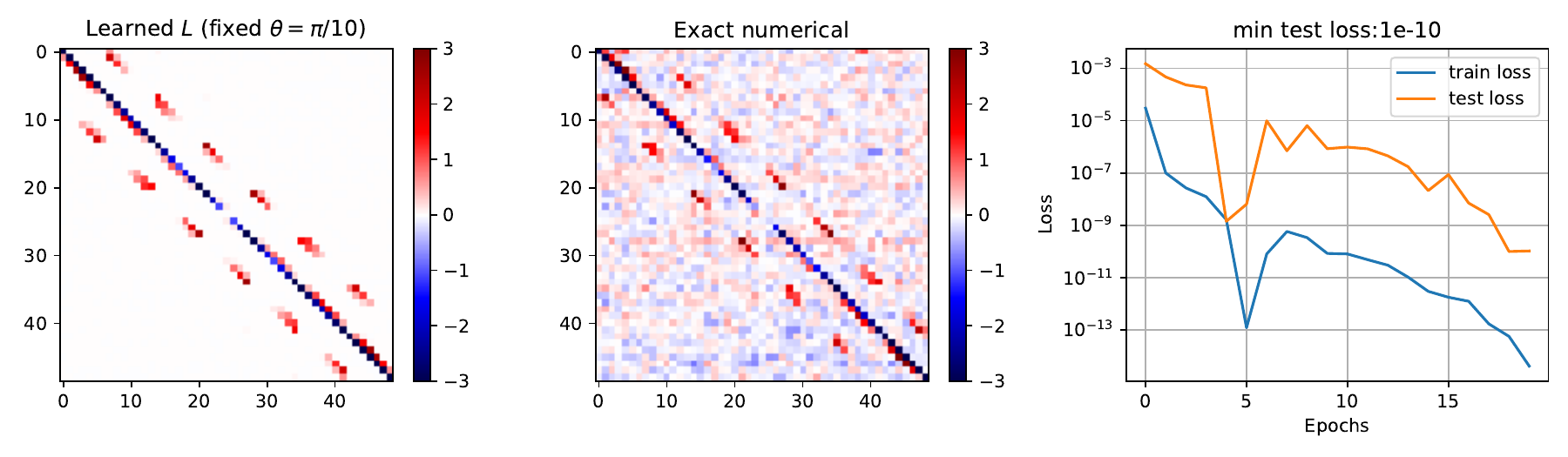}
    \caption{Single layer L-conv learns the infinitesimal generator of 2D rotations.
    The learned $L$ (left) is much cleaner than a numerical one calculated using the exact solution to the linear regression (middle). 
    Both training and test losses are extremely small. 
    }
    \label{fig:L-L0-so2}
\end{figure}
}%

\subsection{Learning rotation angle
\label{ap:exp-L-multi}
}
In this experiment we have a pair of input images $ (\vf_n, R(\theta_n)\vf_n)$, with $R(\theta) \in SO(2)$ (approximating 2D rotations).
The two inputs differ by a finite rotation with angle $\theta_n \in [0,\pi/8)$. 
Th task is to learn the rotation angle $\theta_n$. 
For this task we use a recursive L-conv. 
We set $W^0=I$.
The L-conv weight $\ba{\eps}$ is $m\times m$. 
To be able to encode multiple angles, we set $m=10$ and feed $10$ copies of the $\vf_n$ as input $\vh_0 = [\vf_n]\times 10$.
We pass this through the same L-conv layer $t=3$ times as $\vh_i = Q[\vh_{i-1}]$.  
In the final layer, we first take the dot product of the final output $\vh_t$ with the rotated input $\vy_n = R(\theta)\vf_n $ to obtain $\vg = \tanh \pa{{\vy_n}^T \vh_t}  $. 
We then pass the output $\vg \in \R^m$ through a fully-connected (FC) layer with $5$ nodes and tanh activation, and finally through a linear FC layer with one output to obtain the angle. 
The batch size was 16, Adam optimizer, learning rate $10^{-3}$, rest were default.

Despite being a much harder task than fitting a fixed angle rotation, the learned $\hat{L}$ of this experiment has the highest (0.70) cosine correlation with the ground truth $L_\theta$ (Fig. \ref{fig-ap:L-so2-combined}, b). 
Even though the architecture is rather complicated and L-conv is followed by two MLP layers, the $\hat{L}$ in L-conv learns the infinitesimal generator of rotations very well. 
We also conducted experiments with larger random images $(20\times 20)$ and larger angles of rotation $\theta_n \in [0,\pi/4)$ (Fig. \ref{fig:L-so2-20}).
While the accuracy of learning the angles is still pretty good (Fig. \ref{fig:L-so2-20}, e,test loss $2.7\times 10^{-4}$) 
the larger angles result in less correlation between the learned $\hat{L}$ and the ground truth $L_{\theta}$ (Fig. \ref{fig:L-so2-20}, b, correlation $0.12$ with 8 times recurrence). 
The learned $\hat{L}$ is closer to a finite angle rotation.
This may be because the with small number of recurrences the network found small but finite rotations approximate larger rotations better than using a true infinitesimal generator.  

\begin{figure}
    \centering
    \includegraphics[width=\linewidth]{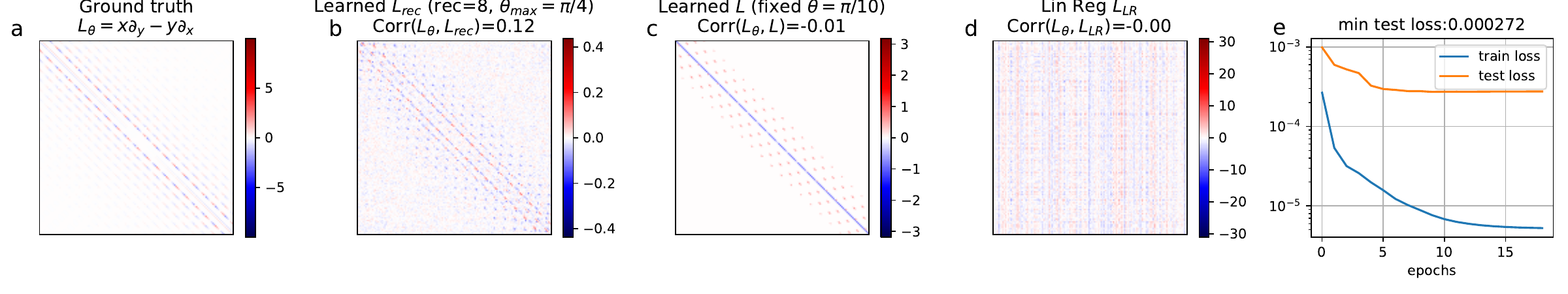}
    \caption{Learning $\hat{L}$ via larger rotation angles for larger images. 
    This time the correlation with ground truth $L_\theta$ is much less, but accuracy is still very good.}
    \label{fig:L-so2-20}
\end{figure}

\out{
\begin{figure}
    \centering
    \includegraphics[width=.8\linewidth]{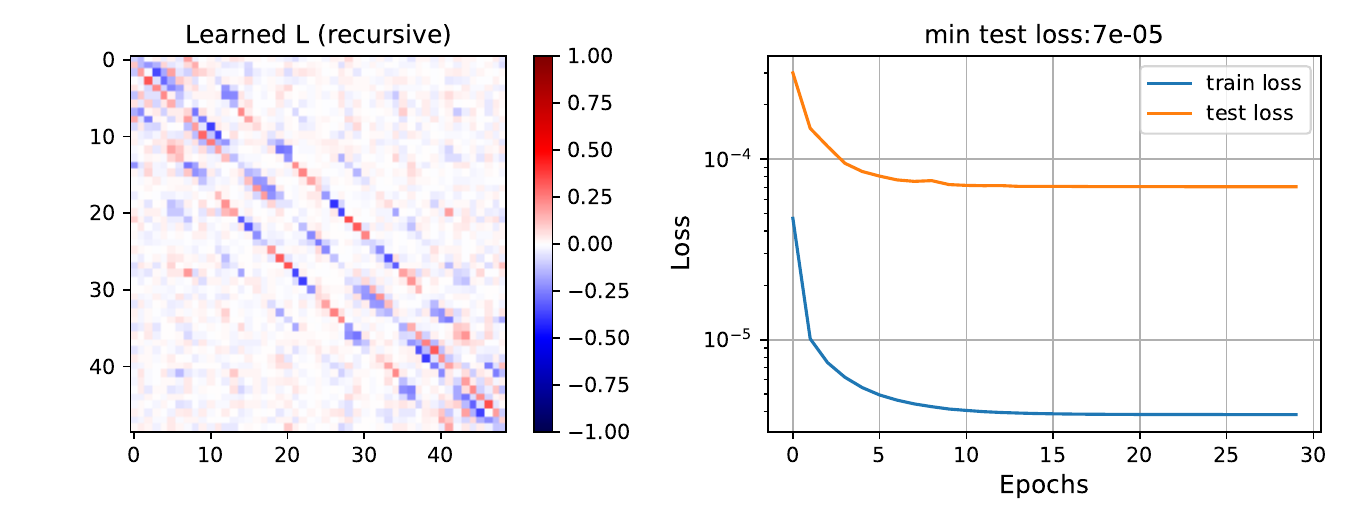}
    \caption{
    Recursively used L-conv for learning multiple rotation angles. 
    The learned $L$ (left) also captured some weights beyond first neighbors. 
    }
    \label{fig:L-so2-multi}
\end{figure}
}%%%%

\out{
\nd{Vector field plots of $\hat{L}_i$ in experiments.
Apply orthogonality with $B_{ij} = 2k\Tr{l_il_j}-2\Tr{l_i}\Tr{l_j}=$ }
}%%%%%

% \section{Experiments}

% \section{Discover Potential Symmetries \label{sec:learning}}

\out{
\subsection{Learning Lie  Algebra}
\ry{rotation MNIST and check the generators}
\nd{known symmetry experiments; two generators, non-abelian}

\nd{
\cite{rao1999learning} used data augmentation to learn single Lie algebra generators for 1D shift and 2D rotation (5x5 and 9x9 random images). 
They used fixed shifts (0.5 pixel) and angles ($0.2\approx \pi/15$ radians CW or CCW) for learning the generator $L$ ($G$ in their paper). 
They then use the learned $L$ to estimate the shift amount or angle.    
If we could learn the generator and the angle at the same time, we are doing something new. 
}

When dealing with \textit{unknown} continuous symmetry groups, it can be  impossible to design a G-conv.
The Lie algebra, however, has a much simpler, linear structure and universal for all Lie groups. 
Because of this, L-conv affords us with a powerful tool to probe systems with unknown symmetries. 
% One of the reasons we introduce L-conv is to have an equivariant architecture which is generic enough to be used to learn symmetries. 

L-conv is a generic weight-sharing ansatz and the number of $L_i$ is usually small. 
This means that even if we do not know $L_i$, it may be possible to \textit{learn} the $L_i$ from the data.

% In fact, as we show in our experiments, \edits{learning low-rank $L_i$ via SGD simultaneously with other weights} yield impressive performance on data with hidden symmetries (Fig. \ref{fig:L-conv-4-datasets}), without needing any input about the symmetry. 

% \subsection{Learning Lie Algebra Basis}
\edits{We learn the $L_i$ using SGD, simultaneously with $W^i$ and all other weights. 
Our current implementation is similar to a GCN $F(h) = \sigma(L_i\cdot h \cdot W^i +b)$ where both the weights $W^i$ and the propagation rule $L_i$ are learnable.
}
When the spatial dimensions $d$ of the input $x\in \R^{d\times c}$ is large, e.g. a flattened image, the $L_i$ with $d^2$ parameters can become expensive to learn. 
However, generators of groups are generally very sparse matrices. 
Therefore, we represent $L_i$ 
\edits{
using low-rank decomposition
% The $L_i$ is encoded as sparse matrices
$L_i = U_iV_i$. An encoder $V$ of shape $n_L\times d_h\times d$ encodes $n_L$ matrices $V_i$, and a decoder $U$ of shape $n_L\times d\times d_h$. 
Here $d$ is the input dimensions and $d_h$ the latent dimension with $d_h\ll d$ for sparsity.
}
% with an autoencoder structure with one or multiple hidden dimensions. 
% The hidden bottleneck allows for low-rank encoding of the $L_i$.

\out{
In order for the $L_i$ to form a basis for a Lie algebra, they should be closed under the commutation relations \eqref{eq:Lie-commutator}, as well as orthogonal under the Killing Form \cite{hall2015lie}. %[Chapter~6]
These conditions can be added to the model as regularizers (see Appendix \ref{ap:killing}), but 
% A naive implementation of these conditions 
regularization also introduces an additional time complexity of  $O(n_L^2d_h^2 d)$, which can be quite expensive compared to the $O(n_ld_hd)$ of learning $L_i$ via SGD. 
Therefore, in the experiments reported here we did not use any regularizers for $L_i$.

% \nd{clean up and move to appendix}
}%%%

\out{
\ND{Discuss Chelsea Finn's meta-learning}
\nd{
Discuss \cite{zhou2020meta}, 
Their weight sharing matrix $U$ looks different from ours. 
It is a Kronecker decomposition of the full filter convolved over the input. 
In our case, each $L$ generates one step of the convolution and gets one of the weights parameters. 
Each $L$ then moves in one direction with a given step size. 
Again, they mention finite groups, like permutation.
Their $U$ is a stack of all permutations and they will share the same weights. 
\citet{ravanbakhsh2020universal} Universal MLP: proves a universal approximation theorem for single hidden layer equivariant MLP for Abelian and finite groups.
}
}%%%

Next, we show that in certain cases, such as linear regression, the symmetry group %a subset of continuous symmetries 
can be derived analytically. 
While the results may not apply to more non-linear cases, they give us a good idea of the nature of the symmetries we should expect L-conv to learn. 
% Our experimental results with L-conv follow after that. 

\out{
\subsection{Equivariance and Invariance when Labels are Categorical}
We will first show that if $G$ is continuous and connected, then in problems with categorical labels such as classification, the only valid equivariances either keep the labels invariant, or only include a discrete subgroup of $G$.  

\begin{lemma}\thlabel{lem:const-rep}
The only representation of a connected Lie group $G$ on $\Z_n$ are constant. 
\end{lemma}
%%%%
\begin{proof}
A representation is a smooth homomorphism, meaning that it is continuous and infinitely differentiable.  
Since $G$ is a connected Lie and hence topological group, it has a connected continuous manifold.
A function $T:G\to \Z_n$ is continuous functions if its pre-image in $G$ is open.
This implies that if $T(u)=z$, there exist an open ball $u\in B_u$ on which $T$ is constant ($T(u') = z, \forall u'\in B_u$.
Consider two $u,v \in G$ for which $T(u)\ne T(v)$, belonging to open balls $u\in B_u$ and $v\in B_v$ over which $T$ is constant. 
Since $T(u)\ne T(v)$ we must have $B_u\cap B_v = \emptyset$. 
Therefore, any element in the boundary of $B_u$ cannot be in any other $B_v$, meaning the domain of $T$ is not all elements in $G$ and hence it cannot be a smooth homomorphism.
Thus, the only smooth $T:G\to \Z_n$ are constant functions.
\end{proof}

\begin{corollary}
In supervised learning with continuous inputs $\vx \in \R^d$ and categorical outputs $y\in \Z_2^m$, if the dataset is equivariant under a connected Lie group $G$ the only possible equivariance is label invariance, or equivariance under discrete subgroups of $G$.  
\end{corollary}
\begin{proof}
From Lemma \thref{thm:const-rep}, if $G$ is continuous and connected the only representations $T:G\to \Z_2$ must be constant, meaning the labels are kept invariant. 
For discrete subgroups of $G$ this restriction does not exist. 
\end{proof}

}%%%%

\subsection{Known Symmetry on Physical Systems}

\ry{ pendulum / potential well phase space, discover the ??}

}%%%%%%%

\section{Experiments on Images 
\label{ap:exp-image}
}

To understand precisely how L-conv performs in comparison with CNN and other baselines, we conduct a set of carefully designed experiments. 
Defining pooling for L-conv merits more research. 
Without pooling, we cannot use L-conv in state-of-the-art models for problems such as image classification. 
Therefore, we 
%comparing L-conv with CNN and fully connected (FC) layers 
use the simplest possible models in our experiments: one or two L-conv, or CNN, or FC layers, followed by a classification layer. 
We do not use any other operations such as dropout or batch normalization in any of the experiments. 
% This allows us to directly compare the effect of replacing a CNN with L-conv, for instance. 
% We conducted a series of experiments on images to assess the usefulness of L-conv. 

% $$
%  num_filters=32, 
%                 kernel_size=9, 
%                 L_hid= [16], #[16], 
%                 activation = 'relu',
%                 L_trainable = True),
% $$
\begin{figure*}[ht]
    \centering
    \includegraphics[width=.49\textwidth]{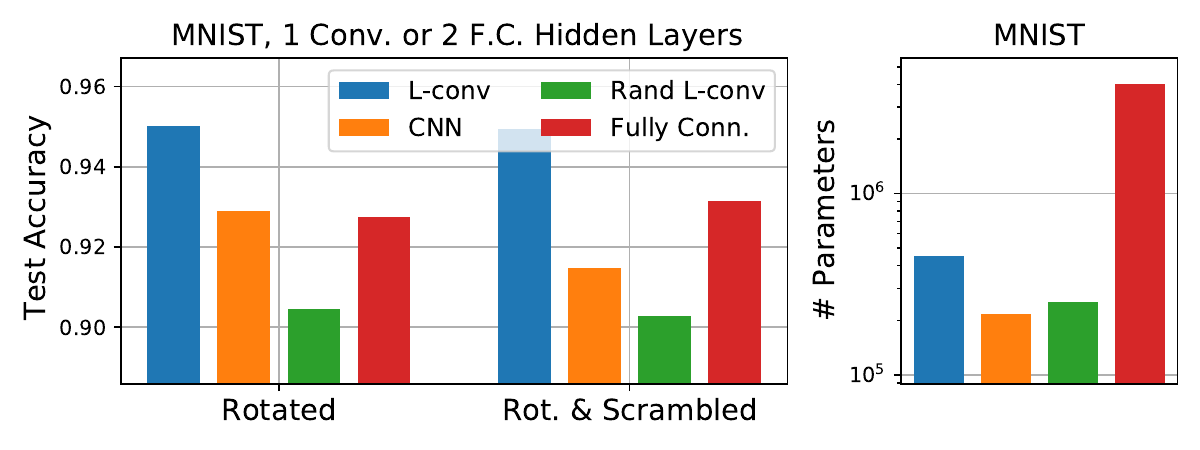}
    \includegraphics[width=.49\textwidth]{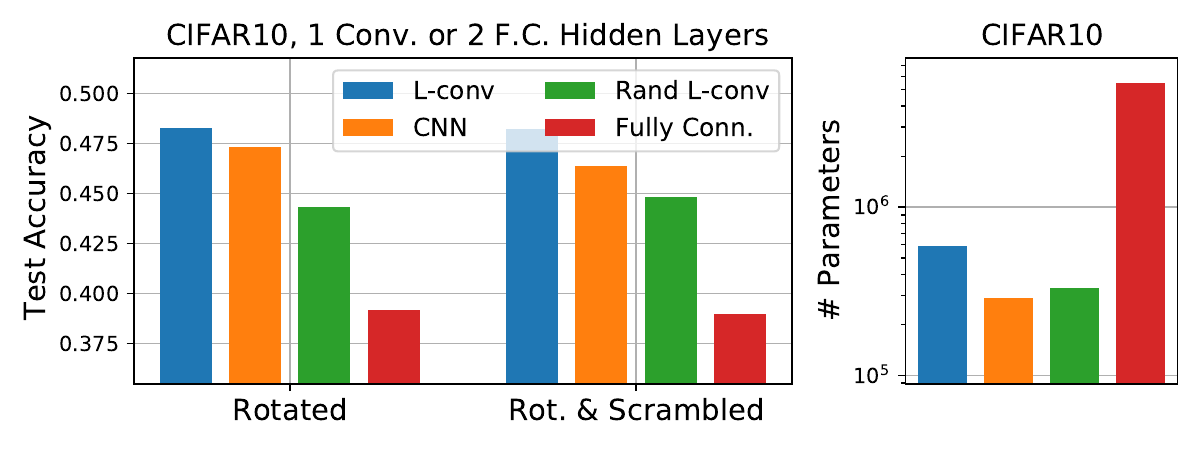}
    \includegraphics[width=.49\textwidth]{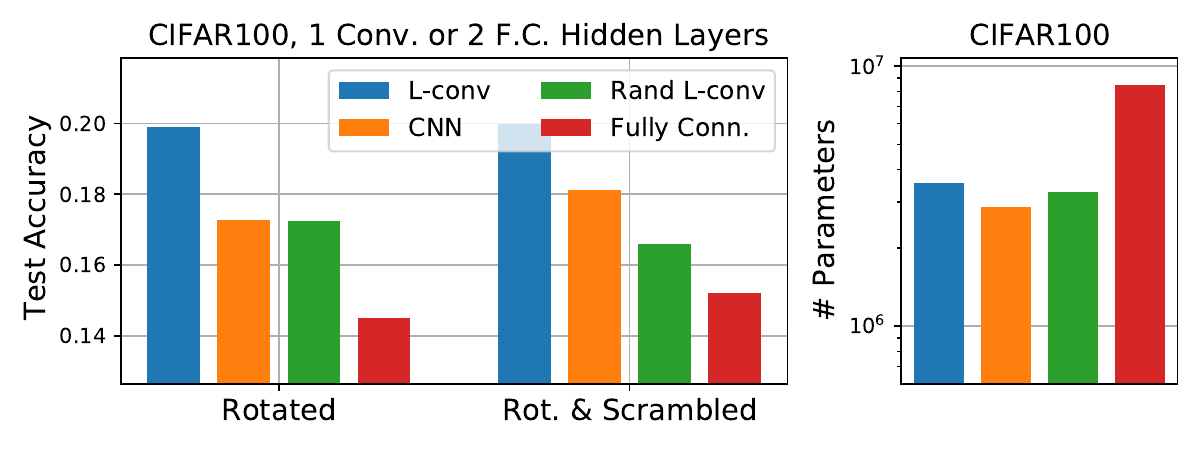}
    \includegraphics[width=.49\textwidth]{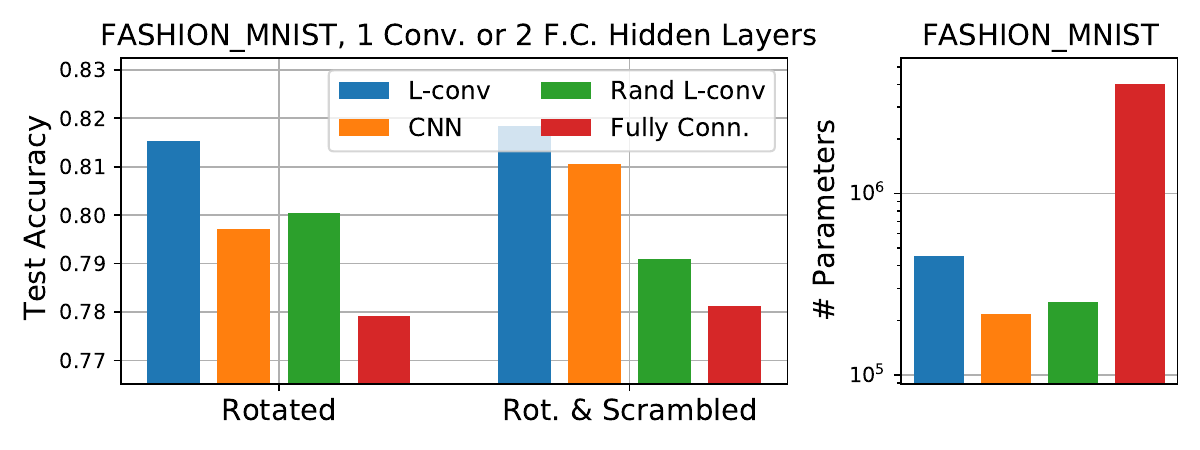}
    \caption{Results on four datasets with two variant: ``Rotated'' and ``Rotated and scrambled''. 
    In all cases L-conv performs best. 
    On MNIST, FC and CNN come close, but using 5x more parameters.  
    }
    \label{fig:L-conv-4-datasets}
\end{figure*}

\begin{figure*}[ht]
    \centering
    \includegraphics[trim=0 1cm 0 0, width=\textwidth]{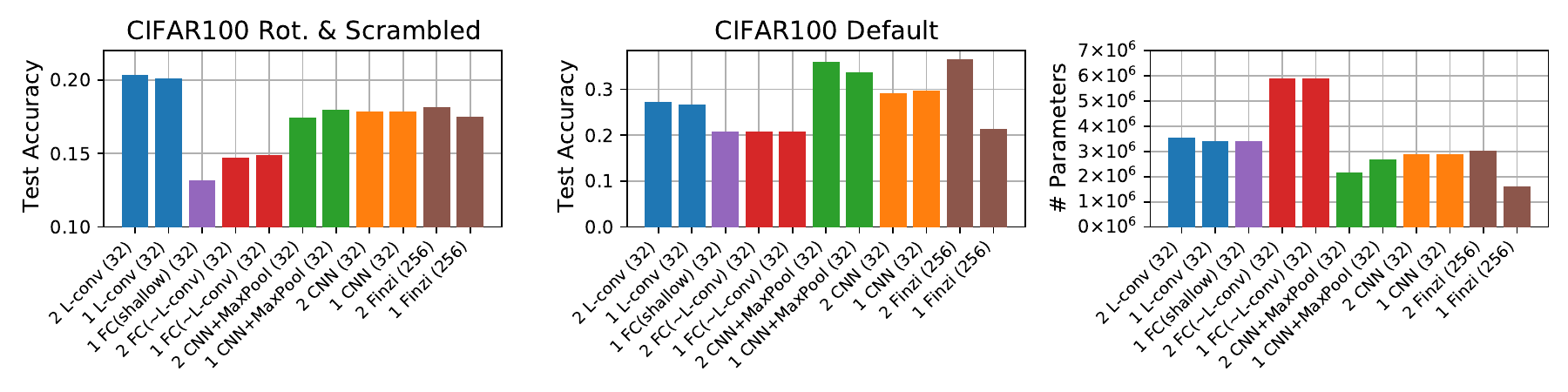}
    \caption{
    Comparison of one and two layer performance of L-conv (blue), CNN without pooling (orange), CNN with Maxpooling after each layer (green), fully connected (FC) with structure similar to L-conv (red), Lie-Conv (Finzi) \cite{finzi2020generalizing} (brown), and shallow FC, which has a single hidden layer with width such that the total number of parameters matches L-conv (purple). 
    The labels indicate number of layers, layer architecture and number of filters
    (e.g. ``2 L-conv (32)'' means two layers of L-conv with 32 filters followed by one classification layer). 
    Left and middle plots show test accuracies on CIFAR100 with rotated and scrambled images, and on the original CIFAR100 dataset, respectively.
    The plot on the right shows the number of parameters in each model, which is the same for the two datasets.
    % L-conv outperforms all other tested models on rotated and scrambled CIFAR100. 
    % On Default CIFAR100, CNN with Maxpooling significantly outperforms regular CNN and L-conv, indicating the importance of proper pooling.
    }
    \label{fig:2layer}
\end{figure*}

\textbf{Test Datasets}
We use four datasets: MNIST, CIFAR10, CIFAR100, and FashionMNIST. 
To test the efficiency of L-conv in dealing with hidden or unfamiliar symmetries, we conducted our tests on two modified versions of each dataset: 1) \textbf{Rotated:} each image rotated by a random angle (no augmentation); 2) \textbf{Rotated and Scrambled:} random rotations are followed by a fixed random permutation (same for all images) of pixels. 
We used a 80-20 training test split on  60,000 MNIST and FashionMNIST, and on 50,000 CIFAR10 and CIFAR100 images. 
Scrambling destroys the correlations existing between values of neighboring pixels, removing the locality of features in images. 
As a result, CNN need to encode more patterns, as each image patch has a different correlation pattern. 

\textbf{Test Model Architectures}
We conduct controlled experiments, with one (Fig. \ref{fig:L-conv-4-datasets}) or two (Fig. \ref{fig:2layer}) hidden layers being either L-conv or a baseline, followed by a classification layer. 
For CNN, L-conv and L-conv with random $L_i$, we used $n_f=m_l=32$ for number of output filters (i.e. output dimension of $W^i$). 
For CNN we used $3\times 3$ kernels and equivalently used $n_L= 9$ for the number of $L_i$  in L-conv and random L-conv. 
We also used ``LieConv'' \cite{finzi2020generalizing} as a baseline (Fig. \ref{fig:2layer}, brown). 
We used the default $k=256$ in LieConv, which yields comparable number of parameters to our other models.
For the symmetry group in LieConv we used $SE(3)$. 
We also used the default ResNet architecture provided by \citet{finzi2020generalizing} for both the one and two layer experiments. 
We turned off batch normalization, consistent with other experiments. 
% For low-rank encoding of $L_i$ we used $k=16$, although lower values like $k=6$ had similar performance. 
% In L-conv, we use low-rank encodings for $L_i$. 
% The L-conv model layers, in Figure \ref{fig:model-architectures}
% have three parts. 
% As described in sec. \ref{sec:learning},
We encode $L_i$ as sparse matrices $L_i = U_iV_i$ with hidden dimension $d_h=16$ in Fig. \ref{fig:L-conv-4-datasets} and $d_h=8$ in Fig. \ref{fig:2layer}, showing that very sparse $L_i$ can perform well.
% $L_i = U_iV_i$ using an encoder $V$ of shape $n_L\times d_h\times d$ encoding $n_L$ matrices $V_i$, with $d$ being the input dimensions and $d_h$ the latent dimension, and a $d\times d_h$ decoder $U_i$. 
The weights $W^i$ are each $m_{l}\times m_{l+1}$ dimensional. 
The output of the L-conv layer is $d\times m_{l+1}$. 
As mentioned above, we use two FC baselines. 
The FC in Fig. \ref{fig:L-conv-4-datasets} and FC($\sim$L-conv) in Fig. \ref{fig:2layer} mimic L-conv, but lacks weight-sharing.
The FC weights are $W = ZV$ with $V$ being $(n_L d_h)\times d$
and $Z$ being $(m_{l+1}\times d)\times d_h$. 
For ``FC (shallow)'' in Fig. \ref{fig:2layer}, we have one wide hidden layer with $u=n_{L-conv}/(m d c) $, where $n_{L-conv}$ is the total number of parameters in the L-conv model, $m$ and $c$ the input and output channels, and $d$ is the input dimension. 
We experimented with encoding $L_i$ as multi-layer perceptrons, but found that
% In practice, we found that 
a single hidden layer with linear activation % and very low dimensions (hidden width 6 on rotated and scrambled MNIST) 
works best. % in our tests.
% \textbf{Two Layers and Pooling}
We also conduct tests with two layers of L-conv, CNN and FC (Fig. \ref{fig:2layer}), with each L-conv, CNN and FC layer as descried above, except that we reduced the hidden dimension in $L_i$ to $d_h=8$. 
% For L-conv and CNN, the first layer has $m_1=32$ filters, and the second layer has $m_2=64$ filters. For CNN we used $3\times 3$ kernels for both layers, and equivalently for L-conv we used $n_L = 9$ for the number of $L_i$.
% For FC we used $1024$ units for each layer.

\textbf{Baselines}
We compare L-conv against four baselines: 
CNN, random $L_i$, fully connected (FC) and LieConv. 
Using CNN or $SE(3)$ LieConv on scrambled images amounts to using poor inductive bias in designing the architecture.
Similarly, random, untrained $L_i$ is like using bad inductive biases. 
Testing on random $L_i$ serves to verify that L-conv's performance is not due to the structure of the architecture, and that the $L_i$ in L-conv really learn patterns in the data. 
Finally, to verify that the higher parameter count in L-conv is not responsible for the high performance, we construct 
two kinds of FC models.
The first type (``Fully Conn.'' in Fig. \ref{fig:L-conv-4-datasets} and ``FC ($\sim$ L-conv)'' in Fig. \ref{fig:2layer}) is a 
multilayer FC network with the same input ($d\times m_0$), hidden ($k\times n_L $ for low-rank $L_i$) and output ($d\times m_1$)  dimensions as L-conv, but lacking the weight-sharing, leading to much larger number parameters than L-conv.
The second type (``FC (shallow)'' in Fig. \ref{fig:2layer}) consists of a single hidden layer with a width such that the total number of model parameters match L-conv.

\textbf{Results}
% The results of our test are shown in Fig. \ref{fig:L-conv-4-datasets} 
% \edits{ and \ref{fig:2layer}.}
Fig. \ref{fig:L-conv-4-datasets} shows the results for single layer experiments. 
% As we see, 
On all four datasets both in the rotated and the rotated and scrambled case L-conv performed considerably better than CNN and the baselines. 
% We are also showing the total number of trainable parameters in L-conv and other model next to the accuracy plot using the same colors. 
Compared to CNN, 
L-conv naturally requires extra parameters to encode $L_i$, but low-rank encoding with rank $d_h\ll d$ only requires $O(d_h d)$ parameters, which can be negligible compared to FC layers. 
We observe that FC layers consistently perform worse than L-conv, despite having much more parameters than L-conv.
We also find that not training the $L_i$ (``Rand L-conv'') leads to significant performance drop. 
We ran tests on the unmodified images as well (Supp. Fig \ref{fig:L-conv-4-datasets-full}), where CNN performed best, but L-conv trails closely behind CNN. 

% The results of the comparison between single and double layers on MNIST are shown in Fig.~\ref{fig:MNIST_single_double_layer_lconv_cnn_fc}. 
Additional experiments testing the effect of number of layers, number of parameters and pooling are shown in Fig. \ref{fig:2layer}.
On CIFAR100, we find that both FC configurations, FC($\sim$L-conv) and FC(shallow) consistently perform worse than L-conv, evidence that L-conv's performance is \textit{not} due to its extra parameters.
% This is evidence that the improved performance of L-conv on a domain with hidden and complex symmetries such as rotated scrambled CIFAR100.
L-conv outperforms all other tested models on rotated and scrambled CIFAR100, including LieConv. 
Without pooling, we observe that both L-conv and CNN do not benefit from adding a second layer. 
% This can be explained by  \thref{prop:multi-L-conv}, which states that multi-layer L-conv is still encoding the same symmetry group $G$, only covering a larger portion of $G$. 
% Our hypothesis is that lack of pooling is the reason behind this, as we discuss next.
On the default CIFAR100 dataset, one and two layer CNN with max-pooling perform significantly better than L-Conv. 
Two Layer $SE(3)$ LieConv (labelled ``2 Finzi (256)'') performs best on default CIFAR100, but not on the scrambled and rotated version. 
This is expected, as the symmetries of the latter are masked by the scrambling. 
% Also, note that the two layer LieConv is a ResNet with 27 trainable modules, whereas the two layer L-conv has only seven (an encoder and decoder for $L_i$ and shared weights  
This is where the benefit of our model becomes evident, namely cases where the data may have hidden or unfamiliar symmetries. 
We also verified that the higher performance of L-conv compared to CNN is not due to higher number number of parameters (Appendix \ref{ap:experiments-old})
\out{
\textbf{Pooling on L-conv} 
On Default CIFAR100, CNN with Maxpooling significantly outperforms regular CNN and L-conv, indicating the importance of proper pooling.
Interestingly, on rotated and scrambled CIFAR100, we find that max-pooling does not yield any improvement.
We believe the role of pooling is much more fundamental than simple dimensionality reduction.
On images, pooling with strides blurs our low-level features, allowing the next layer to encode symmetries at a larger scale. 
% Pooling operations, such maxpooling, generally yield a significant performance boost in CNN.  
\cite{cohen2016group} showed a relation between pooling and coset of subgroups and that strides are subsampling the group to a subgroup $H\subset G$, resulting in outputs which are equivariant only under $H$ and not the full $G$.
These subgroups appearing at different scales in the data may be quite different. 
However, a naive implementation of pooling on L-conv may involve three $L_i$ and be quite expensive (see Appendix \ref{ap:pool}). 
Devising an efficient and mathematically sound pooling algorithm for L-conv is a future step we are working on. 

% Lastly, unlike an actual Lie algebra basis, we did not include regularizers enforcing orthogonality among $L_i$. 
% As we see, for the rotated and rotated and scrambled MNIST, single-layer L-conv performs better than both CNN and FC, while double-layer CNN performs better than L-conv on the rotated MNIST but not the rotated and scrambled MNIST dataset.
% We can see that adding an extra layer improves the performance of L-conv and CNN, but decreases the accuracy of FC.

}%%%%

\out{
\begin{table}[]
\begin{tabular}{|l|l|}
\hline
L-conv 1 layer  & $n_L = 9$, $f=32$                     \\ \hline
L-conv 2 layers & $n_L = 9$, $f=32$; $n_L=9$, $f=64 $            \\ \hline
CNN 1 layer     & $k=(3,3)$, $f=32$              \\ \hline
CNN 2 layers    & $k=(3,3)$, $f=32$; $k=(3,3)$, $f=64$ \\ \hline
FC 1 layer      & $n = 1024$                         \\ \hline
FC 2 layers     & $n = 1024$; $n = 1024$                   \\ \hline
\end{tabular}
\caption{Parameters used in Fig.~\ref{fig:MNIST_single_double_layer_lconv_cnn_fc}.}
\end{table}
}

\out{
\textbf{Hardware and Implementation}
We implemented L-conv in Keras and Tensorflow 2.2 and ran our tests on a system with a 6 core Intel Core i7 CPU, 32GB RAM, and NVIDIA Quadro P6000 (24GB RAM) GPU. 
The L-conv layer did not require significantly more resources than CNN and ran only slightly slower. 
}%%%

\out{

\paragraph{Pooling on L-conv} 
Pooling operations, such maxpooling, generally yield a significant performance boost in CNN.  
\cite{cohen2016group} showed a relation between pooling and coset of subgroups. 
They also state that the strides in CNN pooling are like subsampling the group to a subgroup $H\subset G$, resulting in outputs which are equivariant only under $H$ and not the full $G$. 

Lastly, unlike an actual Lie algebra basis, we did not include regularizers enforcing orthogonality among $L_i$. 
}%%%
% \subsection{Pooling}
% \input{secs/pooling}

% \subsection{Pretrained Generators}
% Test linear classifier performance with/without pretraining

\out{
\begin{figure}
    \centering
    \includegraphics[width=.5\textwidth]{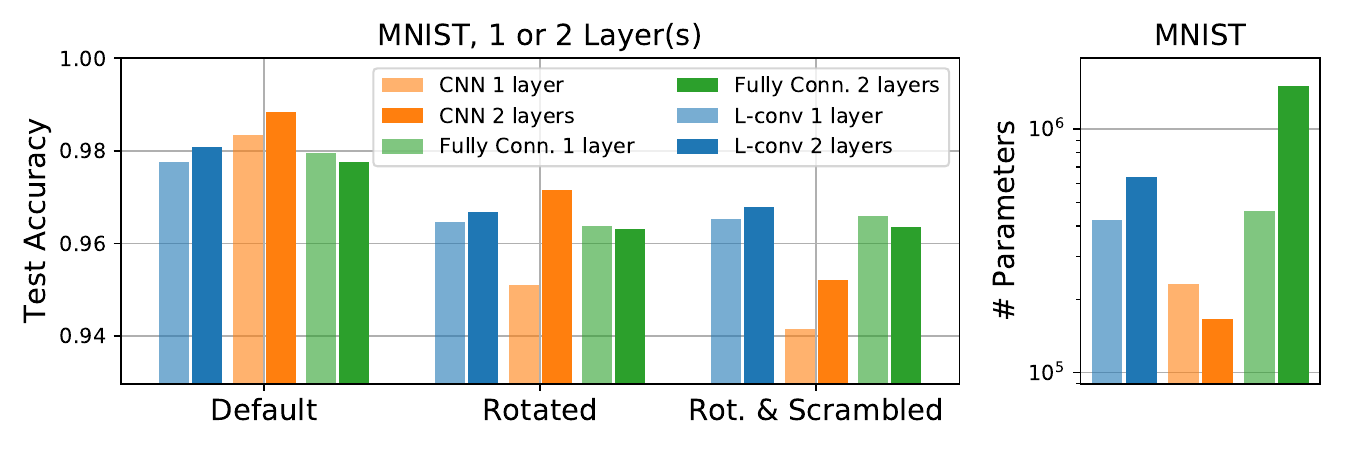}
    {\footnotesize 
    \begin{tabular}{l|l}
    {\bf model; layers} & {\bf parameters} \\
    \hline
    L-conv; 1  & $n_L = 9$, $n_f=32$                     \\ %\hline
    L-conv; 2 & $n_L = 9$, $n_f=32$; $n_L=9$, $n_f=64 $            \\ %\hline
    CNN; 1     & $k=(3,3)$, $n_f=32$              \\ %\hline
    CNN; 2    & $k=(3,3)$, $n_f=32$; $k=(3,3)$, $n_f=64$ \\ %\hline
    FC; 1      & $n = 1024$                         \\ %\hline
    FC; 2     & $n = 1024$; $n = 1024$                   \\ %\hline
    \end{tabular}
    }%%%
    \caption{Test results on MNIST with three variants: ``Default'', ``Rotated'' and ``Rotated and scrambled''.
    The architectures included in this figure are single layer and two layers L-conv, CNN, and fully-connected dense layers. 
    }
    \label{fig:MNIST_single_double_layer_lconv_cnn_fc}
\end{figure}
}

\subsection{Details of experiments}

\paragraph{Hardware and Implementation}
We implemented L-conv in Keras and Tensorflow 2.2 and ran our tests on a system with a 6 core Intel Core i7 CPU, 32GB RAM, and NVIDIA Quadro P6000 (24GB RAM) GPU. 
The L-conv layer did not require significantly more resources than CNN and ran only slightly slower.

\out{
\subsubsection{Comparison with related models \label{ap:compare} }

\paragraph{Comparison with Meta-learning Symmetries by Reparameterization (MSR)}
Recently \citet{zhou2020meta} also introduced an architecture which can learn equivariances from data. 
We would like to highlight the differences between their approach and ours, specifically Proposition 1 in \citet{zhou2020meta}.  
Assuming a discrete group $G=\{g_1,\dots, g_n\}$, they decompose the weights $W\in \R^{s\times s}$ of a fully-connected layer, acting on $\vx \in \R^s$ as $\mathrm{vec}(W) = U^Gv$ where $U^G\in \R^{s\times s}$ are the ``symmetry matrices'' and $v\in \R^s$ are the ``filter weights''. 
Then they use meta-learning to learn $U^G$ and during the main training keep $U^G$ fixed and only learn $v$.
We may compare MSR to our approach by setting $d=s$. 
First, note that although the dimensionality of $U\in \R^{nd\times d}$ seems similar to our $L \in \R^{n\times d\times d}$, the $L_i$ are $n$ matrices of shape $d\times d$, whereas $U$ has shape $(nd) \times d$ with many more parameters than $L$. 
Also, the weights of L-conv $W\in \R^{n\times m_l \times m_{l-1}}$, with $m_l$ being the number of channels, are generally much fewer than MSR filters $v\in \R^d$. 
Finally, the way in which $Uv$ acts on data is different from L-conv, as the dimensions reveal. 
The prohibitively high dimensionality of $U$ requires MSR to adopt a sparse-coding scheme, mainly Kronecker decomposition.
Though not necessary, we too choose to use a sparse format for $L_i$, finding that very low-rank $L_i$ often perform best. 
A Kronecker decomposition may bias the structure of $U^G$ as it introduces a block structure into it.

\paragraph{Contrast with Augerino}
In a concurrent work, \citep{benton2020learning} propose Augerino, a method to learn invariances with neural networks. 
Augerino uses data augmentation to transform the input data, which means it is restricting the group to be a subgroup of the augmentation transformations. 
% Augerino learns which subset of the augmentations improved the prediction. 
% This is done by writing 
The data augmentation is written as $g_\eps = \exp\pa{\sum_i \eps_i \theta_i L_i} $ (equation (9) in \cite{benton2020learning}), with randomly sampled $\eps_i\in [-1,1]$. 
$\theta_i$ are trainable weights which determine which $L_i$ helped with the learning task. 
However, in Augerino,  $L_i$ are fixed \textit{a priori} to be the six generators of affine transformations in 2D (translations, rotations, scaling and shearing). 
In contrast,  our approach is more general.  
We learn the generators $L_i$ directly without restricting them to be a known set of generators.  
Additionally, we do not use the exponential map, hence, implementing L-conv is very straightforward.  
Lastly, Augerino uses sampling to effectively cover the space of group transformations. 
Since the sum over Lie algebra generators is tractable, we do not need to use sampling.
% The main differences with our approach 
% , which mean they are assuming the group to be a subgroup of their augmentation transformations. 
% Specifically, they assume the group to be affine transformations and learn which sub-group of it is used in the dataset. 
% In eq. 9, the $G_i$ (equivalent to our $L_i$) are the generators of affine transformations and the state below eq. 9 the $G_1,...G_6$ are predefined translations, rotations, scaling and shearing.  
% In contrast, we are learning $L_i$ and do not restrict them to a known set of generators. 
}%%%

\subsection{Additional Experiments \label{ap:experiments-old}}

\textbf{Matching number of parameters in CNN}
To verify that the difference in the number of parameters between CNN and L-conv was not responsible for the improved performance, we ran experiment where we allowed the kernel-size of L-conv and CNN to differ and tried to match the number of parameters between the two. 
Fig. \ref{fig:L-conv-CNN-match} shows that on rotated and scrambled MNIST L-conv still performs better than CNN even after the latter has been allowed to have the same or more number of parameters than L-conv.

\begin{figure}
    \centering
    \includegraphics[width = \columnwidth]{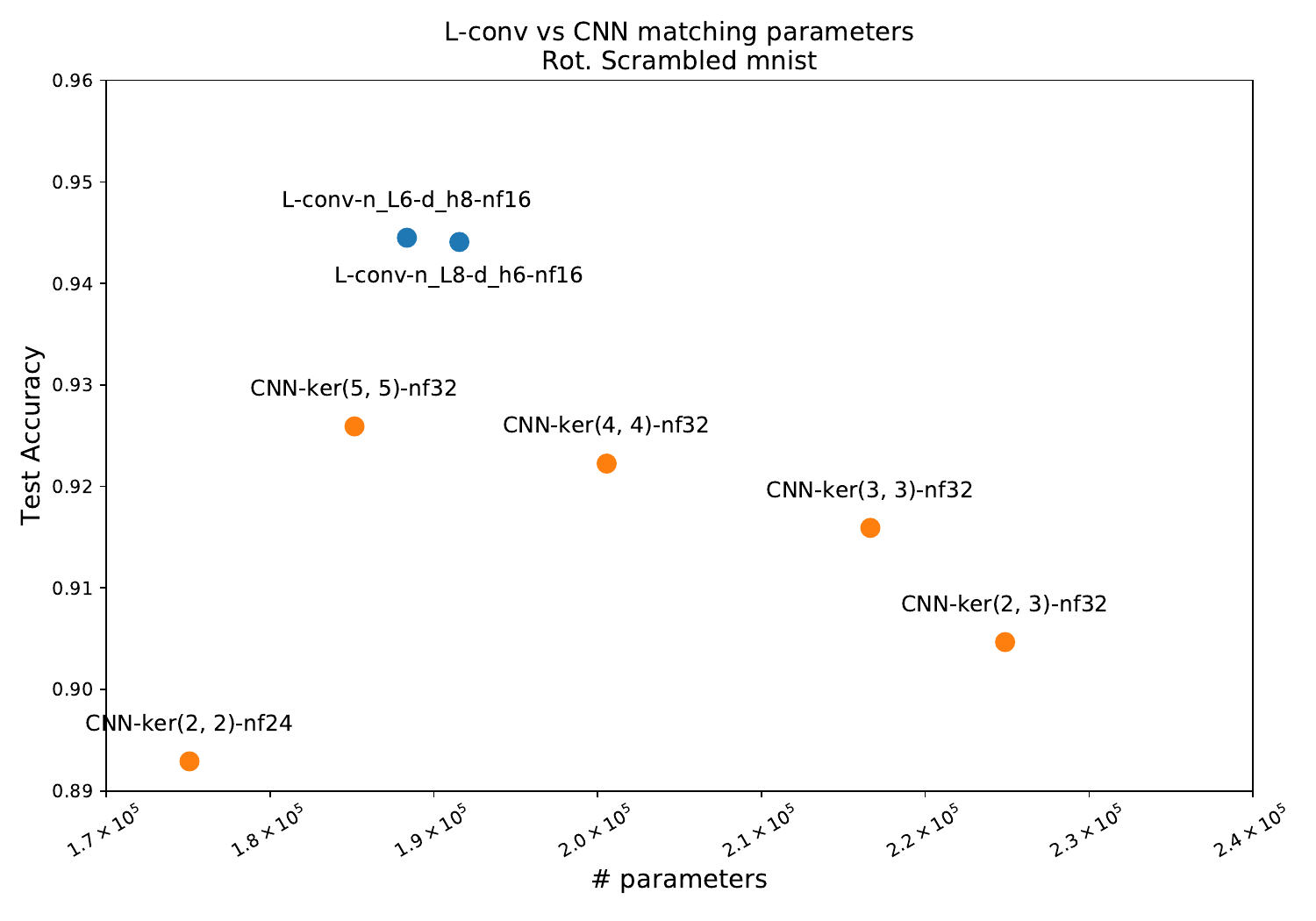}
    \caption{Matching number of parameters in CNN and L-conv, we observe that L-conv still performs better on Rotated and Scrambled MNIST.}
    \label{fig:L-conv-CNN-match}
\end{figure}

\begin{figure}%[htbp]
    \centering
    \includegraphics[width=.49\textwidth]{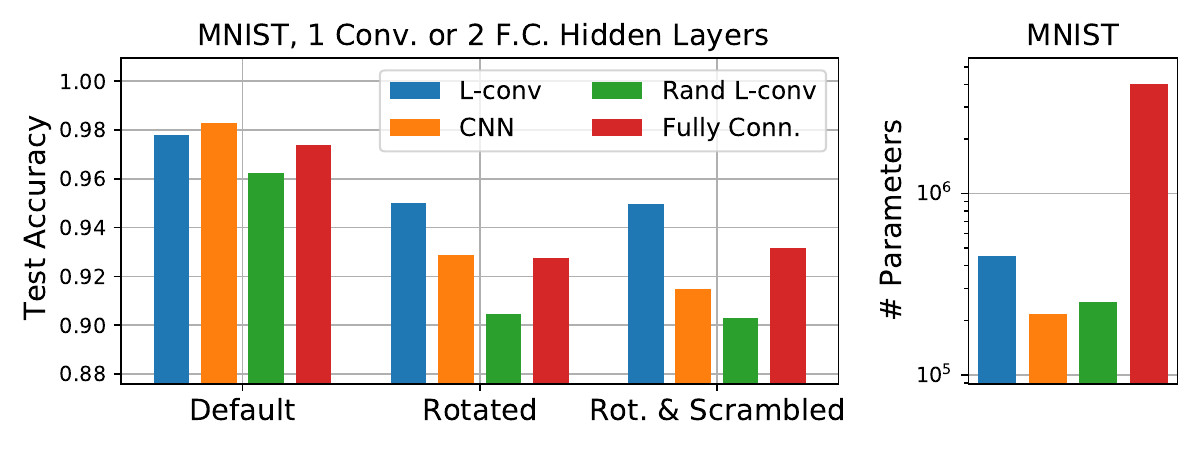}
    \includegraphics[width=.49\textwidth]{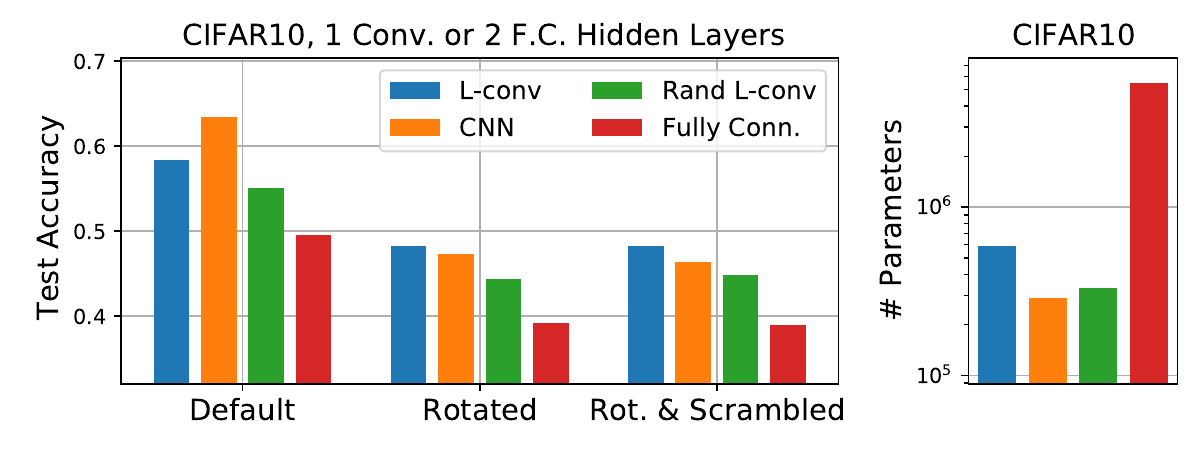}
    \includegraphics[width=.49\textwidth]{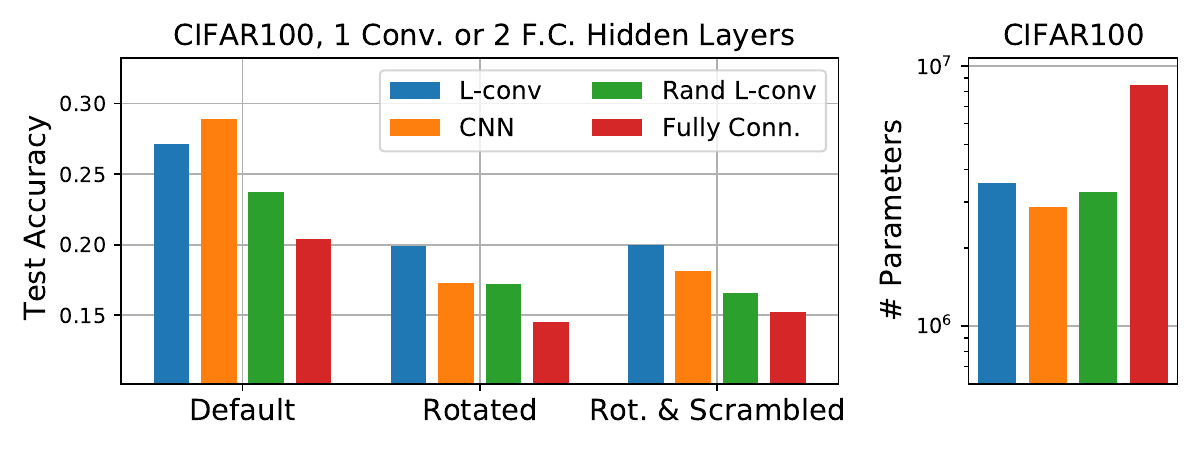}
    \includegraphics[width=.49\textwidth]{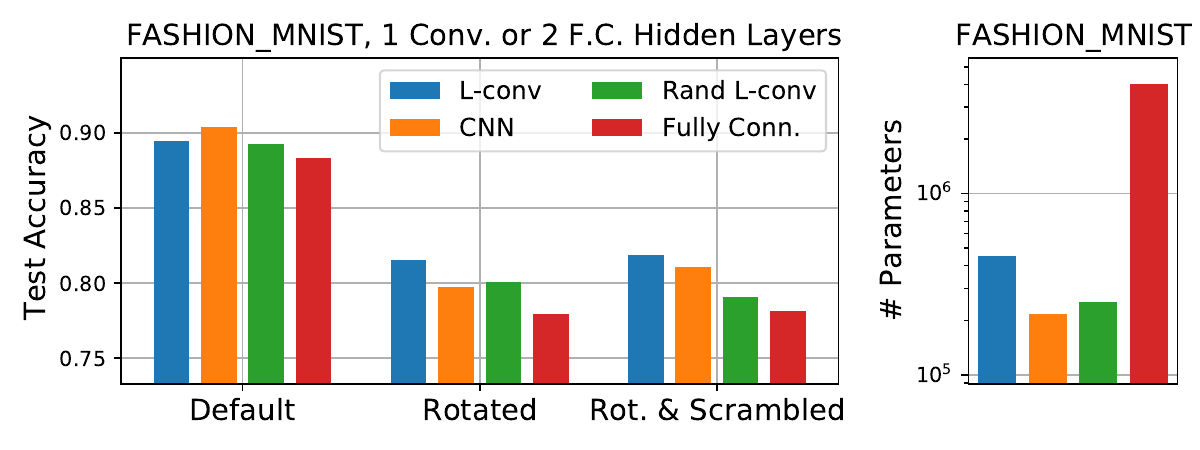}
    \caption{Test results on four datasets with three variant: ``Default'' (unmodified dataset), ``Rotated'' and ``Rotated and scrambled''. 
    On the Default dataset, CNN performs best, but L-conv is always the second best. 
    For Rotated and Rot. \& Scrambled,
    in all cases L-conv performed best. 
    In MNIST, FC and CNN layers come close, but using 5x more parameters.  
    }
    \label{fig:L-conv-4-datasets-full}
\end{figure}

\begin{figure}%[htbp]
    \centering
    \includegraphics[width=.8\linewidth]{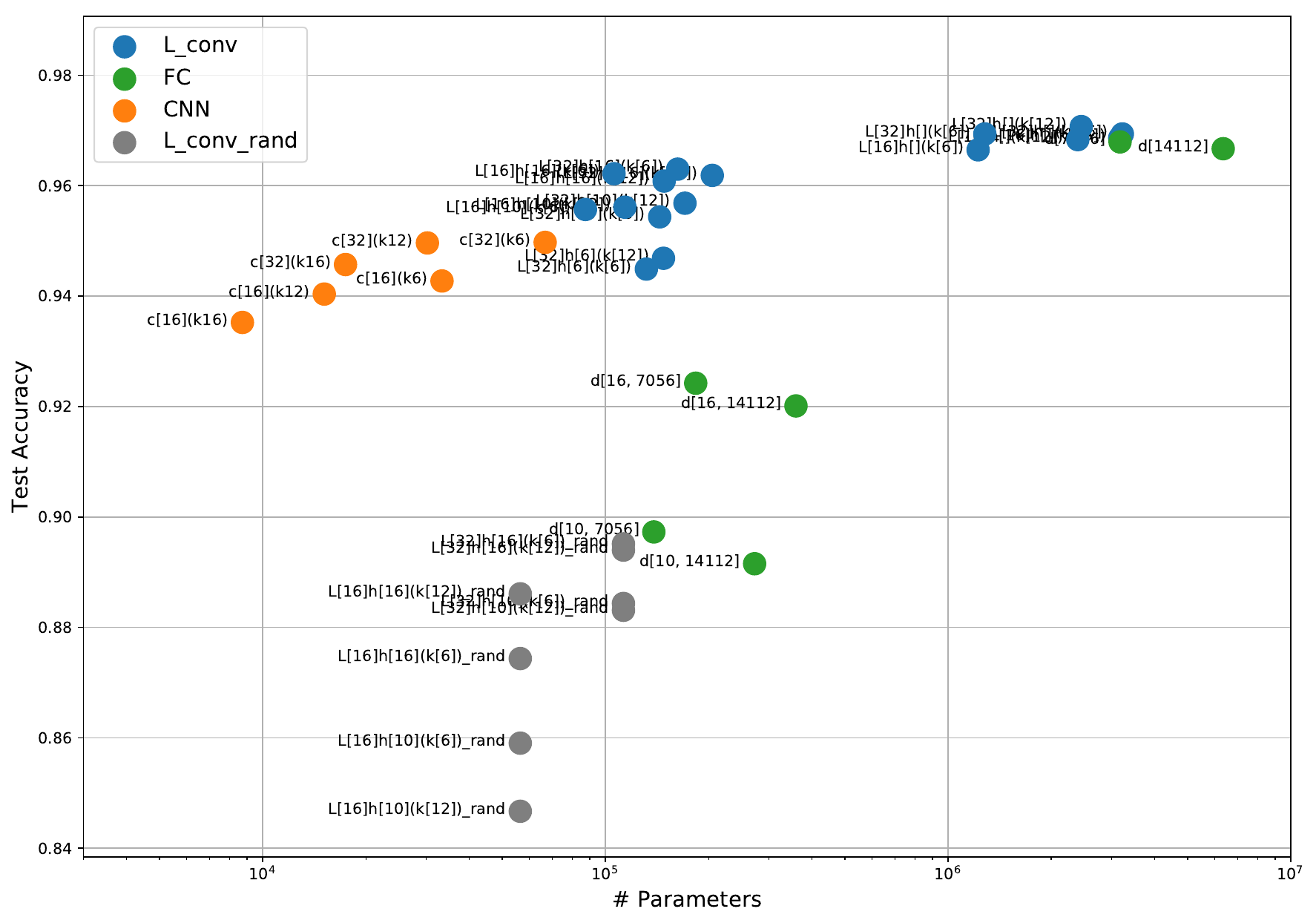}
    \caption{Training low-rank L-conv layer during training. 
    % Here we compare the performance of a single layer of L-conv on a classification task on scrambled rotated MNIST, where pixels have been permuted randomly and images have been rotated between $-90$ to $+90$ degrees. 
    % The models consisted of a final classification layer preceded by either one L-conv (blue), or one CNN (orange), or multiple fully-connected (FC, green) layers with similar number of neurons as the L-conv, but without weight sharing.
    % We see that most L-conv configurations had the highest performance without a too many trainable parameters. 
    % Note that, parameters in FC layers are much higher than comparable L-conv, but yield worse results.  
    % The dots are labeled to show the configurations, with $L[32]h[6](k[6])$ meaning $k=6$ as number of $L_i$, 32 output filters, and $h=6$ hidden dimensions for low-rank encoding of $L_i$. 
    % The y-axis shows the test accuracy and the x-axis the number of trainable parameters. The grey lines show the performance of L-conv with fixed random $L_i$, but trainable shared wights, showing that indeed the learned $L_i$ improve the performance quite significantly. 
    }
    \label{fig:L-conv-scramb-MNIST}
\end{figure}

In Figure \ref{fig:L-conv-scramb-MNIST} we compare the performance of a single layer of L-conv on a classification task on scrambled rotated MNIST, where pixels have been permuted randomly and images have been rotated between $-90$ to $+90$ degrees. 
The models consisted of a final classification layer preceded by either one L-conv (blue), or one CNN (orange), or multiple fully-connected (FC, green) layers with similar number of neurons as the L-conv, but without weight sharing.
We see that most L-conv configurations had the highest performance without a too many trainable parameters. 
Note that, parameters in FC layers are much higher than comparable L-conv, but yield worse results.  
The dots are labeled to show the configurations, with $L[32]h[6](k[6])$ meaning $k=6$ as number of $L_i$, 32 output filters, and $h=6$ hidden dimensions for low-rank encoding of $L_i$. 
The y-axis shows the test accuracy and the x-axis the number of trainable parameters. The grey lines show the performance of L-conv with fixed random $L_i$, but trainable shared wights, showing that indeed the learned $L_i$ improve the performance quite significantly.

\out{
% \section{Structure of Learned $L_i$}

\begin{figure}%[h]
    \centering
    \includegraphics[width=\columnwidth]{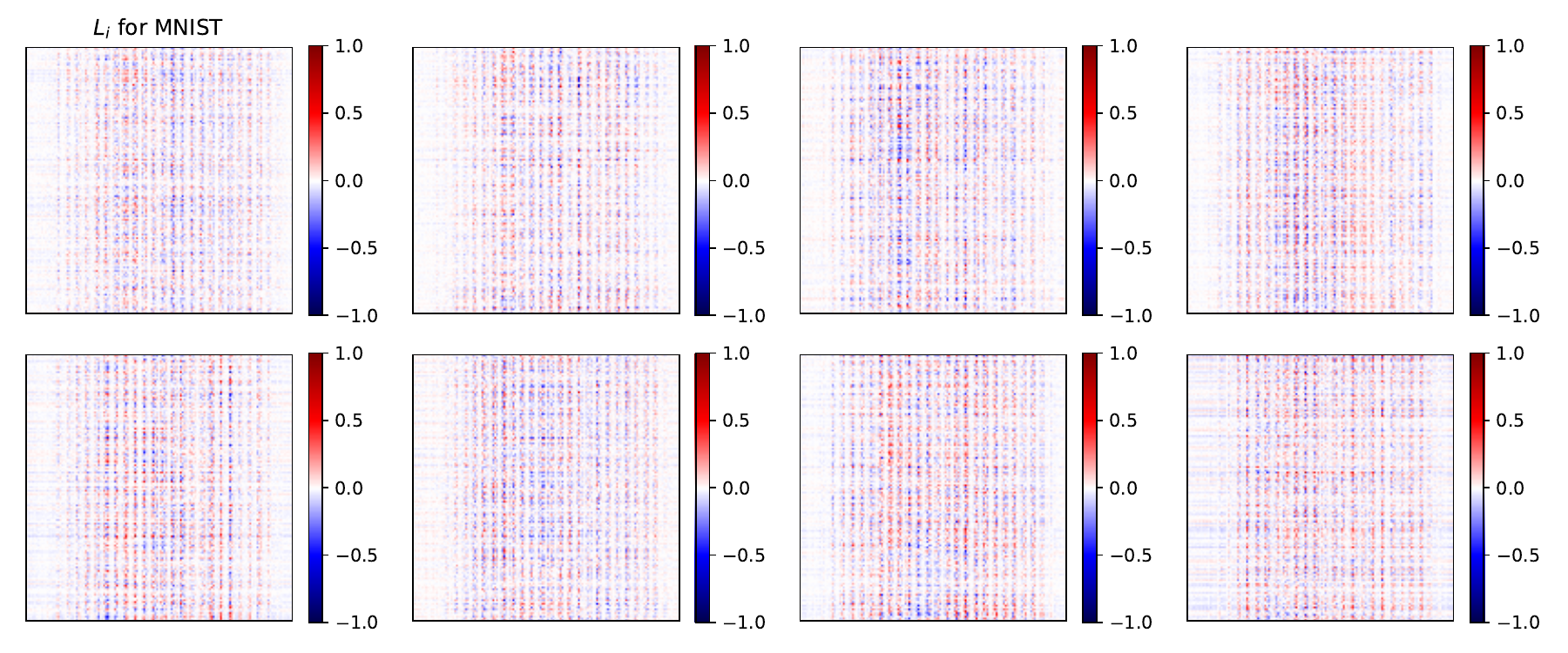}
    \includegraphics[width=.7\columnwidth]{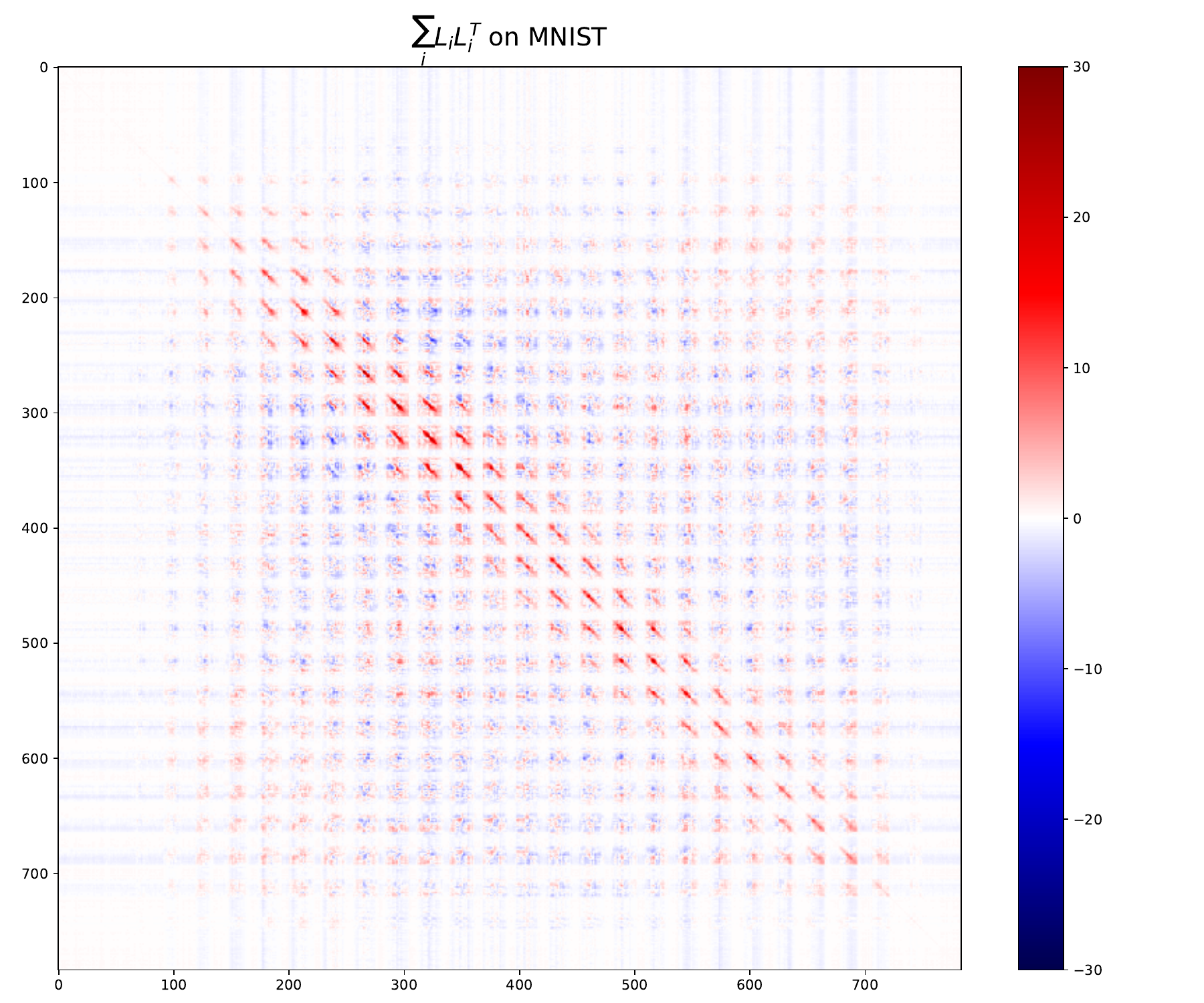}
    \caption{Visualization of the $L_i$ found on MNIST and their covariance $LL^T=\sum_i L_iL_i^T$. }
    \label{fig:L_i-LL}
\end{figure}

\begin{figure}%[h]
    \centering
    \includegraphics[width=\columnwidth]{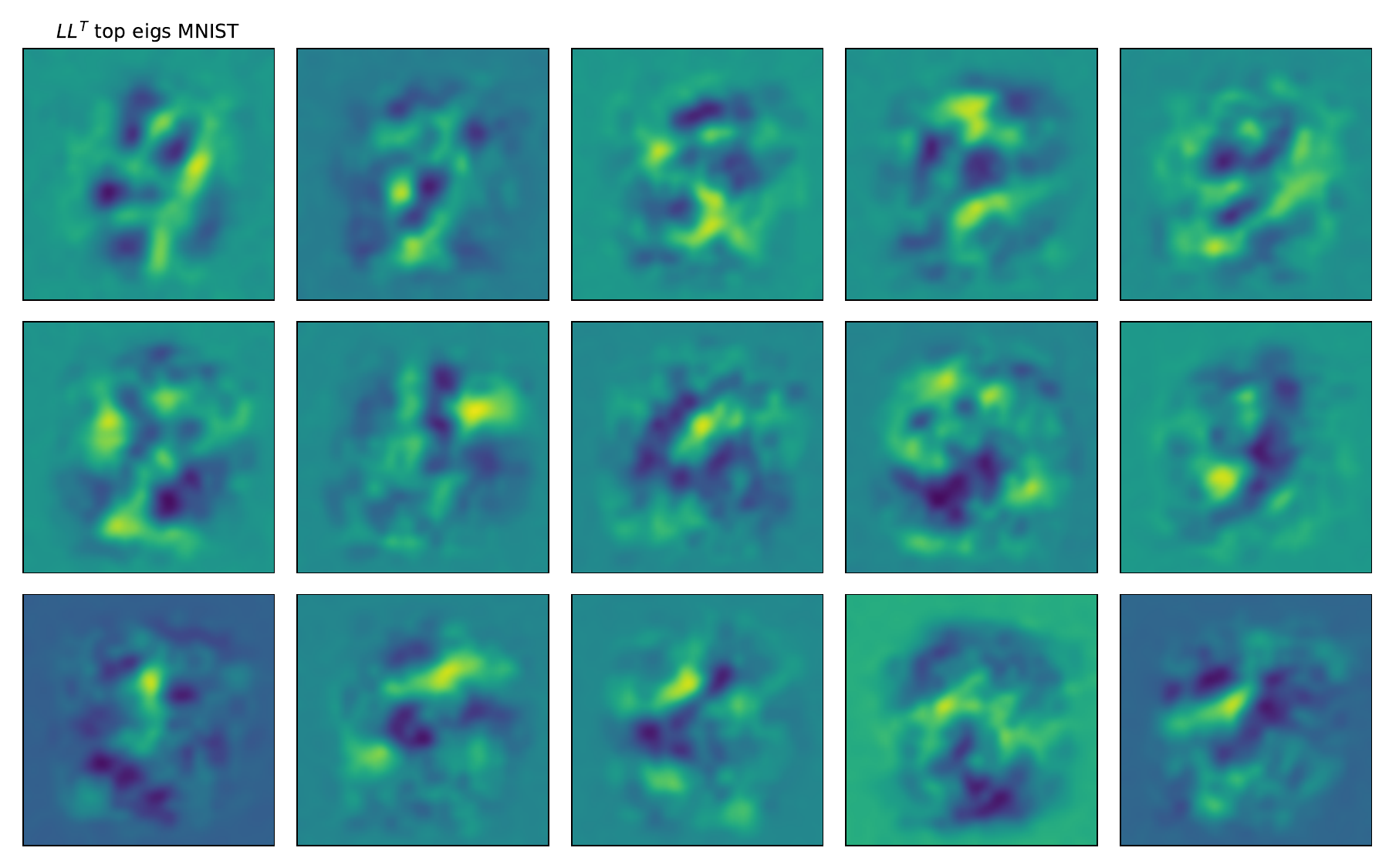}
    \caption{Visualization of the top eigenvectors of $\sum_i L_iL_i^T$.
    They show some resemblance to the eigenvectors of the covariance matrix $H=XX^T$. }
    \label{fig:LL-eigs}
\end{figure}

\newpage

}%%%%

% \newpage
% \input{secs2/universal}

% \input{secs2/scrap}

\end{document}